%%%% ijcai23.tex

\typeout{IJCAI--23 Instructions for Authors}

% These are the instructions for authors for IJCAI-22.

\documentclass{article}
\pdfpagewidth=8.5in
\pdfpageheight=11in
% The file ijcai23.sty is a copy from ijcai22.sty
% The file ijcai22.sty is NOT the same as previous years'
\usepackage{ijcai23}

% Use the postscript times font!
\usepackage{times}
\usepackage{soul}
\usepackage{url}
\usepackage[hidelinks]{hyperref}
\usepackage[utf8]{inputenc}
\usepackage[small]{caption}
\usepackage{graphicx}
\usepackage{amsmath}
\usepackage{stmaryrd}
\usepackage{amsthm}
\usepackage{amssymb}
\usepackage{bbm}
\usepackage{xspace}
\usepackage{thm-restate}
\usepackage{booktabs}
\usepackage{siunitx}
\usepackage{algorithm}
\usepackage{algorithmic}
\usepackage{microtype}
\usepackage{xcolor}
\usepackage{enumitem}
\usepackage{pgf}
\usepackage[switch]{lineno}

\usepackage{misc2}

% \linenumbers

\urlstyle{same}

% the following package is optional:
%\usepackage{latexsym}

% See https://www.overleaf.com/learn/latex/theorems_and_proofs
% for a nice explanation of how to define new theorems, but keep
% in mind that the amsthm package is already included in this
% template and that you must *not* alter the styling.
\newtheorem{example}{Example}
\newtheorem{theorem}{Theorem}
\newtheorem{definition}{Definition}
\newtheorem{lemma}{Lemma}
\newtheorem{prop}{Proposition}

% Following comment is from ijcai97-submit.tex:
% The preparation of these files was supported by Schlumberger Palo Alto
% Research, AT\&T Bell Laboratories, and Morgan Kaufmann Publishers.
% Shirley Jowell, of Morgan Kaufmann Publishers, and Peter F.
% Patel-Schneider, of AT\&T Bell Laboratories collaborated on their
% preparation.

% These instructions can be modified and used in other conferences as long
% as credit to the authors and supporting agencies is retained, this notice
% is not changed, and further modification or reuse is not restricted.
% Neither Shirley Jowell nor Peter F. Patel-Schneider can be listed as
% contacts for providing assistance without their prior permission.

% To use for other conferences, change references to files and the
% conference appropriate and use other authors, contacts, publishers, and
% organizations.
% Also change the deadline and address for returning papers and the length and
% page charge instructions.
% Put where the files are available in the appropriate places.

% PDF Info Is REQUIRED.
% Please **do not** include Title and Author information
\pdfinfo{
/TemplateVersion (IJCAI.2023.0)
}

\title{SAT-Based PAC Learning of Description Logic Concepts}

% Single author syntax
% \author{
%     Author Name
%     \affiliations
%     Affiliation
%     \emails
%     pcchair@ijcai-22.org
% }

% Multiple author syntax (remove the single-author syntax above and the \iffalse ... \fi here)
\author{
 Balder ten Cate$^1$
 \and
 Maurice Funk$^{2, 3}$\and
 Jean Christoph Jung$^{4}$\And
 Carsten Lutz$^{2, 3}$
 \affiliations
 $^1$ILLC, University of Amsterdam\\
 $^2$Leipzig University\\
 $^3$Center for Scalable Data Analytics and Artificial Intelligence (ScaDS.AI)\\
 $^4$TU Dortmund University
 \emails
 b.d.tencate@uva.nl,
 maurice.funk@uni-leipzig.de,
 jean.jung@tu-dortmund.de,
carsten.lutz@uni-leipzig.de
}

\newcommand{\systemname}{SPELL\xspace}

\begin{document}

\maketitle

\begin{abstract}
  % We propose \emph{bounded fitting} as a scheme for learning
  % description logic concepts in the presence of ontologies, with
  % two main advantages. First, the resulting learning algorithms come
  % with theoretical guarantees regarding their generalization to unseen
  % examples in the sense of PAC learning. 
  % Second, for the description
  % logic \ELHdr bounded fitting can be implemented based on a SAT
  % solver which provides a pathway to efficient implementation.  
  % We also consider several other learning/fitting approaches and prove
  % that they fail to provide generalization guarantees. 
  % We implemented bounded fitting for \ELHdr 
  % in a system called SPELL and compare its
  % performance to a state-of-the-art learner.
%
  % \bigskip 
  We propose \emph{bounded fitting} as a scheme for learning
  description logic concepts in the presence of ontologies. A main
  advantage is that the resulting learning algorithms come with
  theoretical guarantees regarding their generalization to unseen
  examples in the sense of PAC learning. We prove that, in contrast,
  several other natural learning algorithms fail to provide such
  guarantees. As a further contribution, we present the system SPELL
  which efficiently implements bounded fitting for the description
  logic \ELHdr based on a SAT solver, and compare its performance to a
  state-of-the-art learner.
\end{abstract}

\section{Introduction}

In knowledge representation, the manual curation of knowledge bases
(KBs) is time consuming and expensive, making learning-based
approaches to knowledge acquisition an attractive alternative. We are
interested in description logics (DLs) where \emph{concepts} are an
important class of expressions, used for querying KBs and also as
central building blocks for ontologies. The subject of learning DL
concepts from labeled data examples has received great interest,
resulting in various implemented systems such as DL-Learner, DL-Foil,
and
YINYANG~\cite{DBLP:journals/ws/BuhmannLW16,DBLP:conf/ekaw/Fanizzi0dE18,DBLP:journals/apin/IannonePF07}.
These systems %learn concepts from examples in that they
take a set of positively and negatively labeled examples and an
ontology \Omc, and try to construct a concept that fits the examples
w.r.t.\ \Omc. The related \emph{fitting problem}, which asks to decide
the existence of a fitting concept, has also been studied
intensely~\cite{DBLP:journals/ml/LehmannH10,DBLP:conf/ijcai/FunkJLPW19,DBLP:conf/kr/JungLPW21}.

The purpose of this paper is to propose a new approach to concept
learning in DLs that we call \emph{bounded fitting}, inspired by both
bounded model checking as known from systems verification
\cite{DBLP:conf/tacas/BiereCCZ99} and by Occam algorithms from
computational learning theory
\cite{DBLP:journals/jacm/BlumerEHW89}. 
% The name derives from the
% \emph{fitting problem} which means to decide, given a set of
% positively and negatively labeled examples and an ontology \Omc,
% whether there exists a concept that separates the examples given
% \Omc. This problem has been studied intensely in the DL context, see
% \cite{DBLP:conf/ijcai/FunkJLPW19,}. 
The idea of bounded fitting is to search for a
fitting concept of bounded size, iteratively increasing the size bound
until a fitting is found. This approach has two main advantages, which
we discuss in the following.

First, it comes with formal guarantees regarding the generalization of
the returned concept from the training data to previously unseen
data. This is formalized by Valiant's framework of \emph{probably
  approximately correct (PAC)
  learning}~\cite{DBLP:journals/cacm/Valiant84}. Given sufficiently
many data examples sampled from an unknown distribution, bounded
fitting returns a concept that with high probability $\delta$ has a
classification error bounded by some small
$\epsilon$. % {\color{blue} This kind of generalization is needed in
  % applications when you do not have access to all data but only to
  % some sample. It is then desirable to generalize already for
  % preferably small sample sizes.}
It is well-known that PAC learning is intimately linked to Occam
algorithms which guarantee to find a hypothesis of small
size~\cite{DBLP:journals/jacm/BlumerEHW89,DBLP:journals/tcs/BoardP92}.
By design, algorithms following the bounded fitting paradigm are
Occam, and as a consequence the number of examples needed for
generalization depends only linearly on $1/\delta$, $1/\epsilon$, and
the size of the target concept to be learned. This generalization
guarantee holds independently of the DL used to formulate concepts and
ontologies.  In contrast, no formal generalization guarantees have
been established for DL concept learning approaches.

% First, a central aspect of learning is that the learned artefact
% should generalize from the training data to previously unseen
% examples. This may be formalized as PAC learning, that is, with high
% probability $\delta$ the artefact should be approximately correct on
% unseen examples, meaning that the classification error is bounded by
% some constant $\epsilon$. It is well-known that PAC learning is
% intimately linked to Occam algorithms which guarantee to find a
% fitting of small size or, closely related, of have an effective
% hpothesis space of small VC-dimension \cite{stuff}. By design,
% algorithms following the bounded fitting paradigm are Occam algorithms
% and come with formal generalization guarantees in the sense of PAC
% learning: the amount of examples needed for generalization depends
% only linearly on the size of the target query to be learned, on
% $\delta$, and on $\epsilon$. In contrast, we are not aware that formal
% generalization guarantees have been established for any other DL
% concept learning approach.

The second advantage is that, in important cases, bounded fitting
enables learning based on SAT solvers and thus leverages the practical
efficiency of these systems. We consider ontologies formulated in the
description logic \ELHdr and concepts formulated in \EL, which may be
viewed as a core of the ontology language OWL 2 EL. In this case, the
\emph{size-restricted fitting problem}, which is defined like the fitting
problem except that the maximum size of fitting concepts to be
considered is given as an additional input (in unary), is
\NPclass-complete; it is thus natural to implement bounded fitting using a SAT
solver. For comparison, we mention that the unbounded
fitting problem is \ExpTime-complete in this case~\cite{DBLP:conf/ijcai/FunkJLPW19}.

% The main contributions of this paper are as follows. 
% After observing
% that bounded fitting provides formal generalization guarantees,
% independently of the DL used to formulate concepts and ontologies, 
As a further contribution of the paper, we analyze the generalization
ability of other relevant approaches to constructing fitting
\EL-concepts. We start with algorithms that return fittings that are
`prominent' from a logical perspective in that they are most specific
or most general or of minimum quantifier depth among all fittings.
Algorithms with such characteristics and their applications are
discussed
in~\cite{pods2023}. Notably,
constructing fittings via direct products of positive examples yields
most specific fittings~\cite{DBLP:conf/ijcai/ZarriessT13,DBLP:conf/aaai/JungLW20}. Our result is that, even without ontologies,
these types of algorithms
%do not generalize well to unseen examples in that they are
are not \emph{sample-efficient}, that is, no polynomial amount of
positive and negative examples is sufficient to achieve
generalization in the PAC sense.

We next turn to algorithms based on so-called downward refinement
operators which underlie all implemented DL learning systems that we
are aware of. We consider two natural such operators that are rather
similar to one another and combine them
% , as a natural choice, 
with a
breadth-first search strategy. The first operator can be described as
exploring `most-general specializations' of the current hypotheses and
the second one does the same, but is made `artificially Occam' (with,
most likely, a negative impact on practicality). We prove that while
the first operator does not lead to a not sample-efficient algorithm (even without ontologies),
the second one does. % We also note that the refinement operator for
% \EL-concept provided in~\cite{DBLP:conf/ilp/LehmannH09} is not
% sample-efficient.
This leaves open whether or not implemented systems
based on refinement operators admit generalization guarantees, as they
implement complex heuristics and optimizations.

As our final contribution we present SPELL, a SAT-based system that
implements bounded fitting of \EL-concepts under \ELHdr-ontologies. 
% The system will be made freely available for
% public use.
We evaluate SPELL on several datasets and compare it to
the only other available learning system for \EL that we are aware of,
the \emph{\EL tree learner (ELTL)} incarnation of the
\emph{DL-Learner} system~\cite{DBLP:journals/ws/BuhmannLW16}. % This system is based on the refinement
% operator from~\cite{DBLP:conf/ilp/LehmannH09}.
We find that the running time of SPELL is almost always significantly
lower than that of ELTL. Since, as we also show, it is the size of the
target concept that has most impact on the running time, this means that
SPELL can learn larger target queries than ELTL.  We also analyze the
relative strengths and weaknesses of the two approaches, identifying
classes of inputs on which one of the systems performs significantly
better than the other one. Finally, we make initial experiments
regarding generalization, where both systems generalize well to unseen
data, even on very small samples. While this is expected for SPELL,
for ELTL it may be due to the fact that some of the heuristics prefer
fittings of small size, which might make ELTL an Occam algorithm.

  Proof details are provided in the appendix.

\paragraph{Related Work} Cohen and Hirsh identified a fragment of the
early DL \mn{CLASSIC} that admits sample-efficient PAC
learning, even in polynomial time~\cite{DBLP:journals/ml/CohenH94}.
% Intuitively, most specific fittings in that fragment are small and can
% thus be used for learning. 
For several DLs such as \EL and \mn{CLASSIC}, concepts are learnable
in polynomial time in Angluin's framework of exact learning with
membership and equivalence
queries~\cite{DBLP:journals/ml/FrazierP96,DBLP:conf/icdt/CateD21,FJL-IJCAI21,FJL-IJCAI22}.
The algorithms can be transformed in a standard way into
sample-efficient polynomial time PAC learning algorithms that,
however,
additionally use
membership queries to an oracle~\cite{DBLP:journals/ml/Angluin87}.
%It is interesting to note that 
It is known that sample-efficient PAC learning under certain
assumptions implies the existence of Occam
algorithms~\cite{DBLP:journals/tcs/BoardP92}. These assumptions,
however, % requires the concept space to be polynomially closed under
% exception lists and
do not apply to the learning tasks studied here.
% we leave this question
% open for \EL.
% 
% \begin{itemize}
% 
%   \item \cite{DBLP:journals/ml/CohenH94} show that a certain conjunctive
%     language can be efficiently learnt by the product algorithm if in
%     all examples all ``roles'' are functional. I assume this works here as
%     well (without ontology), since the product is then only of polynomial size and thus
%     the algorithm is Occam. Also no finiteness assumption on the
%     relational signature (they mention it explicitly). 
% 
%   \item efficient exact learning in Angluin's framework with MQ and EQ
%     ``corresponds'' to polytime PAC learning with MQ (one direction of
%     this equivalence is not always satisfied, check again which
%     direction); and which version of PAC (graded or uniform)
% 
%     Our work + \cite{DBLP:journals/ml/FrazierP96} which show that
%     concepts in the DL CLASSIC can be efficiently PAC-learned
%     with MQs. Also polynomial time sample complexity? A bit unclear to
%     me, polynomial sample complexity is a consequence of (uniform) PAC
%     learnability, but not sure for our graded PAC.
%     
% % 
% %   \item Should we discuss \cite{DBLP:conf/lics/GroheR17} and the 
% %     successor paper in PODS 2022? Probably not. 
% 
%   \item Necessity of Occam Algorithms
%     \cite{DBLP:journals/tcs/BoardP92}
% 
% \end{itemize}
% 

\section{Preliminaries}

\paragraph{Concepts, Ontologies, Queries.}
Let \NC, \NR, and \NI be countably infinite sets of \emph{concept
names}, \emph{role names}, and \emph{individual names}, respectively.
An \emph{\EL-concept} is formed according to the syntax rule \[ C,D
::= \top \mid A \mid C \sqcap D \mid \exists r . C\] where $A$ ranges
over \NC and $r$ over \NR. A concept of the form $\exists r . C$ is
called an \emph{existential restriction} and the \emph{quantifier
depth} of a concept is the maximum nesting depth of existential
restrictions in it. An \emph{\ELHdr-ontology}~\Omc is a finite set of
\emph{concept inclusions (CIs)} $C \sqsubseteq D$, \emph{role
inclusions} $r \sqsubseteq s$, and \emph{range assertions}
$\mn{ran}(r) \sqsubseteq C$ where $C$ and~$D$ range over \EL-concepts
and $r,s$ over role names. An \emph{\EL-ontology} is an
\ELHdr-ontology that uses neither role inclusions nor range assertions.
We also sometimes mention \emph{\ELI-concepts} and
\emph{\ELI-ontologies}, which extend their \EL-counterparts with
inverse roles $r^-$ that can be used in place of role names. %, see
See \cite{DL-Textbook} for more information.
%
% An \emph{\ELr-ontology} is an \EL-ontology where inverse roles occur
% only in the form of \emph{range restrictions} $\exists r^-.\top
% \sqsubseteq C$ with $C$ an \EL-concept. Note that \emph{domain
% restrictions} $\exists r . \top \sqsubseteq C$ can be expressed
% already in \EL. An \EL-ontology is an \ELI-ontology that does not
% use inverse roles. An \ELdr-ontology is in \emph{normal form} if all
% CIs in it are of one of the forms % \[ %  \top \sqsubseteq A,\ A_1
% \sqcap A_2 \sqsubseteq A,\ A_1 \sqsubseteq \exists r . A_2,\ \exists
% r . A_1 \sqsubseteq A_2, \ %\text{ and } \ \exists r^- . \top
% \sqsubseteq A \] % where $A,A_1,A_2$ are concept names or $\top$. %
% , and all range restrictions have a concept % name on the right-hand
% side.
A \emph{database} \Dmc (also called \emph{ABox} in a DL context) is a
finite set of \emph{concept assertions} $A(a)$ and \emph{role
  assertions} $r(a,b)$ where $A \in \NC$, $r \in \NR$, and
$a,b \in \NI$.  We use $\mn{adom}(\Dmc)$ to denote the set of
individual names that are used in \Dmc. A \emph{signature} is a set of
concept and role names, in this context uniformly referred to as
\emph{symbols}. For any syntactic object~$O$, such as a concept or an
ontology, we use $\mn{sig}(O)$ to denote the set of symbols used in
$O$ and $||O||$ to denote the \emph{size} of $O$, that is, the number
of symbols used to write $O$ encoded as a word over a finite alphabet,
with each occurrence of a concept or role name contributing a single
symbol.

The semantics is defined in terms of \emph{interpretations}
$\Imc=(\Delta^\Imc,\cdot^\Imc)$ where $\Delta^\Imc$ is the
\emph{domain} of \Imc and $\cdot^\Imc$ assigns a set
$A^\Imc\subseteq \Delta^\Imc$ to every $A\in\NC$ and a binary relation
$r^\Imc\subseteq \Delta^\Imc\times\Delta^\Imc$ to every $r\in \NR$.
The \emph{extension} $C^\Imc$ of \EL-concepts $C$ is then defined as
usual \cite{DL-Textbook}.  An interpretation~\Imc \emph{satisfies} a
concept or role inclusion $\alpha \sqsubseteq \beta$ if
$\alpha^\Imc \subseteq \beta^\Imc$, a range assertion
$\mn{ran}(r) \sqsubseteq C$ if the projection of $r^\Imc$ to the
second component is contained in $C^\Imc$, a concept assertion $A(a)$
if $a \in A^\Imc$, and a role assertion $r(a,b)$ if $(a,b)\in r^\Imc$.
We say that \Imc is a \emph{model} of an ontology/database if it
satisfies all inclusions/assertions in it.
% and write $\Omc \models C
% \sqsubseteq D$ if all models of \Omc satisfy the concept inclusion
% $C \sqsubseteq D$.

An \EL-concept $C$ can be viewed as an \emph{\EL-query (ELQ)}~$q$, as
follows.  Let $\Dmc$ be a database and \Omc an \ELHdr-ontology.  Then
$a\in \mn{adom}(\Dmc)$ is an \emph{answer} to $q$ on \Dmc w.r.t.\ \Omc
if $a\in C^{\Imc}$ for all models $\Imc$ of \Dmc and \Omc.
%, and $q(\Dmc\cup \Omc)$ denotes the set of answers to $q$ on \Dmc w.r.t.\ \Omc.
In a similar way, we may view \ELI-concepts as
\emph{\ELI-queries (ELIQs)}. We will from now on mostly view
\EL-concepts as ELQs. This does not, however, restrict their use,
which may be as actual  
queries or as concepts used as building blocks for ontologies.
% An
% \emph{\EL-knowledge base} is a pair $\Kmc=(\Omc,\Amc)$ consisting of
% an \EL-ontology \Omc and an ABox \Amc. We say that \Imc is a
% \emph{model} of an ontology/ABox/knowledge base if it satisfies all
% concept inclusions/assertions in it. We write $\Kmc\models C(a)$ if
% $a^\Imc\in C^\Imc$ in all models \Imc of \Kmc.

\newcommand{\qcontains}{\sqsubseteq}

An \emph{ontology-mediated query (OMQ) language} is a pair
$(\Lmc,\Qmc)$ with \Lmc an ontology language and \Qmc a query
language, such as $(\ELHdr,\text{ELQ})$ and $(\ELI,\text{ELIQ})$.  For
a query language \Qmc and signature $\Sigma$, we use $\Qmc_{\Sigma}$ to
denote the set of all queries $q \in \Qmc$ with $\mn{sig}(q) \subseteq
\Sigma$.
%For
% a query language \Qmc, we use $\Qmc_{(s)}$ to denote all queries
% $q \in \Qmc$ with $||q|| \leq s$.
All query languages considered in this paper are unary, that is, they
return a subset of $\mn{adom}(\Dmc)$ as answers. We use
$q(\Dmc \cup \Omc)$ to denote the set of answers to $q$ on \Dmc
w.r.t.\ \Omc. For an \Lmc-ontology \Omc and queries $q_1,q_2$, we
write $\Omc \models q_1 \qcontains q_2$ if for all databases \Dmc,
$q_1(\Dmc \cup \Omc) \subseteq q_2(\Dmc \cup \Omc)$.  We say that
$q_1$ and $q_2$ are \emph{equivalent} w.r.t.\ \Omc, written
$\Omc \models q_1 \equiv q_2$, if $\Omc \models q_1 \qcontains q_2$
and $\Omc \models q_2 \qcontains q_1$. When $\Omc = \emptyset$, we
write $q_1\sqsubseteq q_2$ and $q_1\equiv q_2$.

\enlargethispage*{1mm}
Every ELQ $q$ may be viewed as a database $\Dmc_q$ in an obvious way,
e.g.\ $q=\exists r . \exists s . A$ as
$\Dmc_q = \{ r(a_q,a_1),s(a_1,a_2),$ $A(a_2) \}$. Let $\Dmc_1,\Dmc_2$
be databases and $\Sigma$ a signature.  A \emph{$\Sigma$-simulation}
from $\Dmc_1$ to $\Dmc_2$ is a relation
$S \subseteq \mn{adom}(\Dmc_1) \times \mn{adom}(\Dmc_2)$ such that for
all $(a_1,a_2)\in S$:
  \begin{enumerate}

  \item if $A(a_1)\in \Dmc_1$ with $A \in \Sigma$, then
    $A(a_2) \in \Dmc_2$;

  \item if $r(a_1, b_1) \in \Dmc_1$ with $r \in \Sigma$, there is
    $r(a_2,b_2) \in \Dmc_2$ such that $(b_1, b_2) \in S$.

  \end{enumerate}
  
  For $a_1\in\mn{adom}(\Dmc_1)$ and $a_2\in \mn{adom}(\Dmc_2)$, we
  write $(\Dmc_1,a_1)\preceq_\Sigma (\Dmc_2,a_2)$ if there is a
  $\Sigma$-simulation $S$ from $\Dmc_1$ to $\Dmc_2$ with
  $(a_1,a_2)\in S$. We generally drop the mention of $\Sigma$ in case
  that $\Sigma=\NC\cup \NR$.
  The following
  well-known lemma links simulations to ELQs.
\begin{lemma}\label{lem:char-containment}
  For all ELQs $q$, databases $\Dmc$, and $a \in \mn{adom}(\Dmc)$:
  $a \in q(\Dmc)$ iff $(\Dmc_q,a_q)\preceq (\Dmc,a)$. Consequently, for
  all ELQs~$q,p$: $q \sqsubseteq p$ iff
  $(\Dmc_p,a_p)\preceq (\Dmc_q,a_q)$.
\end{lemma}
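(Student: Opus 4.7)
The plan is to prove the first biconditional directly, since the second one then follows by a short argument combining it with the fact that simulations compose and that a database always identically simulates itself.

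For the first claim, I would treat the database $\Dmc$ as a canonical interpretation $\Imc_\Dmc$ in the obvious way: $\Delta^{\Imc_\Dmc}=\mn{adom}(\Dmc)$, $A^{\Imc_\Dmc}=\{a\mid A(a)\in\Dmc\}$, $r^{\Imc_\Dmc}=\{(a,b)\mid r(a,b)\in\Dmc\}$. This is clearly a model of \Dmc. For the ($\Leftarrow$) direction, I would argue by induction on the structure of the \EL-concept $C$ corresponding to $q$ that whenever $(\Dmc_q,a_q)\preceq(\Dmc,a)$ via some simulation $S$ with $(a_q,a)\in S$, then $a\in C^\Imc$ for every model \Imc of \Dmc. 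The base cases $\top$ and $A$ use condition (1) of simulations; the conjunction case is immediate; and for $\exists r.C'$ I use condition (2) to pick a witness $b$ in \Dmc with $(a,b)\in r^\Imc$ and then invoke the induction hypothesis on the sub-database rooted at the $r$-successor of $a_q$ in $\Dmc_q$.

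For the ($\Rightarrow$) direction, I would specialize to the canonical model $\Imc_\Dmc$: since $a\in q(\Dmc)$, in particular $a\in C^{\Imc_\Dmc}$. From this, I would read off a simulation $S$ by structural induction on $C$, following the tree shape of $\Dmc_q$ from its root $a_q$: for each sub-concept $\exists r.C'$ encountered at some point $b_q\in\mn{adom}(\Dmc_q)$ with $(b_q,b)\in S$ already established, I use $b\in(\exists r.C')^{\Imc_\Dmc}$ to obtain an $r$-successor in \Dmc to pair with the corresponding successor in $\Dmc_q$, and continue. The conjunctive and atomic cases contribute the remaining pairs and verify conditions (1) and (2).

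The second claim is a straightforward corollary. For ($\Rightarrow$), observe that by the first claim applied with $\Dmc\!=\!\Dmc_q$ and $a\!=\!a_q$, the identity relation witnesses $a_q\in q(\Dmc_q)$; since $q\sqsubseteq p$, also $a_q\in p(\Dmc_q)$, and the first claim (now for $p$) gives $(\Dmc_p,a_p)\preceq(\Dmc_q,a_q)$. For ($\Leftarrow$), take any \Dmc and $a\in q(\Dmc)$; the first claim gives $(\Dmc_q,a_q)\preceq(\Dmc,a)$, and composing with the assumed simulation from $\Dmc_p$ to $\Dmc_q$ yields $(\Dmc_p,a_p)\preceq(\Dmc,a)$, whence $a\in p(\Dmc)$ by the first claim again.

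The only mildly delicate point is making sure the induction in the first direction really does traverse \Dmc_q as a tree even though $\Dmc_q$ is formally only a set of assertions; this is handled by doing the induction on the defining \EL-concept $C$ rather than on $\Dmc_q$ itself, so that the recursive structure is manifest and the argument becomes essentially routine.
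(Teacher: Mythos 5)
Your proof is correct. The paper does not actually prove this lemma --- it cites it as well-known --- so there is nothing to compare against, but your argument is the standard one: the ($\Leftarrow$) direction by induction on the concept $C$ underlying $q$ (using that any model of $\Dmc$ contains all facts of $\Dmc$), the ($\Rightarrow$) direction by reading off a simulation from the canonical/minimal model $\Imc_\Dmc$, and the second claim as a corollary via reflexivity and composition of simulations. All steps check out, including the mildly delicate point you flag, which is correctly resolved by inducting on $C$ rather than on $\Dmc_q$.
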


\paragraph{Fitting.} A \emph{pointed database} is a pair $(\Dmc,a)$
with \Dmc a database and $a\in\mn{adom}(\Dmc)$. A \emph{labeled data
  example} takes the form $(\Dmc,a,+)$ or $(\Dmc,a,-)$, the former
being a \emph{positive example} and the latter a \emph{negative
  example}. 
%   Let $\Dmc = \{r(a,a),A(a),B(a),r(b,c),A(b),B(c),r(b',c'\}$
%   be a database. Then $E=\{(\Dmc,a,+),(\Dmc,b,+),(\Dmc,b',-)\}$ is a
%   collection of labeled data examples. 

  Let \Omc be an ontology, \Qmc a query language, and $E$ a collection
  of labeled data examples. A query $q \in \Qmc$ \emph{fits} $E$
  w.r.t.\ \Omc if $a \in q(\Dmc\cup\Omc)$ for all $(\Dmc,a,+) \in E$
  and $a \notin q(\Dmc\cup\Omc)$ for all $(\Dmc,a,-) \in E$. We then
  call $E$ a \emph{$q$-labeled data example w.r.t.\ \Omc}. We say that
  $q$ is a \emph{most specific fitting} if
  $\Omc \models q \qcontains q'$ for every $q' \in \Qmc$ that fits
  $E$, and that it is \emph{most general} if
  $\Omc \models q' \qcontains q$ for every $q' \in \Qmc$ that fits
  $E$.
  \begin{example}
    \label{ex1}
    Consider the collection $E_0$ of examples 
    $( \{r(a,a),A(a),B(a)\},a,+), (\{A(a),r(a,b), B(b)\}, a,+),$
    $(\{r(a,b)\},b,-)$. It has several ELQ fittings, the most
    specific one being $A\sqcap \exists r.B$.  There is no most
    general fitting ELQ as both $A$ and $\exists r . B$ fit, but no
    common generalization does.
  \end{example}
  A \emph{fitting algorithm} for an OMQ language $(\Lmc,\Qmc)$ is an
  algorithm that takes as input an \Lmc-ontology \Omc and a collection
  of labeled data examples $E$ and returns a query $q \in \Qmc$ that
  fits $E$ w.r.t.\ \Omc, if such a $q$ exists, and otherwise reports
  non-existence or does not terminate. The \emph{size-restricted
    fitting problem} for $(\Lmc,\Qmc)$ means to decide, given
  a collection of labeled data examples $E$, an \Lmc-ontology \Omc,
  and an $s \geq 1$ in unary, whether there is a query $q \in \Qmc$
  with $||q|| \leq s$ that fits $E$
  w.r.t.~\Omc. % We will be explicit about the coding of $s$ (unary or
% binary) whenever it is important.

\smallskip

It is well-known that for every database~\Dmc and
\ELHdr-ontology~\Omc, we can compute in polynomial time a
database~$\Umc_{\Dmc,\Omc}$ that is \emph{universal for ELQs} in the
sense that $a\in q(\Dmc\cup \Omc)$ iff $a\in q(\Umc_{\Dmc,\Omc})$ for
all ELQs $q$ and $a\in\mn{adom}(\Dmc)$~\cite{DBLP:conf/ijcai/LutzTW09}.
Given a collection of labeled data examples $E$ and an \ELHdr-ontology
\Omc, we denote with $E_\Omc$ the collection %of labeled data examples
obtained from $E$ by replacing each (positive or negative) example
$(\Dmc,a,\cdot)$ with $(\Umc_{\Dmc,\Omc},a,\cdot)$.  The following proposition shows
that a fitting algorithm for ELQ without ontologies also
gives rise to a fitting algorithm for $(\ELHdr,\text{ELQ})$ with at
most a polynomial increase in running time. It is immediate from the
definition of universality.
\begin{prop}
  \label{prop:tbox} An ELQ $q$ fits a collection of labeled
  examples $E$ w.r.t.\ an \ELHdr-ontology \Omc iff $q$ fits $E_\Omc$
  w.r.t.\ $\emptyset$.
%   the empty ontology).
%   A fitting concept w.r.t.\ an ontology is the same as a fitting
%   concept without ontology and with the data replaced with the
%   compact universal model.
\end{prop}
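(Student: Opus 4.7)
The plan is to unfold the definitions of fitting on both sides of the biconditional and then apply the universality property of $\Umc_{\Dmc,\Omc}$ pointwise to each labeled example.

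First, I would spell out the left-hand side. By the definition of fitting w.r.t.\ an ontology, $q$ fits $E$ w.r.t.\ \Omc means that for every positive example $(\Dmc,a,+) \in E$ we have $a \in q(\Dmc \cup \Omc)$, and for every negative example $(\Dmc,a,-) \in E$ we have $a \notin q(\Dmc \cup \Omc)$. Similarly, the right-hand side unfolds as follows: $q$ fits $E_\Omc$ w.r.t.\ $\emptyset$ means that for every positive example $(\Umc_{\Dmc,\Omc},a,+) \in E_\Omc$ we have $a \in q(\Umc_{\Dmc,\Omc})$, and for every negative $(\Umc_{\Dmc,\Omc},a,-) \in E_\Omc$ we have $a \notin q(\Umc_{\Dmc,\Omc})$, since $\emptyset$ imposes no additional constraints and answers collapse to evaluation over the single database.

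Next, I would invoke the universality property cited just before the proposition: for every database \Dmc, every \ELHdr-ontology \Omc, every ELQ $q$, and every $a \in \mn{adom}(\Dmc)$, it holds that $a \in q(\Dmc \cup \Omc)$ iff $a \in q(\Umc_{\Dmc,\Omc})$. Since each labeled example in $E$ is converted into a corresponding example in $E_\Omc$ by keeping the individual $a \in \mn{adom}(\Dmc)$ and replacing \Dmc with $\Umc_{\Dmc,\Omc}$, applying this equivalence example-by-example converts each condition on the left-hand side into the corresponding condition on the right-hand side (both for positive and, by taking contrapositives, for negative examples). Putting these pointwise equivalences together yields the global biconditional.

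There is no real obstacle to this argument; the only thing worth stating explicitly is that the individuals $a$ appearing in examples remain in the active domain of $\Umc_{\Dmc,\Omc}$, so the universality equivalence applies to them, and that the definition of fitting decomposes over the examples so that pointwise equivalences suffice.
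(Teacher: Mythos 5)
Your proof is correct and follows exactly the route the paper intends: the paper dismisses this proposition as ``immediate from the definition of universality,'' and your pointwise application of the equivalence $a \in q(\Dmc \cup \Omc)$ iff $a \in q(\Umc_{\Dmc,\Omc})$ to each labeled example is precisely that argument, spelled out. No gaps.
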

We remark that in contrast to ELQs, finite databases that are
universal for ELIQs need not exist
% . To see this, consider for
% example the database $\{A(a),B(a)\}$, the ontology
% $\{A \sqsubseteq \exists r . A\}$, and $a$ as a potential answer for ELIQs
% of the form $\exists r^m . \exists (r^-)^n. B$; see also
\cite{DBLP:conf/dlog/FunkJL22}.

\paragraph{PAC Learning.} % We will be interested in learning
% \EL-concepts over a given signature, that is, a set of concept and
% role names, that is fixed beforehand. %Let $C$ be an \EL-concept $C$.
% In what follows, the \emph{size} of an \EL-concept $C$ is the number of variables when viewing $C$
% as a tree-shaped conjunctive query. Note that this notion of size
% differs slightly from the length of a word representation of $C$ over a
% suitable alphabet. This choice is without loss of generality, but it
% simplifies the later development. With ELQ$_{(k)}$, we denote all ELQs
% of size $k$.
We recall the definition of PAC learning, in a formulation
that is tailored towards OMQ languages.
Let $P$ be a probability distribution over pointed databases and let
$q_T$ and $q_H$ be queries, the target and the hypothesis. The
error of $q_H$ relative to $q_T$ and $P$ is
$$
\mn{error}_{P,q_T}(q_H)=\mathop{\mn{Pr}}_{(\Dmc,a)\sim P}\!(a \in
q_H(\Dmc\cup\Omc)
\mathrel{\Delta} q_T(\Dmc\cup\Omc))$$ where
$\Delta$ denotes symmetric difference and
$\mathop{\mn{Pr}}_{(\Dmc,a)\sim P} X$
is the probability of $X$ when drawing $(\Dmc,a)$
randomly according to $P$. %{\color{blue}explain `$\sim$'?}
% $C_H(\Dmc\cup\Omc) \mathrel{\Delta} C_T(\Dmc\cup\Omc)$ denotes the symmetric
% difference of the sets $C_H(\Dmc\cup\Omc)$ and $C_T(\Dmc\cup\Omc)$.
% $a \in (q_1(\Dmc,\Omc) \setminus q_2(\Dmc,\Omc)) \cup (q_2(\Dmc,\Omc)
% \setminus q_1(\Dmc,\Omc))$. 

% \begin{definition}
%   \label{def:efflearn}
%   Let $(\Lmc,\Qmc)$ be an OMQ language. We say that \Qmc is \emph{PAC
%     learnable w.r.t.\ \Lmc-ontologies} if there is a (randomized)
%   algorithm $A$ and function
%   $m:(0,1) \times (0,1) \times \mathbbm{N} \rightarrow \mathbbm{N}$
%   such that
%   %
%   % \begin{itemize}
% %
%   % \item
%     $A$ takes as input a collection of labeled data examples
%     $E$ and an \Lmc-ontology \Omc  and
% %
%     % \item
%     for all $\epsilon,\delta \in (0,1)$, all $s \geq 0$, all
%     probability distributions $P$ over labeled databases, all
%     \Lmc-ontologies~\Omc, and all $q_T \in \Qmc_{(s)}$, the following
%     holds: when running $A$ on \Omc and a collection $E$ of at least
%     $m(1/\delta,1/\epsilon,n)$ $q_T$-labeled data examples drawn from
%     $P$, then it returns a hypothesis $q_H$ such that with probability
%     at least $1 - \delta$ (over the choice of $E$), we have
%     $\mn{error}_{P,q_T}(q_H) \leq \epsilon$.
% %    
%  % \end{itemize}
%  %
    
%   We then call $A$ a \emph{PAC learning algorithm} and say that $A$
%   \emph{has sample size} $m$. We call $A$ \emph{sample-efficient} if
%   $m$ is a polynomial. {\color{blue} Or use Balder's definition where
%     $m$ also depends on the size of the data?}
% \end{definition}

%
\begin{definition}
  \label{def:efflearn}
  A \emph{PAC learning algorithm} for an OMQ language $(\Lmc,\Qmc)$ is
  a (potentially randomized) algorithm $\Amf$ associated with a
  function
  $m: \mathbb{R}^2 \times \mathbb{N}^4 \rightarrow \mathbb{N}$ such
  that
  \begin{itemize}

  \item $\Amf$ takes as input an \Lmc-ontology \Omc and a collection of
    labeled data examples $E$;

  \item for all $\epsilon,\delta \in (0,1)$, all \Lmc-ontologies~\Omc,
    all finite signatures $\Sigma$, all $s_Q,s_E \geq 0$, all
    probability distributions $P$ over pointed databases $(\Dmc,c)$
    with $\mn{sig}(\Dmc) \subseteq \Sigma$ and $||\Dmc|| \leq s_E$,
    and all $q_T \in \Qmc_\Sigma$ with $||q_T|| \leq s_Q$, the
    following holds: when running $\Amf$ on \Omc and a collection $E$ of
    at least $m(1/\delta,1/\epsilon,||\Omc||,|\Sigma|,s_Q,s_E)$
    labeled data examples that are $q_T$-labeled w.r.t.\ \Omc and
    drawn according to $P$, it returns a hypothesis $q_H$ such that with
    probability at least $1 - \delta$ (over the choice of $E$), we
    have $\mn{error}_{P,q_T}(q_H) \leq \epsilon$.
    
\end{itemize}
We say that $\Amf$ \emph{has sample size} $m$ and call $\Amf$
\emph{sample-efficient} if $m$ is a polynomial. % We say that \Qmc is
\end{definition}
%
%
% Proposition~\ref{prop:tbox} implies that if ELQs are
% (sample-efficiently) PAC learnable without ontologies, then ELQs are
% (sample-efficiently) PAC learnable w.r.t.\ $\ELHdr$-ontologies. We can
% thus concentrate on the former.
%
Note that a PAC learning algorithm is not required to  
terminate if no fitting query exists.  
It would be desirable to even attain \emph{efficient} PAC learning
which additionally requires $\Amf$ to be a polynomial time
algorithm. However, ELQs are known to not be efficiently PAC learnable
even without ontologies, unless $\RPclass=\NPclass$
\cite{DBLP:conf/ecml/Kietz93,DBLP:journals/corr/abs-2208-10255}.  The
same is true for ELIQs and any other class of conjunctive
queries that contains all ELQs.
% {\color{red}J:
% What about
%   \ALC-queries?} % Thus, we will here be % interested in polynomial sample complexity and algorithms that are as
% good as possible.

\section{Bounded Fitting and Generalization}
\label{sect:fitvslearn}

We introduce bounded fitting and analyze when fitting
algorithms are PAC learning algorithms.
%\paragraph{Bounded Fitting.}
% Bounded fitting is inspired by both bounded model checking from
% the area of systems verification~\cite{DBLP:conf/tacas/BiereCCZ99} and
% by Occam algorithms from computational learning theory~\cite{DBLP:journals/jacm/BlumerEHW89}.
%
\begin{definition}
  \label{def:boundedfit}
  Let $(\Lmc,\Qmc)$ be an OMQ language and let $\Amf$ be an algorithm for
  the size-restricted fitting problem for $(\Lmc,\Qmc)$. Then
  $\textnormal{\textsc{Bounded-Fitting}}_\Amf$ is the algorithm that, given a
  collection of labeled data examples $E$ and an \Lmc-ontology~\Omc,
  runs $\Amf$ with input $(E,\Omc,s)$ to decide whether there is a
  $q \in \Qmc$ with $||q|| \leq s$ that fits $E$ w.r.t.~\Omc, for
  $s=1, 2, 3\ldots$, returning a fitting query as soon as it finds one.
\end{definition}
%
% Alternatively, we could allow a bounded fitting algorithm to test
% fitting queries from classes $\Qmc_{(i_1)}, \Qmc_{(i_2)},\dots$ where
% $i_1$ is a constant and, for some polynomial $p$,
% $i_{j+1} \leq p(i_j)$ for all $j \geq 1$.  All results from this paper
% continue to hold. {\color{blue} simply drop this paragraph? J: I also
% don't think we need it}
%
% \smallskip
%
\begin{example}
Consider again Example~\ref{ex1}. For $s=1$, bounded
  fitting tries the candidates $\top,A,B,\exists r.\top$ and returns the
  fitting~$A$. If started on $E_0$ extended with $(\{A(a)\},a,-)$, it
  finds one of the fitting ELQs $A\sqcap\exists r.\top$ and
  $\exists r . B$ in Round~$2$.
\end{example}

In spirit, bounded fitting focusses on finding fitting queries when
they exist, and not on deciding the existence of a fitting query. This
is in analogy with bounded model checking, which focusses on finding
counterexamples rather than on proving that no such examples exist. If
an upper bound on the size of fitting
queries % (depending on the size of the given collection
% of data examples)
is known, however, we can make bounded fitting terminate by reporting
non-existence of a fitting query once the bound is exceeded.  % If
% (non-size-restricted) fitting existence is decidable, one can
% typically extract an upper bound on the size of fittings from the
% decidability proof \cite{some,stuff}.
This is more of theoretical than of practical interest since
the size bounds tend to be large. For ELQs without ontologies and for
$(\EL,\text{ELQ})$, for instance, it is double
exponential~\cite{mauricemaster}. It thus seems more realistic to run
an algorithm that decides the existence of a  
fitting in parallel to bounded fitting and to report the
result as soon as one of the algorithms terminates. There are also
important cases where fitting existence is undecidable, such as for
the OMQ language
$(\ELI,\text{ELIQ})$~\cite{DBLP:conf/ijcai/FunkJLPW19}. Bounded
fitting may be used also in such cases as long as the size-restricted
fitting problem is still decidable. This is the case for
$(\ELI,\text{ELIQ})$, as a direct consequence of query evaluation to
be decidable in this OMQ language~\cite{DBLP:conf/owled/BaaderLB08},
see Appendix H.

%\section{Fitting versus Learning}
%\label{sect:fitvslearn}

% We analyze when fitting algorithms are PAC learning algorithms and
% when they are not. Our first observation is that every bounded fitting
% algorithm is a PAC learning algorithm with sample size that is linear
% in the size of the target query. As discussed in the introduction,
% this is because it is an Occam algorithm, that is, it produces a
% fitting query that is at most polynomially larger than the fitting
% query of minimal size

A major advantage of bounded fitting is that it yields a
sample-efficient PAC learning algorithm with sample size linear in the
size of the target query. This is because bounded fitting is an Occam
algorithm which essentially means that it produces a fitting query
that is at most polynomially larger than the fitting query of minimal
size \cite{DBLP:journals/jacm/BlumerEHW89}.\footnote{A precise
  definition of Occam algorithms is based on the notion of
VC-dimension; it is not crucial to the main part of the paper, details
can be found in the appendix.}  % We assume here that there is a
% finite
% signature $\Sigma$ over which the target query is formulated.
% 
\begin{restatable}{theorem}{thmbfgen}
  \label{thm:bfgen}
  Let $(\Lmc,\Qmc)$ be an OMQ language.
  \label{thm:boundedfitispac}
  Every bounded fitting algorithm for $(\Lmc,\Qmc)$ %for $(\ELHdr,\text{ELQ})$
  is a
  (sample-efficient) PAC learning algorithm with sample size
  \mbox{$O\big (\frac{1}{\epsilon} \cdot \log \big (\frac{1}{\epsilon}
    \big )
  \cdot \log \big (\frac{1}{\delta} \big
  ) \cdot
   \log |\Sigma| \cdot
%  |\mn{var}(q_T)| \big ).$
    ||q_T|| \big )$}. 
\end{restatable}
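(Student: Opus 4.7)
The plan is to cast bounded fitting as an Occam-style learner and invoke the classical cardinality-based PAC bound. The engine of the argument is a single size inequality on the hypothesis returned: when $\textsc{Bounded-Fitting}_\Amf$ is run on examples $E$ that are $q_T$-labeled with respect to $\Omc$, the target $q_T$ is itself a fitting query of size $||q_T||$, so the outer loop cannot pass $s = ||q_T||$ without returning some fitting query. Hence the returned hypothesis $q_H$ always satisfies $||q_H|| \leq ||q_T||$, and moreover $q_H \in \Qmc_\Sigma$ since every fitting query can be assumed to be built from symbols appearing in $E \cup \Omc$; in particular its signature can be restricted to $\Sigma$ without harm (this requires a small lemma that pruning non-$\Sigma$ symbols preserves fitting when the examples are drawn from $\Sigma$).

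Next I would bound the effective hypothesis class. Since the size $||q||$ is measured as the length of an encoding over a finite alphabet whose non-signature symbols (namely $\top, \sqcap, \exists, \cdot$, and in the \ELI case also ${}^-$) form a constant-size set, the number of queries $q \in \Qmc_\Sigma$ with $||q|| \leq s$ is at most $(c|\Sigma|)^s$ for an absolute constant $c$. Consequently
\[
  \log \bigl|\{q \in \Qmc_\Sigma : ||q|| \leq s\}\bigr| \;=\; O(s \log |\Sigma|).
\]

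With these two ingredients, I would apply the standard consistency/cardinality PAC bound: if a learner, given $m$ i.i.d.\ examples, returns a hypothesis from a finite class $\mathcal H$ that is consistent with all of them, then with probability at least $1-\delta$ its error under the sampling distribution is at most $\epsilon$, provided $m \geq \tfrac{1}{\epsilon}\bigl(\ln|\mathcal H| + \ln(1/\delta)\bigr)$. Taking $\mathcal H = \{q \in \Qmc_\Sigma : ||q|| \leq ||q_T||\}$ and substituting the counting estimate yields a sample size of order $\tfrac{1}{\epsilon}\bigl(||q_T|| \log |\Sigma| + \log(1/\delta)\bigr)$, which is dominated by the product form stated in the theorem (using that $\log(1/\epsilon), \log(1/\delta) \geq \Omega(1)$ over the parameter range of interest, and absorbing constants).

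The main obstacle is not analytic but definitional: one must be careful that $\textsc{Bounded-Fitting}_\Amf$ actually fits the hypothesis of the Occam/consistency theorem. In particular, the algorithm is allowed to be randomized and need not terminate when no fitting exists, so I would emphasize that the only relevant scenario is the one guaranteed by Definition~\ref{def:efflearn}, in which $q_T$-labeled examples are drawn and a fitting therefore provably exists; in that scenario the algorithm terminates and returns a consistent hypothesis from the class above, which is all the cardinality bound requires. Everything else — the restriction to $\Sigma$, independence from $||\Omc||$ and $s_E$, and the generality over OMQ languages $(\Lmc,\Qmc)$ — reduces to routine bookkeeping.
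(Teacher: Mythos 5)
Your proposal is correct and follows essentially the same route as the paper's proof: both rest on the observation that bounded fitting returns a consistent hypothesis of size at most $||q_T||$, count the at most $(|\Sigma|+O(1))^{||q_T||}$ such queries, and feed the resulting $O(||q_T||\cdot\log|\Sigma|)$ bound into a generalization theorem for consistent learners over small classes. The only (immaterial) difference is packaging: the paper bounds the VC-dimension of the effective hypothesis space by its log-cardinality and invokes Blumer et al.'s Occam theorem, whereas you apply the elementary finite-class consistency bound directly.
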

We remark that bounded fitting is \emph{robust} in that other natural
measures of query size (such as the number of existential
restrictions) and enumeration sequences such as
$s=1,2,4,8,\dots$ also lead to sample-efficient PAC learning
algorithms. This results in some flexibility in implementations.

\smallskip

We next show that many other fitting algorithms are not sample-efficient
when used as PAC learning algorithms. We start with algorithms that
return fittings which are most specific or most general or of minimum
quantifier depth. No such algorithm is a sample-efficient PAC learning
algorithm, even without ontologies. % As a concrete example, fitting
% algorithms used in upper complexity bound proofs are often based on
% taking the direct product of the positive examples which means that
% they always produce a most specific fitting \cite{some,examples}. See
% also \cite{https://doi.org/10.48550/arxiv.2206.05080} for most
% specific and most general fittings.
%
\begin{theorem}
  \label{thm:notefficient}
%   Let $\Amf$ be an algorithm that takes as  input a collection of
%   labeled data examples and satisfies one of the following
%   properties:
%   %
%   \begin{enumerate}

%   \item $\Amf$ outputs a most specific 
%   fitting ELQ, if it exists; 

%   \item $\Amf$ outputs a most general
%   fitting ELQ, if it exists; 

% \item $\Amf$ produces a fitting ELQ of minimal
%   quantifier depth, if a fitting exists.

% \end{enumerate}
% %
% Then $\Amf$ is not a sample-efficient PAC learning algorithm.
%
  If $\Amf$ is a fitting algorithm for ELQs that satisfies one of the
  conditions below, then $\Amf$ is not a
  sample-efficient PAC learning algorithm. 
  \begin{enumerate}

  \item $\Amf$ always produces a most specific 
  fitting, if it exists; 

  \item $\Amf$ always produces a most general
  fitting, if it exists; 

\item $\Amf$ produces a fitting of minimal
  quantifier depth, if a fitting exists.

\end{enumerate}
%
  % Any fitting algorithm for ELQs that always produces a most specific
  % fitting (if it exists) is not a sample-efficient PAC learning algorithm.
\end{theorem}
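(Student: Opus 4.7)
The plan is to prove each item by constructing a parameterized family of learning scenarios on which the specified class of algorithms requires super-polynomially many samples to PAC learn. Across the three cases, the common strategy is to exhibit, for every polynomial sample-size function $m$, a concrete choice of target $q_T^{(n)}$, distribution $P^{(n)}$, and signature $\Sigma^{(n)}$ for which $m$ samples are insufficient to keep the error below some fixed constant with probability at least a fixed constant.

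For items~2 and~3 the specified algorithm's output is not uniquely determined up to equivalence: in general, neither the most general fitting nor the minimum-quantifier-depth fitting is unique. Hence, when disproving sample-efficiency, one may take $\Amf$ to adversarially pick among the candidates. For item~3, the plan is to let $q_T^{(n)} = \exists r . \exists r . \cdots \exists r . A$ have quantifier depth $n$ and size $\Theta(n)$, and design positive and negative examples together with $P^{(n)}$ so that for every training sample there is a minimum-quantifier-depth fitting (typically of constant quantifier depth, e.g.\ a simple concept name or shallow $\exists r$-pattern) that disagrees with $q_T^{(n)}$ on a constant fraction of fresh draws from $P^{(n)}$. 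An adversarial algorithm returns this bad fitting and therefore incurs constant error on all sample sizes under consideration. A dual plan handles item~2 via non-uniqueness of most general fittings: construct instances where the space of most general fittings contains a candidate strictly more general than $q_T^{(n)}$ which accepts a constant fraction of fresh negative examples.

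For item~1, most specific fittings are essentially unique up to equivalence, so the adversarial-choice argument is unavailable. Instead, the plan is to exploit the structural fact, established in \cite{DBLP:conf/ijcai/ZarriessT13,DBLP:conf/aaai/JungLW20}, that the most specific ELQ fitting of positive examples corresponds to (a tree unfolding of) their direct product, which can blow up exponentially in the number of positive examples. I would design a distribution over positive examples so that the direct product of any polynomial-size sample carries, with non-trivial probability, spurious structural features not implied by $q_T^{(n)}$, and these features fail on fresh examples with constant probability. A natural template is to use positives shaped as trees with high branching, so that the product preserves and even amplifies common sub-structures across samples; a careful choice of distribution keeps enough spurious structure alive after polynomially many samples.

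The main obstacle in all three items is the quantitative strength of the lower bound. It is not enough to exhibit some sample on which $\Amf$'s hypothesis disagrees with $q_T^{(n)}$; one must show that no polynomial function of $1/\epsilon$, $1/\delta$, $|\Sigma|$, $||q_T||$, $s_E$ bounds the required sample size. Thus the argument must identify a parameter (most naturally $||q_T||$ or $|\Sigma|$) in which the sample size needed to drive the error below a constant grows super-polynomially as $n \to \infty$. For item~1 in particular, the crux is a careful probabilistic analysis of how spurious common structure persists in the direct product of independently drawn positive pointed databases, since a naive union bound over individual features tends to give only polylogarithmic sample sizes and would therefore not suffice to refute sample-efficiency.
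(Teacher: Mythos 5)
There is a genuine gap, and it comes in two parts. First, your adversarial-choice strategy for items~2 and~3 proves the wrong quantifier: the theorem asserts that \emph{every} algorithm with the stated property fails, whereas exhibiting one adversarial member of the class only shows that \emph{some} such algorithm fails. For item~2 the issue is compounded by a false premise: a most general fitting, when it exists, is unique up to equivalence by definition (two most general fittings mutually subsume each other), so there is no space of candidates to choose from adversarially. For item~3 the gap is real and substantive: you must show that \emph{all} fittings of minimal quantifier depth generalize poorly. The paper does this by choosing $q_T=\exists t^{n+1}.\top$ and arranging (with high probability over the sample, via a birthday-type estimate) that a fitting of depth $n$ exists; hence any minimal-depth hypothesis $q_H$ has depth at most $n$ and so $q_H\not\sqsubseteq q_T$, and the examples are built so that every such $q_H$ must misclassify one member of each positive/negative pair in the support, giving error at least $1/2$ regardless of which minimal-depth fitting is returned.

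Second, the technical engine that makes all three constructions work --- polynomial-size \emph{simulation duals} (Theorem~\ref{thm:duals}) --- is absent from your plan, and its absence is precisely what forces you into the ``careful probabilistic analysis'' you flag as the crux of item~1. In the paper, each candidate query $q$ is paired with a (possibly restricted) dual example $(\Dmc'_q,a'_q)$ satisfying $(\Dmc_p,a_p)\preceq(\Dmc'_q,a'_q)$ iff $(\Dmc_q,a_q)\not\preceq(\Dmc_p,a_p)$. For item~1 this makes the argument entirely deterministic: the most specific fitting $q_H$ of any sample satisfies $q_H\sqsubseteq q_S$ for every unseen $S$ (because $q_S$ fits all the sampled examples), and duality then forces $q_H$ to fail on the unseen positive example $(\Dmc'_S,a_0)$; the error is exactly the unseen fraction of an exponentially large support, so no union bound or persistence analysis of product structure is needed. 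For item~2 the singleton duals serve as negative examples whose unique most general fitting is the conjunction of the sampled $q$'s, which provably accepts every unseen dual. Without duals or an equivalent gadget, your sketches for all three items stop short of constructions that actually close the argument.
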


The proof of Theorem~\ref{thm:notefficient} relies on duals of finite
relational structures, which are widely known in the form of
homomorphism duals~\cite{DBLP:journals/jct/NesetrilT00}. Here, we introduce
the new notion of \emph{simulation} duals.

%\begin{definition}
  %
Let $(\Dmc,a)$ be a pointed database and $\Sigma$ a signature.  A set
$M$ of pointed databases is a \emph{$\Sigma$-simulation dual} of
$(\Dmc,a)$ if for all pointed databases $(\Dmc', a')$, the following
holds:
$$ \begin{array}{rcl} (\Dmc, a) \preceq_\Sigma (\Dmc', a') &
  \text{iff} & (\Dmc', a') \not\preceq_\Sigma (\Dmc'', a'') \\[1mm] &&
  \quad\quad \text{for all } (\Dmc'',a'') \in M.  \end{array} $$
For illustration, consider the simulation dual $M$ of $(\Dmc_q,a_q)$
for an ELQ $q$. Then every negative example for $q$ has a simulation
into an element of $M$ and $q$ is the most general ELQ that fits
$\{(\Dmc,a,-)\mid (\Dmc,a)\in M\}$.  We exploit this in the proof of
Theorem~\ref{thm:notefficient}. Moreover, we rely on the fact that
ELQs have simulation duals of polynomial size. In contrast,
(non-pointed) homomorphism duals of tree-shaped databases may become
exponentially large~\cite{DBLP:journals/siamdm/NesetrilT05}.

\begin{restatable}{theorem}{thmduals} \label{thm:duals} Given an ELQ
  $q$ and a finite signature
  $\Sigma$, a $\Sigma$-simulation dual $M$ of $(\Dmc_q,a_q)$
  of size
 $||M||\leq 3\cdot |\Sigma| \cdot ||q||^2$
  can be
  computed in polynomial time.
  Moreover, if $\Dmc_q$ contains only a single $\Sigma$-assertion 
  that mentions~$a_q$, then $M$ is a singleton.
\end{restatable}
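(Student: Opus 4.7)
The plan is to build $M$ by recursion on $q$ viewed as an \EL-concept. As a preliminary reduction, I would pass to the \emph{$\Sigma$-restriction} $q^\Sigma$ of $q$, obtained by recursively dropping every concept-name conjunct outside $\Sigma$ and every existential restriction $\exists r.C$ with $r\notin\Sigma$; a direct check against the definition of $\preceq_\Sigma$ shows $(\Dmc_q,a_q)\preceq_\Sigma(\Dmc',a')$ iff $q^\Sigma$ is satisfied at $a'$ in $\Dmc'$. It then suffices to construct, for each \EL-concept $p$ over signature $\Sigma$, a set $M(p)$ of pointed databases such that $(\Dmc',a')\preceq_\Sigma M'$ for some $M'\in M(p)$ iff $p$ fails at $a'$ in $\Dmc'$.

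The construction uses a single shared \emph{universal element} $v$ that carries every concept name in $\Sigma\cap\NC$ and has an $r$-self-loop for every $r\in\Sigma\cap\NR$, so that any pointed database $\Sigma$-simulates into $v$. Recursively: $M(\top)=\emptyset$; $M(A)$ is the singleton whose root $u_A$ carries every $\Sigma$-concept except $A$ and has an $s$-edge to $v$ for every $s\in\Sigma\cap\NR$; $M(C\sqcap D)=M(C)\cup M(D)$; and $M(\exists r.C)$ is the singleton whose root $a$ carries every $\Sigma$-concept, has an $s$-edge to $v$ for every $s\in(\Sigma\cap\NR)\setminus\{r\}$, and has one $r$-edge into the root of each element of $M(C)$, with those elements embedded below. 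The crucial design choice is that $a$ has \emph{no} $r$-edge to $v$: this ensures that any $r$-successor of $a'$ must be mapped by any simulation into $M(\exists r.C)$ to the root of some $M(C)$-element, which by induction is a witness of ``$C$ fails''.

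Correctness is proved by induction on $p$. For the ``only if'' direction, one checks directly from the construction that $p$ fails at the root of every element of $M(p)$ (the existential case uses the inductive hypothesis together with the fact that $a$'s only $r$-successors are roots of $M(C)$-elements); preservation of \EL-concepts under simulations, given by Lemma~\ref{lem:char-containment}, then transfers the failure of $p$ at the root of $M'$ back to $a'$. For the ``if'' direction, assuming $p$ fails at $a'$, one builds the simulation into an appropriate $M'\in M(p)$ by mapping every element of $\Dmc'$ not forced by the tree skeleton of $M'$ to the universal $v$, using the inductive hypothesis to obtain simulations of the relevant $r$-successors.

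For the size bound, each subconcept of $q^\Sigma$ contributes $O(|\Sigma|)$ new assertions (concept labels at one node plus role edges to $v$ and to the embedded sub-duals), while $v$ is shared throughout, yielding $\|M\|=O(|\Sigma|\cdot\|q\|)$, comfortably below $3\cdot|\Sigma|\cdot\|q\|^2$; the construction is clearly polynomial-time. The moreover claim is then immediate: if $\Dmc_q$ mentions $a_q$ in a single $\Sigma$-assertion, then $q^\Sigma$ has exactly one top-level conjunct, either an atomic $A\in\Sigma$ or an existential $\exists r.C$ with $r\in\Sigma$, and the recursion outputs a singleton in both cases. The main obstacle is the existential step of correctness: one must verify that omitting the $r(a,v)$ edge truly blocks every spurious simulation, which requires a careful invariant tying each embedded sub-dual back to the failure of $C$ at its root.
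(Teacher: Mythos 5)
Your construction is correct and is essentially the paper's: the paper builds the same dual in a single pass over the (more generally, DAG-shaped) database $\Dmc_q$ --- a universal sink $b^\top$, one dual element $\langle b,\alpha\rangle$ per $\Sigma$-assertion $\alpha$ mentioning $b$, with the $r$-edge to the sink omitted exactly where you omit it --- and proves the two directions of the duality by induction on codepth, which matches your structural recursion on the concept. The sharing of the universal element and of sub-duals that you exploit is also what gives the paper its (slightly coarser) $3\cdot|\Sigma|\cdot||q||^2$ bound, so there is no gap.
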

The notion of simulation duals is of independent
interest and we develop it further in the appendix.  We show that
Theorem~\ref{thm:duals} generalizes from databases $\Dmc_q$ to all
pointed databases $(\Dmc,a)$ such that the directed graph induced by
the restriction of $\Dmc$ to the individuals reachable (in a directed
sense) from $a$ is a DAG.  Conversely, databases that are not of this
form do not have finite simulation duals. We find it interesting to
recall that DAG-shaped databases do in general not have finite homomorphism
duals~\cite{DBLP:journals/jct/NesetrilT00}.

% A database $\Dmc$ is \emph{tree-shaped} if the directed graph
% $G_\Dmc=(\mn{adom}(\Dmc),\{(u,v)\mid r(u,v)\in \Dmc)\})$ is a tree.
%
% \begin{restatable}{theorem}{thmduals} \label{thm:duals} Given a
%   tree-shaped pointed database $(\Dmc,a)$ and a finite signature
%   $\Sigma$, a finite $\Sigma$-simulation dual $M$ of $(\Dmc,a)$ 
%   can be
%   computed in polynomial time
%  and is of size
%   $||M||\leq 3\cdot |\Sigma| \cdot |\Dmc|$.
%   Moreover, if $\Dmc$ contains only a single $\Sigma$-assertion 
%   that mentions~$a$, then $M$ is a singleton.
% \end{restatable}
%
% We are interested in simulation duals because they are of polynomial
% size when the original database is tree-shaped.  In contrast,
% (non-pointed) homomorphism duals of tree-shaped databases may become
% exponentially large \cite{NesetrilTardiff}. 
% 
% 
% We believe simulation duals to be an interesting notion and as an
% aside, we develop it a bit further in the appendix. In particular, we
% show that a finite simulation dual of $(\Dmc,a)$ exists if and only if
% the restriction of the directed graph $G_\Dmc$ to those individuals
% that are reachable (in a directed sense) from $a$ is a DAG. For DAGs,
% however, our construction does not yield polynomial size duals and we
% conjecture that this is unavoidable. We find it interesting to recall
% that DAGs have no homomorphism duals \cite{NesetrilTradif}.
% 
\smallskip

Using Theorem~\ref{thm:duals}, we now prove Point~2 of
Theorem~\ref{thm:notefficient}. Points~1 and~3 are proved in
the appendix.
\begin{proof}
  To highlight the intuitions, we leave out some minor technical
  details that are provided in the appendix.  Assume to the contrary
  of what we aim to show that there is a sample-efficient PAC learning
  algorithm that produces a most general fitting ELQ, if it exists,
  with associated polynomial function
  $m \colon \mathbb{R}^2\times \mathbb{N}^4$ as in
  Definition~\ref{def:efflearn}. As target ELQs $q_T$, we use concepts
  $C_i$ where $C_0 = \top$ and
  $C_i = \exists r. (A \sqcap B \sqcap C_{i - 1})$. Thus, $C_i$ is an
  $r$-path of length~$i$ in which every non-root node is labeled with
  $A$ and~$B$.

  Choose $\Sigma = \{A, B, r\}$, $\delta = \epsilon = 0.5$, and $n$
  large enough so that
  $2^n > 2m(1/\delta,1/\epsilon,0,|\Sigma|,3n, 3 \cdot |\Sigma| \cdot
  ||C_n||^2)$. % where $p(n)=3 \cdot |\Sigma| \cdot ||C_n||$.
  Further
  choose $q_T=C_n$. 

    We next construct negative examples; positive examples are not
    used. Define a set of ELQs $S= S_n$ where
    $$
         S_0 = \{ \top\}\ \ \ S_i = \{ \exists r. (\alpha \sqcap
         C) \mid C \in S_{i - 1}, \alpha \in \{A, B\}\}.
         $$
  %   \begin{align*}
  % %
%     S_0 &= \{ \top\},\\
  %
%      S_i &= \{ \exists r. (\alpha \sqcap C) \mid C \in S_{i - 1}, \alpha \in \{A, B\}\}.
  %
 %   \end{align*}
    %
         Note that the ELQs in $S$ resemble $q_T$ except that every
         node is labeled with only one of the concept names $A,B$.
         Now consider any $q\in S$.  Clearly, $q_T \sqsubseteq q$.
         Moreover, the pointed database $(\Dmc_q,a_q)$ contains a single assertion that mentions $a_q$. By
         Theorem~\ref{thm:duals}, $q$ has a singleton
         $\Sigma$-simulation dual $\{(\Dmc_q',a_q')\}$ with
         $||\Dmc_q'|| \leq 3 \cdot |\Sigma| \cdot ||C_n||^2$. We shall
         use these duals as negative examples.

         The two crucial properties of $S$ are that for all
         $q \in S$,
    \begin{enumerate}
      
    \item $q$ is the most general ELQ that fits $(\Dmc'_q, a'_q,-)$;
%       is
%       a negative example for;

    \item for all $T \subseteq S$, $q \notin T$ implies
      $\bigsqcap_{p \in T} p \not\sqsubseteq q$.

    \end{enumerate}
    %
    % Also note that $q_T \sqsubseteq q$ and thus, by
    % Lemma~\ref{lem:char-containment},
    % $(\Dmc_q,a_q)\preceq (\Dmc_{q_T},a_{q_T})$. It follows from the
    % definition of duals that
    % $(\Dmc_{q_T},a_{q_T}) \not\preceq (\Dmc'_q, a'_q)$, thus
    % $a'_q \notin q_T(\Dmc'_q)$ by Lemma~\ref{lem:char-containment},
    % and $(\Dmc'_q, a'_q)$ is a negative example for $q_T$.
    % AAA
    By Point~1 and since $q_T \sqsubseteq q$, each $(\Dmc_q',a_q')$
%     $q \in S$ 
    is also a
    negative example for $q_T$.
   
    Let the probability distribution $P$ assign probability
    $\frac{1}{2^n}$ to all $(\Dmc'_q, a'_q)$ with $q \in S$ and
    probability $0$ to all other pointed databases.
    Now assume that the algorithm is started on a collection of
    $m(1/\delta,1/\epsilon,0,|\Sigma|,3n, 3 \cdot |\Sigma| \cdot
    ||C_n||^2)$ labeled data examples $E$ drawn according
    to~$P$. % Let $E'$ be the set of unique negative examples that occur
% in~$E$. % {\color{red}collections ARE sets?!}
%
It follows from Point~1 that $q_H = \bigsqcap_{(\Dmc'_q, a'_q) \in E}
q$ is the most general ELQ that fits $E$. Thus, (an ELQ equivalent to)
$q_H$ is output by the algorithm.
% AAA
% We first argue that $E$ has a most general fitting ELQ, which is thus
% output by the algorithm as the hypothesis $q_H$.  This ELQ is
% $q_H \equiv \bigsqcap_{(\Dmc'_q, a'_q) \in E} q$.  In fact, the
% definition of duals implies that
% $(\Dmc_q,a_C) \not\preceq (\Dmc'_q,a_q)$ for all $q\in S$,
% % $(\Dmc_C,c_C) \in E^-$, 
% and thus $q_H$ is a fitting for~$E$. Now let $q$ be any fitting
% for~$E$. We have to show that $q\sqsubseteq q_H$. 
% % $q_H \rightarrow q$.  
% To this end, it clearly suffices to show that $q\sqsubseteq p$
% for all $(\Dmc_p',a_p') \in E$. But this follows from the definition of
% duals, the fact that $q$ fits $E$, % that is, $(\Dmc_q,d_q)
% % \not\preceq (\Dmc_p',a_p')$,
% and Lemma~\ref{lem:char-containment}.

To obtain a contradiction, it suffices to show that with probability
$1 - \delta$, we have $\mn{error}_{P, q_T}(q_H) > \epsilon$. We
argue that, in fact, $q_H$ violates all (negative) data examples that are
not in the sample~$E$, that is, $a_q \in q_H(\Dmc_p)$ for all
$p \in S$ with $(\Dmc_p', a_p') \notin E$.
%
% We actually show that this
% holds with probability $1$.  For this, in turn, it is enough to show
% that $a_q \notin q_H(\Dmc_p)$ for all $p \in S$ with
% $(\Dmc_p', a_p') \notin E$. % , that is,
% % $(\Dmc_{q_H},a_{q_H}) \preceq (\Dmc_p, a_p)$
The definition of $P$
and choice of $n$ then yield that with probability~1,
$\mn{error}_{P, q_T}(q_H) = \frac{|S| - |E|}{|S|} > \frac{1}{2}$.

Thus consider any $p \in S$ such that $(\Dmc_p', a_p') \notin E$.
It follows from Point~2 that $q_H \not\sqsubseteq p$ and the definition
of duals may now be used to derive $a_p' \in q_H(\Dmc_p')$ as desired.
% 
% We
% have chosen the set $S$ so that the queries in it are incomparable
% regarding query containment in a strong sense: for any $T \subseteq S$
% and $q \in S$, $\bigsqcap_{q' \in T} q' \sqsubseteq q$ iff $q \in
% T$. It follows that $q_H \not \sqsubseteq p$ and thus
% $(\Dmc_p,a_p) \not\preceq (\Dmc_{q_H},a_{q_H})$. The definition of
% duals then yields $(\Dmc_{q_H},a_{q_H}) \preceq (\Dmc_p, a_p)$,
% implying $a_q \in q_H(\Dmc_p)$ as desired.
%
% Let $E^-$ be the set of unique negative examples that occur in
% $E$. Then $|E^-| \leq m(\epsilon, \delta, n)$ and furthermore
% $|E^-| < \frac{|S|}{2}$ by choice of $n$.  The fact that $q_H$ is a
% fitting for $E$ implies that $q_H \not\to (\Dmc_C, c_C)$ for all
% $(\Dmc_C, c_C) \in E^-$ and by definition of duals, this yields
% $C \to q_H$ and thus $q_H \sqsubseteq C$.  More specifically, since
% $q_H$ is a strongly most general fitting for~$E$,
% $q_H$ must be the most general concept such that $q_H \sqsubseteq C$
% for all $(\Dmc_C, c_C)\in E^-$.  Therefore,
% $q_H \equiv \bigsqcap_{(\Dmc_C, c_C) \in E^-} C$.
%
% Now consider any $D \in S$ such that $(\Dmc_D, c_D) \notin E^-$.
% Since by construction $C_1 \not \sqsubseteq C_2$ for all distinct
% $C_1, C_2 \in S$, it follows that $q_H \not \sqsubseteq D$.  Hence,
% $D \not\to q_H$ and, again using the property of duals,
% $q_H \to (\Dmc_D, c_D)$.
%
% Therefore, $q_H \to (\Dmc_D, c_D)$ for all negative examples $(\Dmc_D, c_D) \notin E^-$.
% By definition of $P$ consequently $\mn{error}_{P, q_T}(q_H) = \frac{|S| - |E^-|}{|S|} > \frac{1}{2}$.
\end{proof}

\section{Refinement Operators}

We discuss fitting algorithms based on refinement operators, used in
implemented systems such as ELTL, and show that the generalization
abilities of such algorithms subtly depend on the exact operator (and
strategy) used.

% An important class of fitting algorithms is based on refinement
% operators~\cite{DBLP:journals/ml/LehmannH10}, which we introduce next.
Let $(\Lmc,\Qmc)$ be an OMQ language. A \emph{(downward) refinement}
of a query $q \in \Qmc$ w.r.t.\ an \Lmc-ontology $\Omc$ is any
$p \in \Qmc$ such that $\Omc \models p \qcontains q$ and
$\Omc\not\models q \qcontains p$. A \emph{(downward) refinement
  operator} for $(\Lmc,\Qmc)$ is a function $\rho$ that associates
every $q \in Q_\Sigma$, \Lmc-ontology \Omc, and finite signature
$\Sigma$ with a set $\rho(q,\Omc,\Sigma)$ of downward refinements
$p \in \Qmc_\Sigma$ of $q$ w.r.t.~\Omc. The operator $\rho$ is
\emph{ideal} if it is finite and complete where $\rho$ is
\begin{enumerate}

\item \emph{finite} if $\rho(q,\Omc,\Sigma)$ is finite for all $q$,
  \Omc, and finite $\Sigma$, and

\item \emph{complete} if for all finite signatures $\Sigma$ and all
  $q,p \in \Qmc_\Sigma$, $\Omc \models p \qcontains q$ implies that
  there is a finite \emph{$\rho,\Omc,\Sigma$-refinement sequence} from
  $q$ to $p$, that is, a sequence of queries $q_1,\dots,q_n$ such that
  $q=q_1$, $q_{i+1} \in \rho(q_i,\Omc,\Sigma)$ for $1 \leq i < n$, and
  $\Omc \models q_n \equiv p$.

\end{enumerate}
When \Omc is empty, we write $\rho(q,\Sigma)$ in place of
$\rho(q,\Omc,\Sigma)$.

% We drop the signature $\Sigma$ when it is clear from the context,
% then simply speaking of refinements and writing $\rho(q,\Omc)$.

% Refinement-based algorithms are used in implemented systems such as
% ELTL.
For $(\EL,\text{ELQ})$ and thus also for $(\ELHdr,\text{ELQ})$,
it is known that no ideal refinement operator
exists~\cite{KriegelPhD}. This problem can be overcome by making use
of Proposition~\ref{prop:tbox} and employing an ideal refinement
operator for ELQs without ontologies, which does
exist~\cite{DBLP:conf/ilp/LehmannH09}. But also these refinement
operators are not without problems.  It was observed in
\cite{DBLP:conf/dlog/Kriegel21} that for any such operator,
non-elementarily long refinement sequences exist, potentially
impairing the practical use of such operators. We somewhat relativize
this by the following observation. A refinement operator $\rho$ for
$(\Lmc,\Qmc)$ is \emph{$f$-depth bounded}, for
$f: \mathbb{N} \rightarrow \mathbb{N}$, if for all $q,p \in \Qmc$ and
all \Lmc-ontologies \Omc with $\Omc \models p \sqsubseteq q$, there
exists a $\rho,\Omc,\Sigma$-refinement sequence from $q$ to $p$ that
is of length at most $f(||p||)$.
\begin{theorem}
  \label{thm:expdepthbound}
  Let $(\Lmc,\Qmc)$ be an OMQ-language. If $(\Lmc,\Qmc)$ has an
  ideal refinement operator, then it has a $2^{O(n)}$-depth bounded
  ideal refinement operator.
\end{theorem}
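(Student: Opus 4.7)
The plan is to construct the desired operator $\rho'$ as a size-bounded restriction of the given ideal operator $\rho$. Concretely, I would define $\rho'(q,\Omc,\Sigma)$ to be the set of all $p \in \Qmc_\Sigma$ that are strict refinements of $q$ w.r.t.\ $\Omc$ and satisfy $||p|| \leq c \cdot ||q||$ for a fixed constant $c \geq 2$. Finiteness of $\rho'$ is then immediate: over a finite signature $\Sigma$ there are only finitely many syntactically distinct queries in $\Qmc_\Sigma$ of size at most $c||q||$, so $\rho'(q,\Omc,\Sigma)$ is finite.

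The main technical step is completeness of $\rho'$: given $q,p\in\Qmc_\Sigma$ with $\Omc\models p\sqsubseteq q$ and $\Omc\not\models q\sqsubseteq p$, I need to exhibit a $\rho'$-refinement sequence from $q$ to $p$. I would argue by induction on $||p||$: the base case $||p||\leq c||q||$ yields a one-step sequence, while for $||p|| > c||q||$ I would use the $\rho$-refinement sequence $q=q_0,q_1,\dots,q_m=p$ guaranteed by ideality of $\rho$ to extract an intermediate $q'$ that strictly refines $q$, satisfies $\Omc\models p\sqsubseteq q'$, and has $||q'||\leq c||q||$, reducing the problem to finding a sequence from $q'$ to $p$. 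The main obstacle is the extraction of such a $q'$: I would address this by choosing $c$ large enough so that every single-step $\rho$-refinement of $q$ has size at most $c||q||$ (taking $c$ to absorb the worst-case growth of one $\rho$-step), or, if $\rho$-refinements can have unbounded size relative to $q$, by enriching $\rho'$ with additional refinements derived from iterated $\rho$-steps that are guaranteed to respect the size bound.

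For the depth bound, I would argue that every query appearing along a $\rho'$-refinement sequence from $q$ to $p$ produced by the above construction has size at most $c||p||$, by induction on the sequence. Since the number of syntactically distinct queries in $\Qmc_\Sigma$ of size at most $c||p||$ is bounded by $2^{O(||p||)}$, any $\rho'$-refinement sequence consisting of distinct queries has length at most $2^{O(||p||)}$, giving the desired depth bound.
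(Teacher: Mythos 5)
There is a genuine gap, and it sits exactly where you flag ``the main obstacle'': the extraction of an intermediate $q'$ that strictly refines $q$, satisfies $\Omc \models p \qcontains q'$, and has $||q'|| \leq c\,||q||$. Neither of your two fallbacks closes it. For the first, ideality of the given operator $\rho$ only guarantees that each set $\rho(q,\Omc,\Sigma)$ is \emph{finite}; it does not provide a uniform multiplicative constant $c$ bounding the size of a one-step $\rho$-refinement in terms of $||q||$ (the bound may depend on $q$ itself, e.g.\ be exponential in $||q||$). For the second, iterating $\rho$-steps only moves you further down the refinement order, so there is no reason an iterated step ever lands back inside the size bound $c\,||q||$ while still lying above $p$. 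In general, \emph{every} strict refinement of $q$ that is still above $p$ may be much larger than $q$, in which case your purely size-based operator $\rho'$ is simply incomplete: no element of $\rho'(q,\Omc,\Sigma)$ lies above $p$, so no $\rho'$-sequence starting at $q$ can end at (a query equivalent to) $p$. The analogous size-based operator $\rho_1$ for ELQs is complete only because of a nontrivial structural fact about the \EL subsumption lattice (minimal downward neighbors have size at most $2||q||+1$); nothing of the sort is available for an abstract OMQ language.

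The fix, which is what the paper does, is to keep the original operator's output inside the new one: $\rho'(q,\Omc,\Sigma) = \rho(q,\Omc,\Sigma) \cup \{p \in \Qmc_\Sigma \mid \Omc \models p \qcontains q,\ \Omc \not\models q \qcontains p,\ ||p|| \leq ||q||\}$. Completeness is then inherited from $\rho$ via the observation that the output of any ideal operator forms a downward \emph{frontier}: whenever $p$ is a strict refinement of $q_i$, some $q_{i+1} \in \rho(q_i,\Omc,\Sigma)$ satisfies $\Omc \models p \qcontains q_{i+1}$, so one can always take a step that stays above the target. The depth bound then follows from your counting argument applied on the other side: along such a chain the $q_i$ are pairwise non-equivalent, so at most $2^{O(||p||)}$ of them can have size below $||p||$, and as soon as some $q_k$ has $||q_k|| \geq ||p||$, the size-bounded part of $\rho'$ permits a direct jump to $p$. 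Your count of queries of bounded size is the right ingredient for the depth bound, but it must be applied to the (possibly large) frontier steps rather than to a sequence whose existence your construction does not establish.
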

The depth bounded operator in Theorem~\ref{thm:expdepthbound} is
obtained by starting with some operator $\rho$ and adding to each
$\rho(q,\Omc,\Sigma)$ all $p \in \Qmc_\Sigma$ such that
$\Omc \models p \qcontains q$, $\Omc\not\models q \qcontains p$, and
$||p|| \leq ||q||$. Note that the size of queries is used in an
essential way, as in Occam algorithms.

\smallskip
A refinement operator by itself is not a fitting algorithm 
as one also needs a strategy for applying the operator.  We 
use breadth-first search as a simple yet natural such strategy.
% and show that 
% depending on the refinement operator used, one may or may not obtain a 
% sample-efficient PAC learning algorithm. 

% \newcommand{\numat}[1]{\##1}
% \newcommand{\numat}[1]{\#(#1)}
% \newcommand{\numat}[1]{\langle#1\rangle}
% \newcommand{\numat}[1]{\llbracket#1\rrbracket}
\newcommand{\numat}[1]{||#1||}

% In this part, we assume that a finite signature $\Sigma$ has been
% fixed over which the target query and data examples are formulated.
% In the following definitions, all ELQs are formulated over $\Sigma$.
%
% We assume that the signature $\Sigma$ of the target query is known and
% that, 
We consider two related refinement operators $\rho_1$ and $\rho_2$ for
ELQs. The definition of both operators refers to (small) query size,
inspired by Occam algorithms.
%
% Both $\rho_1$ and $\rho_2$ are based on the size measure $\numat{q}$
% of an ELQ~$q$, by which we mean the number of occurrences of concept
% and role names in $q$. It is preferable to work with $\numat{q}$
% rather than with $||q||$ here as the latter would force us to make
% precise the encoding of ELQs as words and introduce $\mn{log}|\Sigma|$
% factors in many places.
% 
Let~$q$ be an ELQ. Then $\rho_1(q,\Sigma)$ is the set of all
$p \in \text{ELQ}_\Sigma$ such that $p \sqsubseteq q$,
$q \not\sqsubseteq p$, and $\numat{p} \leq 2\numat{q}+1$. The operator
$\rho_2$ is defined like $\rho_1$ except that we include in
$\rho_2(q,\Sigma)$ only ELQs
$p$ that are a \emph{(downward) neighbor} of $q$,
that is, for all ELQs~$p'$, $p \qcontains p' \qcontains q$ implies
$p' \qcontains p$ or $q \qcontains p'$. The following lemma shows that
$\rho_2(q,\Sigma)$ actually contains \emph{all} neighbors of $q$ with
$\mn{sig}(q) \subseteq \Sigma$, up to
equivalence. % In the terminology of
% \cite{Balder}, it is thus a \emph{downward frontier} of~$q$.
An ELQ $q$ is \emph{minimal} if there is no ELQ $p$ such that
$\numat{p} < \numat{q}$ and $p \equiv q$.
% For an
% ELQ $q$, we use $\mn{min}(q)$ to denote the (unique up to isomorphism)
% ELQ $q'$ that is equivalent to $q$ and of minimal
% size.\footnote{Alternatively, $\mn{min}(q)$ can be defined as the
%   homomorphism core of $q$.} {\color{red} J: minimality notion depends on
% representation which is not specified}
%
\begin{restatable}{lemma}{lemsmallneighbor}
    \label{lem:smallneighbor}
  For every ELQ $q$ and minimal downward neighbor $p$ of $q$, we have
  $\numat{p} \leq 2\numat{q}+1$.
\end{restatable}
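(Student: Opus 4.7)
The plan is to show, for every downward neighbor $p$ of $q$, that $p$ is equivalent to some ELQ $\hat{p}$ with $\numat{\hat{p}} \leq 2\numat{q}+1$; minimality of $p$ then yields $\numat{p} \leq \numat{\hat{p}} \leq 2\numat{q}+1$. I would construct $\hat{p}$ in the form $\hat{p} := q \sqcap \pi$, where $\pi$ is a \emph{separator}: an ELQ with $p \sqsubseteq \pi$ and $q \not\sqsubseteq \pi$. Such a $\pi$ makes $\hat{p} \sqsubsetneq q$ (the strictness comes from $q \not\sqsubseteq \pi$, hence $q \not\sqsubseteq q \sqcap \pi$) and $p \sqsubseteq \hat{p}$ (from $p \sqsubseteq q$ and $p \sqsubseteq \pi$). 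The downward-neighbor property of $p$ then forces $\hat{p} \equiv p$, since $\hat{p}$ sits strictly below $q$ and cannot be equivalent to $q$.

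It therefore suffices to build a separator $\pi$ with $\numat{\pi} \leq \numat{q}+1$. By Lemma~\ref{lem:char-containment}, $q \not\sqsubseteq p$ translates to $(\Dmc_p,a_p) \not\preceq (\Dmc_q,a_q)$. Assuming $T_q$ is in minimal tree form, write $q = A_1 \sqcap \cdots \sqcap A_m \sqcap \exists r_1.c_1 \sqcap \cdots \sqcap \exists r_k.c_k$. I construct $\pi$ by induction on $T_q$, tracking the failure of simulation. If the root of $T_p$ carries a concept name $A \notin \{A_1,\ldots,A_m\}$, set $\pi := A$; if it has an $s$-successor for a role $s \notin \{r_1,\ldots,r_k\}$, set $\pi := \exists s.\top$. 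Otherwise, there must be some $r$-successor $b$ of $a_p$ (with $r = r_i$) that fails to simulate into \emph{any} $c_j$ with $r_j = r$; recursively obtain separators $\pi_j$ of $c_j$ from $(T_p,b)$ for each such $j$, and set $\pi := \exists r.\bigsqcap_{j : r_j = r} \pi_j$. A direct check shows that the resulting $\pi$ does satisfy $p \sqsubseteq \pi$ (witnessed at $b$) and $q \not\sqsubseteq \pi$ (no $r_j$-successor of $a_q$ can satisfy $\bigsqcap_j \pi_j$).

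For the size bound I would use the inductive invariant $\numat{\pi_j} \leq \numat{c_j}+1$. This yields $\numat{\pi} \leq 1 + \sum_{j : r_j = r}(\numat{c_j}+1)$, which is bounded by $\numat{q}+1$ because the summands are precisely the contributions of the conjuncts $\exists r_j.c_j$ of $q$ with $r_j = r$. Consequently $\numat{\hat{p}} = \numat{q \sqcap \pi} \leq \numat{q} + \numat{\pi} \leq 2\numat{q}+1$. The main obstacle is this size bookkeeping in the inductive step: one must recurse only on children $c_j$ that share the role $r$ of the offending witness $b$, so that the cost of the conjunction is absorbed exactly by the matching $\exists r_j.c_j$ conjuncts of $q$; and one must verify that the base cases as well as the exact symbol-counting convention (for $\sqcap$, $\exists$, and $\top$) do not introduce any additive slack beyond the declared $+1$.
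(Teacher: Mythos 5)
Your proof is correct, but it is organized quite differently from the paper's. Both arguments rest on the same two pillars: the downward-neighbor property forces any query strictly between $p$ and $q$ to be equivalent to $p$, and a failure of simulation into the tree-shaped $\Dmc_q$ admits a witness of size at most $\numat{q}+1$. The paper realizes the second pillar as a separate lemma (a minimal non-simulating sub-database of $\Dmc_p$ has size at most $|\Dmc_q^{\downarrow}|+1$) and deploys it at an \emph{interior} node: it fixes a functional simulation $S$ from $\Dmc_q$ to $\Dmc_p$, locates a defect of $S^{-}$ chosen at maximal depth, and grafts the small non-simulating fragment of $\Dmc_p$ onto $\Dmc_q$ at that node; this requires a three-way case analysis (missing concept name, missing role, subtree defect), a separate lemma to certify that each graft strictly specializes the minimal $q$, and an appeal to the minimality of $p$ in the middle of the argument to establish the non-simulation precondition at the interior node. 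You instead work entirely at the root: since $q\not\sqsubseteq p$ gives $(\Dmc_p,a_p)\not\preceq(\Dmc_q,a_q)$ directly, your recursive separator construction is in effect a re-derivation of the paper's size lemma for this one pair, and conjoining $\pi$ at the root makes strictness ($q\not\sqsubseteq q\sqcap\pi$) and the containment $p\sqsubseteq q\sqcap\pi$ immediate, with no case analysis on where the defect sits and no functional simulation. Your route is shorter and more uniform; the paper's buys a structurally local description of the neighbor (a single insertion into $\Dmc_q$) and reuses auxiliary lemmas stated for other purposes. The one point you flagged — the size convention — does work out under the paper's own accounting $\numat{q}=|\Dmc_q|$: the recursive case costs $1+\sum_{j:r_j=r}(\numat{c_j}+1)\leq\numat{q}+1$ exactly as you say, and merging roots in $q\sqcap\pi$ cannot increase the count beyond $\numat{q}+\numat{\pi}$.
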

%
%{\color{blue}The above is established by Jean's proof from the draft.}
Both $\rho_1$ and $\rho_2$ can be computed by brute force. For more
elaborate approaches to computing $\rho_2$, see
\cite{DBLP:conf/dlog/Kriegel21} 
%\cite{DBLP:conf/kr/BaaderKNP18,DBLP:conf/dlog/Kriegel21} 
where
downward neighbors of ELQs are studied in detail.
\begin{restatable}{lemma}{lemrhosideal}
  \label{lem:rhosideal}
  $\rho_1$ and $\rho_2$ are ideal refinement operators for ELQ.
\end{restatable}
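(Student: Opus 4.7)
The plan is to verify both conditions of idealness (finiteness and completeness) for each operator, and to reduce the completeness of $\rho_1$ to that of $\rho_2$.

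\textbf{Finiteness.} Both $\rho_1(q,\Sigma)$ and $\rho_2(q,\Sigma)$ are sets of ELQs over the finite signature $\Sigma$ of size at most $2\|q\|+1$. Up to $\equiv$, there are only finitely many such ELQs, since each has a canonical minimal representative that can be viewed as a tree of bounded depth and branching labeled by subsets of $\Sigma$. Hence both sets are finite (modulo $\equiv$), as required.

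\textbf{Reduction for completeness.} A direct inspection of the two definitions shows $\rho_2(q,\Sigma) \subseteq \rho_1(q,\Sigma)$: every downward neighbor satisfying the size bound is, in particular, a downward refinement satisfying the size bound. Consequently, every $\rho_2,\Sigma$-refinement sequence is a $\rho_1,\Sigma$-refinement sequence, so it suffices to prove completeness of $\rho_2$.

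\textbf{Completeness of $\rho_2$.} Given $p,q \in \text{ELQ}_\Sigma$ with $p \sqsubseteq q$ and $q \not\equiv p$, I would construct a sequence $q = q_0, q_1, \ldots, q_n$ with $q_n \equiv p$ inductively as follows. If $q_i \not\equiv p$, let $X_i$ be the (nonempty) set of $\equiv$-classes $[r]$ with $r \in \text{ELQ}_\Sigma$ and $p \sqsubseteq r \sqsubsetneq q_i$. Pick $q_{i+1}$ to be a minimal representative of a $\sqsubseteq$-maximal class in $X_i$. By maximality, $q_{i+1}$ is a downward neighbor of $q_i$ with $p \sqsubseteq q_{i+1}$, and Lemma~\ref{lem:smallneighbor} gives $\|q_{i+1}\| \leq 2\|q_i\|+1$. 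Hence $q_{i+1} \in \rho_2(q_i,\Sigma)$.

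\textbf{Main obstacle: existence of maximal elements and termination.} Both reduce to the claim that for every ELQ $p$ over $\Sigma$, the set $\{r \in \text{ELQ}_\Sigma : p \sqsubseteq r\}/{\equiv}$ is finite. I would establish this using the simulation characterization of Lemma~\ref{lem:char-containment}: $p \sqsubseteq r$ is equivalent to the existence of a $\Sigma$-simulation from $\Dmc_r$ to $(\Dmc_p,a_p)$. Therefore every minimal representative of such a class is obtainable as a simulation-minimal quotient of a sub-tree of $\Dmc_p$, and there are only finitely many such quotients (up to isomorphism). Finiteness of the interval then implies that each $X_i$ has a maximal element and, since the $q_i$ are pairwise inequivalent along the sequence, the construction terminates after finitely many steps with $q_n \equiv p$. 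This finite interval property is the one nonobvious ingredient; all other steps follow by definition chasing together with Lemma~\ref{lem:smallneighbor}.
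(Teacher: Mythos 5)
Your decomposition is the same as the paper's: finiteness is immediate from the size bound, completeness of $\rho_1$ reduces to completeness of $\rho_2$ via $\rho_2(q,\Sigma)\subseteq\rho_1(q,\Sigma)$, and completeness of $\rho_2$ comes from the existence of neighbor chains together with Lemma~\ref{lem:smallneighbor}. The difference is in how the existence of finite neighbor chains is obtained: the paper simply cites the known structure of the \EL subsumption hierarchy (Kriegel's result that any two $\Sigma$-comparable ELQs are connected by a finite specialization sequence, plus the Jordan--Dedekind chain condition), whereas you re-derive it from the claim that the upward cone $\{r\in\text{ELQ}_\Sigma : p\sqsubseteq r\}/{\equiv}$ is finite. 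That claim does indeed yield both the existence of maximal elements in each $X_i$ and termination, so your plan is sound and more self-contained than the paper's.

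However, your justification of the finite-cone claim is wrong. It is not true that every generalization of $p$ is (equivalent to) a simulation-minimal quotient of a sub-tree of $\Dmc_p$. Take $p=\exists r.(A\sqcap B)$, so $\Dmc_p=\{r(a,b),A(b),B(b)\}$. Then $r_0=\exists r.A\sqcap\exists r.B$ satisfies $p\sqsubseteq r_0$ and $r_0\not\sqsubseteq p$, yet $\Dmc_{r_0}$ has two successors of the root with incomparable labels, while every sub-tree of $\Dmc_p$, and every quotient of one, has at most one; so $r_0$ is not equivalent to any of them. In general the map witnessing $(\Dmc_r,a_r)\preceq(\Dmc_p,a_p)$ need not be injective, and distinct nodes of $\Dmc_r$ collapsing onto the same node of $\Dmc_p$ can produce generalizations that are not "inside" $\Dmc_p$ in your sense. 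The claim itself is nevertheless true and is easy to repair: a simulation from the tree $\Dmc_r$ into the finite tree $\Dmc_p$ mapping root to root forces the quantifier depth of $r$ to be at most that of $p$, and over a finite signature $\Sigma$ there are, up to equivalence, only finitely many ELQs of bounded quantifier depth (straightforward induction on the depth). With that substitution your argument goes through; one further small point worth checking is that the neighbor condition in the definition of $\rho_2$ quantifies over all ELQs $p'$, not just those over $\Sigma$, but any $p'$ with $q_{i+1}\sqsubseteq p'$ satisfies $\mn{sig}(p')\subseteq\mn{sig}(q_{i+1})\subseteq\Sigma$, so maximality in $X_i$ suffices.
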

We next give more details on what we mean by breadth-first search.
Started on a collection of labeled data examples $E$, the algorithm
maintains a set $M$ of candidate ELQs that fit all positive examples
$E^+$ in $E$, beginning with $M = \{\top\}$ and proceeding in
rounds. If any ELQ $q$ in $M$ fits $E$, then we return such a fitting $q$ with
$||q||$ smallest. Otherwise, the current set $M$ is replaced with the
set of all ELQs from $\bigcup_{q \in M} \rho(q,\text{sig}(E))$ that
fit~$E^+$, and the next round begins. For $i \in \{1,2\}$, let
$\Amf_i$ be the version of this algorithm that uses refinement
operator $\rho_i$.
% with $\Sigma=\text{sig}(E)$. 
Although $\rho_1$
and $\rho_2$ are defined quite similarly, the behavior of the algorithms $\Amf_1$
and $\Amf_2$ differs.
\begin{restatable}{theorem}{thmaonepac}
  \label{thm:aonepac}
  $\Amf_1$ is a sample-efficient PAC learning algorithm, but $\Amf_2$ is not.
\end{restatable}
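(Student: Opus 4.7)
The theorem has two halves requiring different treatments. For $\Amf_1$, my plan is to show it is an Occam algorithm and apply the standard Occam-to-PAC reduction used in Theorem~\ref{thm:bfgen}. The main technical step is a doubling lemma: for every ELQ $q^*$ of size $n$ and signature $\Sigma \supseteq \mn{sig}(q^*)$, there is a $\rho_1$-refinement sequence $\top = q_0, q_1, \ldots, q_k = q^*$ of length $k = O(\log n)$ with $q^* \sqsubseteq q_r$ for all $r$. I would build this sequence by viewing $q^*$ as a rooted tree and letting $q_r$ be the ELQ corresponding to a root-containing sub-tree of $q^*$ whose size roughly doubles per round; the ``$||p|| \leq 2||q||+1$'' slack baked into $\rho_1$ is precisely what licenses each doubling step. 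Since $q^* \sqsubseteq q_r$ always, each $q_r$ fits $E^+$ (whenever $q^*$ does) and lies in $M_r$; in particular $q^* \in M_k$. Hence BFS terminates at some round $r^* \leq k = O(\log ||q^*||)$, and every query in $M_{r^*}$ has size at most $2^{r^*+1} = O(||q^*||) \leq O(||q_T||)$, giving the Occam property.

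For $\Amf_2$, my plan is to adapt the adversarial construction from the proof of Theorem~\ref{thm:notefficient}(2). Using the targets $q_T = C_n$ and, as negative examples, the simulation duals (via Theorem~\ref{thm:duals}) of the $2^n$ ELQs in the family $S$ from that proof, I would argue that at the first round $r^*$ at which $\Amf_2$ finds a fitting, the smallest fitting in $M_{r^*}$ is equivalent to $\bigsqcap_{(\Dmc'_q, a'_q) \in E} q$, i.e., the most general fitting of the sample. The reason is that $\rho_2$ is restricted to downward neighbors, so each step adds only a single minimal atomic piece and BFS explores refinements in strictly increasing order of subsumption-generality; hence the first fittings to appear are exactly the most general ones. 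Once this is established, the probability calculation from Theorem~\ref{thm:notefficient}(2) carries over without change: since $|E|$ is polynomial while $|S| = 2^n$, the hypothesis returned by $\Amf_2$ misses more than half of the support of the adversarial distribution, so its error exceeds $\epsilon$ with probability exceeding $1-\delta$.

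The main obstacle is the second direction. For $\Amf_1$, verifying that doubling sub-trees yield valid $\rho_1$-refinements is straightforward once one fixes a tree decomposition of $q^*$. For $\Amf_2$, however, $\rho_2$'s neighbor restriction may produce multiple incomparable smallest fittings at round $r^*$, and one must rule out that $\Amf_2$ (or any tie-breaking rule it uses) ``luckily'' outputs a fitting with better generalization than $\bigsqcap_{(\Dmc'_q, a'_q) \in E} q$. Handling this requires a careful analysis of the neighbor-chain structure starting from $\top$ on the specific adversarial input, and of how the ``fits $E^+$'' filter interacts with it, so as to characterize exactly which queries lie in $M_{r^*}$ at the termination round.
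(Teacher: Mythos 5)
Your plan for $\Amf_1$ follows the paper's strategy: establish the Occam property by combining (a) the per-round size bound $\numat{q}\le 2^i-1$ for $q\in M_i$, which is immediate from the definition of $\rho_1$, with (b) a logarithmic bound on the round at which a small fitting query is discovered. For (b) the paper does not build subtrees directly; it takes a $\Sigma$-specialization (neighbor) sequence from $\top$ to the small fitting $q$ -- whose existence is a cited lattice property -- and compresses it by alternating "jump to the last element of the chain whose size is still $\le 2\numat{q_{j-1}}+1$" with a single neighbor step, yielding a valid $\rho_1$-refinement sequence of length $\le 2\log s$. Your subtree-doubling construction would give the same conclusion, but it has a real wrinkle you gloss over: a root-containing subtree chain $\Dmc_1\subsetneq\Dmc_2\subseteq\Dmc_{q^*}$ need not be strictly decreasing under $\sqsubseteq$ (the added assertions can be absorbed by an existing branch, e.g.\ adding $\exists s.\top$ under one $r$-child when a sibling already has $\exists s.C$), and a non-strict step is simply not a member of $\rho_1(q_r,\Sigma)$. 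This is fixable by ordering the assertions carefully, but it is exactly the difficulty the paper's compressed-neighbor-chain argument avoids, since neighbor steps are strict by definition.

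The $\Amf_2$ half contains the genuine gap. The paper's proof establishes the clean statement that $\Amf_2$ \emph{always} outputs a most general fitting whenever one exists, and then invokes Theorem~\ref{thm:notefficient}(2) as a black box; the entire weight of the argument rests on the Jordan--Dedekind chain condition for the ELQ subsumption lattice (all $\Sigma$-specialization sequences between two comparable ELQs have the same length), which gives every ELQ a well-defined level and shows that $M_i$ consists precisely of the level-$i$ ELQs fitting $E^+$. Your key assertion -- "BFS explores refinements in strictly increasing order of subsumption-generality; hence the first fittings to appear are exactly the most general ones" -- is precisely what needs proof and is false without the chain condition: a priori, a strictly more general fitting $q^*$ could be reachable from $\top$ only by longer neighbor chains than some less general fitting $q$, in which case $q$ would be returned first. (Also, $\rho_2$-steps do not "add a single minimal atomic piece"; by Lemma~\ref{lem:smallneighbor} a neighbor can have size up to $2\numat{q}+1$, and the $k$-$1$-conj examples show neighbor chains of length $2^k$ for targets of size $O(k)$.) The tie-breaking concern you raise at the end is, by contrast, a non-issue once the chain condition is in place: any fitting appearing in the termination round $M_{r^*}$ has the same level as the most general fitting and is subsumed by it, hence equivalent to it. Finally, re-running the adversarial construction from scratch is an unnecessary detour -- once "outputs a most general fitting" is proved, Theorem~\ref{thm:notefficient}(2) applies verbatim.
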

To prove Theorem~\ref{thm:aonepac}, we show that $\Amf_1$ is an Occam
algorithm while $\Amf_2$ produces a most general fitting (if it exists),
which allows us to apply Theorem~\ref{thm:notefficient}.
%
%We next consider refinement operator $\rho_2$ together with the
% algorithm $A_2$ that also maintains a set $M$ of candidate ELQs that
% fit $E^+$, starting with $M= \{ \top \}$ and returning a fitting as
% soon as one is found. Unlike $A_1$, however, $A_2$ implements a 
% breadth-first strategy: in each round, the current set $M$ is replaced
% with the set of all ELQs from $\bigcup_{q \in M} \rho(q)$ that fit
% $E^+$. It is easy to see that $A_2$ is a fitting algorithm. It is,
% however, not a sample-efficient PAC learning algorithm.
% %
% \begin{restatable}{theorem}{thmAtwonotefficient}
%   \label{thm:Atwonotefficient}
%   $A_2$ is not a sample-efficient PAC learning algorithm.
% \end{restatable}
% %
% We prove Theorem~\ref{thm:Atwonotefficient} by showing that
% $A_2$ produces a most general fitting (if it exists) and applying
% Theorem~\ref{thm:mostgennotefficient}.

\medskip The above is intended to provide a case study of refinement
operators and their generalization abilities. Implemented
systems use refinement operators and strategies that are more complex
and include heuristics and optimizations. This makes it difficult to
analyze whether implemented refinement-based systems constitute a
sample-efficient PAC learner.

We comment on the ELTL system that we use in our experiments.  ELTL is
based on the refinement operator for $(\ELHdr,\text{ELQ})$ presented
in~\cite{DBLP:conf/ilp/LehmannH09}. That operator, however, admits
only \ELHdr ontologies of a rather restricted form: all CIs must be of
the form $A \sqsubseteq B$ with $A,B$ concept \emph{names}. Since no
ideal refinement operators for unrestricted $(\EL,\text{ELQ})$ exist
and ELTL does not eliminate ontologies in the spirit of
Proposition~\ref{prop:tbox}, it remains unclear whether and how ELTL
achieves completeness (i.e., finding a fitting whenever there is one).

\begin{figure*}[t!]
  \centering 
  \resizebox{\textwidth}{!}{\input{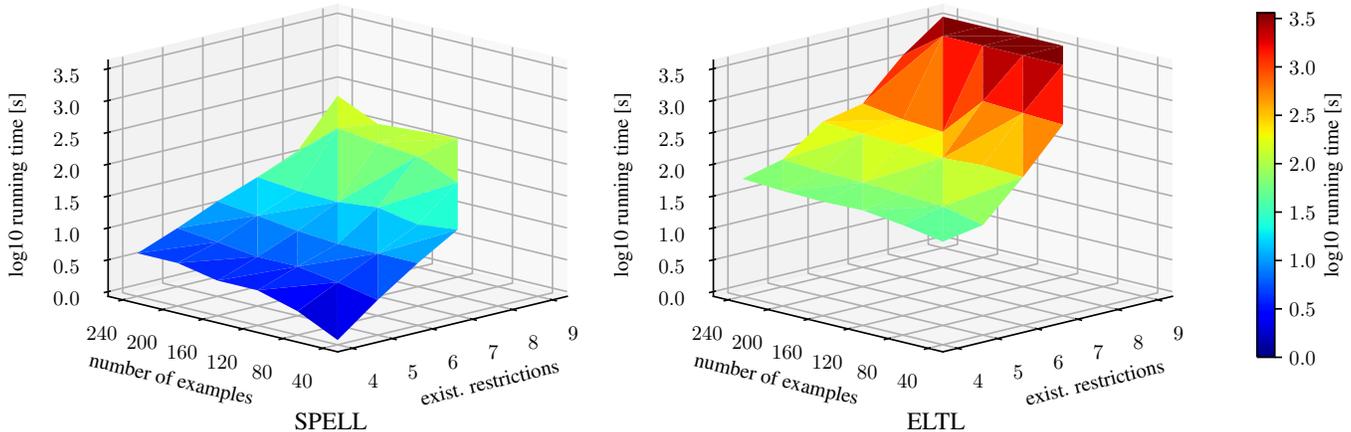}}
  \caption{Yago experiment, dark red area indicates timeout (60min)}
  \label{fig:yago-running-time}
  \vspace*{-3mm}
\end{figure*}

\section{The \systemname System}
\label{sect:elpl}

We implemented bounded fitting for the OMQ language
$(\ELHdr, \text{ELQ})$ in the system \systemname (for \emph{SAT-based
  PAC \EL concept Learner}).\footnote{Available at
  \url{https://github.com/spell-system/SPELL}.}
  \systemname takes as input a knowledge base in
OWL RDF/XML format that contains both an \ELHdr ontology $\Omc$ and a
collection $E$ of positive and negative examples, and it outputs an
ELQ represented as a SPARQL query.  \systemname is implemented in
Python~3 and uses the PySat
library %\footnote{https://pysathq.github.io/}
to interact with the
Glucose SAT
solver. %\footnote{https://www.labri.fr/perso/lsimon/research/glucose/}.
It provides integration into the SML-Bench benchmark
framework~\cite{DBLP:journals/semweb/WestphalBBJL19}. % {\color{red}What
% to do with that sentence?}

In the first step, \systemname removes the ontology \Omc by replacing
the given examples $E$ with $E_\Omc$ as per
Proposition~\ref{prop:tbox}. It then runs bounded fitting in the
variant where in each round~$n$, fitting ELQs with at most $n-1$
existential restrictions are considered (rather than fitting ELQs $q$ 
with $||q|| \leq n$). The existence of such a fitting is checked using
the SAT solver.
%, it decides
% % the \emph{$\exists$-restricted
% % fitting problem}, that is, it decides 
% whether there is a fitting ELQ with at
% most \mbox{$n-1$} existential restrictions. 
% %The $\exists$-restricted fitting
% This problem is \NPclass-complete for
% ELQs~\cite{mauricemaster} and 
% \systemname uses a SAT solver to decide it.
Also this variant of bounded
fitting results in a %an Occam algorithm, and consequently in a
sample-efficient PAC learning algorithm, with sample size
$O\big (\frac{1}{\epsilon} \cdot \log \big (\frac{1}{\epsilon} \big )
\cdot \log \big (\frac{1}{\delta} \big ) \cdot |\Sigma| \cdot ||q_T||
\big)$, see the appendix. We prefer this variant for implementation % despite the
% additional factor $|\Sigma|$ in the sample complexity compared to
% Theorem~\ref{thm:boundedfitispac},
because it admits a more natural reduction to SAT, described
next. % Full details are provided in the

From $E_\Omc$ and the bound $n$, we construct a propositional formula
$\varphi = \varphi_1 \land \varphi_2$ that is satisfiable if and only
if there is an ELQ $q$ over $\Sigma=\text{sig}(E_\Omc)$ with at most
$n - 1$ existential restrictions that fits $E_\Omc$. Indeed, any model
of $\varphi$ returned by the SAT solver uniquely represents a fitting
ELQ $q$.  More precisely, $\vp_1$ ensures that such a  model represents
\EL-concepts $C_1,\dots,C_n$ % over $\Sigma$,
% (with $n$ the current round of bounded fitting), 
where each $C_i$ only contains existential restrictions of the form
$\exists r.  C_j$ with $j>i$, and we take $q$ to be $C_1$.  We use
variables of the form $c_{i, A}$ to express that the concept name $A$
is a conjunct of $C_i$, and variables  $x_{j,r}$ and $y_{i,j}$ to
express that $\exists r. C_j$ is a conjunct of~$C_i$.  Then
$\varphi_2$ enforces that the represented ELQ fits $E_\Omc$.  Let
$\Dmc$ be the disjoint union of all databases that occur in an example
in $E_\Omc$. We use variables $s_{i, a}$, with $1\leq i \leq n$ and
$a \in \mn{adom}(\Dmc)$, to express that
$a \in C_i(\Dmc)$; the exact definition of $\varphi_2$ uses
simulations and relies on Lemma~\ref{lem:char-containment}.
% The clauses used to achieve this directly follow the semantics of
% $\EL$.
% The fact that $q$ fits $E_\Omc$ is then expressed as
% $\bigwedge_{(\Dmc',a,+) \in E_\Omc} s_{1, a} \land \bigwedge_{(\Dmc',a,-)
% \in E_\Omc} \neg s_{1, a}$. In total, the size of the constructed
% formula
The number of variables in $\varphi$
is $O\big(n^2 \cdot |\Dmc|\big)$, thus linear
in $|\Dmc|$.

We have implemented several improvements over this basic reduction
of which we describe two. The first
improvement is based on the simple observation that for computing a
fitting ELQ with $n - 1$ existential restrictions, for every example
$(\Dmc',a,\pm)\in E_\Omc$ it suffices to consider individuals
that can be reached via at most $n - 1$ role assertions from~$a$.
Moreover, we may restrict $\Sigma$ to symbols that occur in all
$n-1$-reachable parts of the positive examples.
The second improvement is based on the
observation that the search space for satisfying assignments of
$\varphi$ contains significant \emph{symmetries} as the same ELQ $q$
may be encoded by many different arrangements of concepts
$C_1, \ldots C_n$.  We add constraints to $\vp$ so that the number of
possible arrangements is reduced, breaking many
symmetries. For details see the appendix.

\section{Experimental Evaluation}

% All described benchmarks and evaluation data are available \textcolor{red}{HERE}.

We evaluate SPELL on several benchmarks\footnote{Available at \url{https://github.com/spell-system/benchmarks}.} and compare it to the ELTL component of the DL-Learner
system~\cite{DBLP:journals/ws/BuhmannLW16}. Existing benchmarks do not
suit our purpose as they aim at learning concepts that are formulated in more
expressive DLs of the \ALC family. As a consequence, a fitting \EL
concept almost never
exists. % (and even worse, no \EL concept achieves
% smaller error than $\top$).
This is the case, for example, in the often used Structured Machine
Learning Benchmark \cite{DBLP:journals/semweb/WestphalBBJL19}.  We
thus designed several new benchmarks leveraging various existing
knowledge bases, making sure that a fitting \EL concept always
exists. We hope that our benchmarks will provide a basis also for
future experimental evaluations of \EL learning systems.

\begin{table*}
  \centering\small 
  \begin{tabular}{rllllllllllllllll}
    \toprule 
%    & \multicolumn{15}{c}{Sample size} \\
    Sample Size & 5 & 10 & 15 & 20 & 25 & 30 & 35 & 40 & 45 & 50 & 55 & 60 & 65 & 70 & 75\\\midrule 
    ELTL &  0.77 & 0.78 & 0.85 & 0.85 & 0.86 & 0.89 & 0.90 & 0.96 & 0.96 & 0.96 & 0.96 & 0.98 & 0.98 & 0.98 & 0.98\\
    SPELL & 0.80 & 0.81 & 0.84 & 0.85 & 0.86 & 0.86 & 0.89 & 0.97 & 0.98 & 0.98 & 0.98 & 0.98 & 0.98 & 0.98 & 0.98\\
    \bottomrule 
  \end{tabular}
  \caption{Generalization experiment accuracies}
  \label{tab:generalizatio-yago-exp}
\end{table*}

\paragraph{Performance Evaluation.}
We carried out two experiments that aim at evaluating the performance
of SPELL. The main questions are: Which parameters have most impact on the
running time? And how does the running time compare to that of ELTL?
% %
% \begin{figure*}[t]
%   \centering
%   \resizebox{\textwidth}{!}{\input{yago-figure-2.pgf}}
%   \caption{Yago Experiment Results, timeout=30min \textcolor{red}{Final-ish design, data will be replaced with new benchmark run with higher timeout}}
%   \label{fig:yago-running-time}
% \end{figure*}
% %

The first experiment uses the Yago 4 knowledge base which combines the
concept classes of schema.org with data from
Wikidata~\cite{DBLP:conf/esws/TanonWS20}.  The smallest version of
Yago~4 is still huge and contains over 40 million assertions. We
extracted a fragment of 12 million assertions
assertions that focusses on movies and famous persons.  We then
systematically vary the number of labeled examples and the size of the
target ELQs. The latter take the form
$C_n=\exists \text{actor}. \bigsqcap^n_{i = 1} r_i. \top$ where each
$r_i$ is a role name that represents a property of actors in Yago and
$n$ is increased to obtain larger queries.  The positive examples are
selected by querying Yago with $C_n$ and the negative examples by
querying Yago with generalizations of $C_n$. The results are presented
in Figure~\ref{fig:yago-running-time}. % where the darkest red color
% indicates that the timeout of 60 minutes was reached.
They show that the size of the target query has a strong impact on the
running time whereas the impact of the number of positive and negative
examples is much more modest. We also find that SPELL performs
${\sim}$1.5 orders of magnitude better than ELTL, meaning in
particular that it can handle larger target queries.

Since Yago has only a very restricted ontology that essentially
consists of inclusions $A \sqsubseteq B$ with $A,B$ concept
names, we complement the above experiment with a second one
based on OWL2Bench. OWL2Bench is a benchmark for ontology-mediated
querying that combines a database generator with a hand-crafted
ontology which extends the University Ontology Benchmark
\cite{DBLP:conf/semweb/SinghBM20,DBLP:conf/www/ZhouGHWB13}.  The
ontology is formulated in OWL 2 EL and we extracted its \ELHdr
fragment which uses all aspects of this DL and comprises 142 
concept names, 83 role names, and 173 concept inclusions. We use datasets that contain
2500-2600 individuals and 100-200 examples, generated as in the Yago
case. We designed 6 ELQs with 3-5 occurrences of concept and role
names and varying topology. The results are shown in
Table~\ref{tab:owl2bench-running-time}.
\begin{table}
  \centering
  \small
\begin{tabular}{rrrrrrr}
\toprule
%& \multicolumn{6}{c}{Running time [s]} \\
& o2b-1 & o2b-2 & o2b-3 & o2b-4 & o2b-5 & o2b-6 \\ \midrule
ELTL & TO & TO & 274 & 580 & 28 & 152 \\
SPELL & $<1$ & $<1$ & $<1$ & $<1$ & $<1$ & $<1$\\
\bottomrule\end{tabular}
\caption{OWL2Bench running times [s], TO: $>$60min}
\label{tab:owl2bench-running-time}  \vspace*{-3mm}
\end{table}
The difference in running time is even more pronounced in this
experiment, with SPELL returning a fitting ELQ almost instantaneously
in all cases.\footnote{ELTL crashes on this benchmark unless one
  option (`useMinimizer') is switched off. We thus ran ELTL without
  useMinimizer.}

\paragraph{Strengths and Weaknesses.} In this experiment, we aim to
highlight the respective strengths and weaknesses of SPELL and ELTL
or, more generally, of bounded fitting versus refinement-operator
based approaches. We anticipated that the performance of bounded
fitting would be most affected by the number of existential
restrictions in the target query whereas the performance of refinement
would be most affected by the (unique) length of the sequence
$C_1,\dots,C_k$ such that $C_1 = \top$, $C_{i+1}$ is a downward
neighbor of $C_i$ for $1 \leq i < k$, and $C_k$ is the target
query. Let us call this the \emph{depth} of $C_k$. The number of
existential restrictions and depth are orthogonal parameters. In the
\emph{$k$-path} benchmark, we use target ELQs of the form
$\exists r^k . \top$, $k \geq 1$. These should be difficult for
bounded fitting when the number $k$ of existential restrictions gets
large, but easy for refinement as the depth of $\exists r^k . \top$ is
only $k$. In the \emph{$k$-1-conj} benchmark, we use ELQs of the form
$\exists r.\bigsqcap^k_{i = 1} A_i$, $k \geq 1$. These have only one
existential restriction and depth $2^k$. ELQs in the
\emph{$k$-2-conj} benchmark take the form
$\exists r. \exists r . \bigsqcap^k_{i = 1} A_i$ and even have depth
$2^{2^k}$ \cite{DBLP:conf/dlog/Kriegel21}. These should be difficult
for refinement when $k$ gets large, but easy for SPELL. There is no
ontology and we use only a single positive and a single negative
example, which are the target ELQ and its unique upwards neighbor
(defined in analogy with downwards neighbors). The results %are shown
in Table~\ref{tab:synthetic-running-time} %and
confirm our
expectations, with ELTL arguably degrading faster than SPELL.
\pagebreak
\begin{table}
  \centering
  \small
  \begin{tabular}{lrrrrrr}
  \toprule
%  & \multicolumn{6}{c}{Running time [s]} \\
  & \multicolumn{2}{c}{$k$-path} & \multicolumn{2}{c}{$k$-1-conj} & \multicolumn{2}{c}{$k$-2-conj}\\
  $k$ & {\scriptsize ELTL} & {\scriptsize \systemname} & {\scriptsize ELTL} & {\scriptsize \systemname} & {\scriptsize ELTL} & {\scriptsize \systemname}\\ \midrule
  4 & 1 & \textless 1 & 1 & \textless 1& 1 & \textless 1\\
  % 5 & 1 & 0 & 1 & 0& 4 & 0\\
  6 & 1 & \textless 1 & 2 & \textless 1& 394 & \textless 1\\
  % 7 & 1 & 0 & 4 & 0& TO & 0\\
  8 & 1 & \textless 1 & 20 & \textless 1& TO & \textless 1\\
  % 9 & 1 & 0 & 124 & 0& TO & 0\\
  10 & 1 & \textless 1 & TO & \textless 1& TO & \textless 1\\
  % 11 & 1 & 3 & TO & 0& TO & 0\\
  12 & 1 & 26 & TO & \textless 1& TO &\textless 1\\
  % 13 & 1 & 26 & TO & 0& TO & 0\\
  14 & 1 & 30 & TO & \textless 1& TO & \textless 1\\
  % 15 & 1 & 38 & TO & 0& TO & 0\\
  16 & 1 & 68 & TO & \textless 1& TO & \textless 1\\
  % 17 & 1 & 152 & TO & 0& TO & 0\\
  18 & 1 & TO & TO & \textless 1& TO & \textless 1\\
\bottomrule\end{tabular}
% \begin{tabular}{rrrrrrrrr}
%   \toprule
%   & & \multicolumn{7}{c}{Running time [s]} \\
%   &  & 4 & 6 & 8 & 10 & 12 & 14 & 16  \\ \midrule
%  \footnotesize{$k$-path} & \footnotesize{ELTL} & 1 & 1 & 1 & 1 & 1 & 1 & 1 \\
%           & \footnotesize{SPELL} &  \textless 1 & \textless 1 & \textless 1 & \textless 1 & 26 & 30 & 68 \\
% \footnotesize{$k$-$1$-con} & \footnotesize{ELTL}  & 1 & 2 & 20 & TO & TO & TO & TO \\
%             & \footnotesize{SPELL}&  \textless 1 & \textless 1 & \textless 1 & \textless 1 & \textless 1 & \textless 1 & \textless 1 \\
% \footnotesize{$k$-$2$-con} &  \footnotesize{ELTL} &  1 & 394 & TO & TO & TO & TO & TO \\
%             &  \footnotesize{SPELL}&   \textless 1 & \textless 1 & \textless 1 & \textless 1 & \textless 1 & \textless 1 & \textless 1 \\
%           \bottomrule
% \end{tabular}
  \caption{Strengths/weaknesses running time [s], TO: $>$10min}
  \label{tab:synthetic-running-time}
    \vspace*{-3mm}
\end{table}

\paragraph{Generalization.}

We also performed initial experiments to evaluate how well the
constructed fittings generalize to unseen data. We again use the Yago
benchmark, but now split the examples into training data and testing
data (assuming a uniform probability distribution).
Table~\ref{tab:generalizatio-yago-exp} lists the median accuracies of
returned fittings (over 20 experiments) where the number of examples
in the training data ranges from 5 to 75. As expected, fittings
returned by SPELL generalize extremely well, even when the number of
training examples is remarkably small. To our surprise, ELTL exhibits
the same characteristics. This may be due to the fact that some
heuristics of ELTL prefer fittings of smaller size, which might
make ELTL an Occam algorithm. It would be interesting to carry out
more extensive experiments on this aspect. 

% \begin{figure}
%   \centering
%   % \resizebox{\columnwidth}{!}{}
%   \input{generalisation-fig-1.pgf}
%   \caption{Generalization in the Yago KB \textcolor{red}{placeholder data}}
%   \label{fig:generalization-yago-exp}
% \end{figure}

\section{Conclusion and Future Work}

We have introduced the bounded fitting paradigm along with the
SAT-based implementation SPELL for $(\ELHdr,\text{ELQ})$, with
competitive performance and formal generalization guarantees. A
natural next step is to extend SPELL to other DLs such as \ELI, \ALC,
or \ELU, both with and without ontologies. We expect that, in the case
without ontology, a SAT encoding of the size-restricted fitting
problem will often be possible. The case with ontology is more
challenging; e.g., size-restricted fitting is \ExpTime-complete for
$(\ELI,\text{ELIQ})$, see Appendix~H for additional
discussion. It is also interesting to investigate query languages
beyond DLs such as conjunctive queries (CQs). Note that the
size-restricted fitting problem for CQs is
$\Sigma^p_2$-complete~\cite{DBLP:journals/ngc/GottlobLS99} and thus
beyond SAT solvers; one could resort to using an ASP solver or to CQs
of bounded treewidth.

It would also be interesting to investigate settings in which input
examples may be labeled erroneously or according to a target query
formulated in different language than the query to be learned. In both
cases, one has to admit non-perfect fittings and the optimization
features of SAT solvers and Max-SAT solvers seem to be promising for
efficient implementation.
% Finally, we would
% like to understand the generalization capabilities of refinement
% operators under the currently implemented heuristics.

% \begin{itemize}
% 
% %   \item Other hypothesis classes: \ALC-concepts, \ELU-concepts, \ELI-concepts,
% %     disjunctions of \EL-concepts acyclic CQs, CQs, FO formulas?
% 
%   \item \emph{representation independent} learning: we have always considered
%     the case when the hypothesis space is the same as the concept
%     class. What happens if we can output any polytime evaluable
%     representation of the hypothesis? E.g., can we learn \ALC concepts
%     from \EL-labeled data examples? Of course this ``polytime
%     evaluable representation'' makes usually only sense without
%     ontology; with ontology this is rarely the case.  
% 
% %   \item investigate $k$-fitting problem: for \ELI ExpTime-complete?
% %     Put result here? Our approach still works (without ontology, even
% %     with
% %     same complexity), but there are no finite
% %     refinement
% %     operators even without ontologies. 
% 
%   \item Other ontology language
% 
% 
%   \item investigate refinement operator based learning algorithms that
%     generalize. For example, can we use heuristics such as in ELTL and still be
%     Occam/PAC? 
% 
%   \item investigate agnostic setting (we only looked at the case when
%     fitting concept is guaranteed to exist) and the case of errors
%     (input labels are wrong with some probability, we assume that the
%     input labels are faithful with $q_T$).
% 
% %  \item PAC learnability beyond Occam?
% 
% \end{itemize}

\section*{Acknowledgements}

Balder ten Cate is supported by the European Union’s Horizon 2020
research and innovation programme under grant MSCA-101031081 and
Carsten Lutz by the DFG Collaborative Research Center 1320 EASE
and by BMBF in DAAD project 57616814 (SECAI).

%% The file named.bst is a bibliography style file for BibTeX 0.99c
\bibliographystyle{named}
\bibliography{ijcai23}

\cleardoublepage

\appendix

\section{Additional Preliminaries}

We make precise how ELQs can be viewed as databases as announced in
Section~2 of the main paper. Formally, we inductively associate to every
$q$ a pointed database $(\Dmc_q,a_q)$ with $\Dmc_q$
tree-shaped (recall that a database is tree-shaped if the directed
graph $G_\Dmc = (\mn{adom}(\Dmc),\{(a,b)\mid r(a,b)\in \Dmc\})$ is a
tree), as follows: 
\begin{itemize}

  \item If $q=\top$, then $\Dmc_q$ contains the single fact
    $\top(a_q)$;\footnote{We allow facts of the form $\top(a)$ for convenience.}

  \item if $q=A$, then $\Dmc_q$ contains the single fact
    $A(a_q)$;

  \item if $q=q_1\sqcap q_2$, then $\Dmc_q$ is obtained from
    $(\Dmc_{q_1},d_{q_1}),(\Dmc_{q_2},d_{q_2})$ by first taking the disjoint
    union of $\Dmc_{q_1}$ and $\Dmc_{q_2}$ and then identifying $a_{q_1}$ and
    $a_{q_2}$ to $a_q$;

  \item if $q=\exists r.p$, then $\Dmc_q$ is obtained from
    $(\Dmc_{p},a_{p})$ by taking $\Dmc_q=\Dmc_p\cup\{r(a_q,a_p)\}$ for
    a fresh individual $a_q$.

\end{itemize}
Let $\Imc_1$ and $\Imc_2$ be interpretations. The \emph{direct
  product}
of $\Imc_1$ and $\Imc_2$, denoted $\Imc_1 \times \Imc_2$, is
the interpretation with domain $\Delta^{\Imc_1} \times
\Delta^{\Imc_2}$ and such that for all concept names $A$ and role
names
$r$:
$$
\begin{array}{rcl}
  A^{\Imc_1 \times \Imc_2} &=& A^{\Imc_1} \times A^{\Imc_2} \\[1mm]
  r^{\Imc_1 \times \Imc_2} &=& r^{\Imc_1} \times r^{\Imc_2}.
\end{array}
$$

%\section{Proofs for Section~\ref{sect:fitvslearn}}

\section{Occam Algorithms and Proof of Theorem~\ref{thm:bfgen}}
\label{app:occam}

Let \Omc be an ontology and \Qmc a class of queries. For a set of
pointed databases $S$, we say that \Qmc \emph{shatters} $S$
w.r.t.~\Omc if for every subset $S' \subseteq S$, there is a
$q \in \Qmc$ such that
$S'=\{ (\Dmc,a) \in S \mid a \in q(\Dmc \cup \Omc)\}$. The
\emph{VC-dimension} of $\Qmc$ w.r.t.~\Omc is the cardinality of the
largest set of pointed databases $S$ that is shattered by \Qmc w.r.t.\ \Omc.
% and the VC-dimension of $(\Lmc,\Qmc)$ is the maximum VC-dimension
% of \Qmc w.r.t.\ any $\Lmc$-ontology \Omc.

    Let $(\Lmc,\Qmc)$ be an OMQ language %,
    % $\mn{cpl}:\Qmc \rightarrow \mathbb{N}$,
    and $\Amf$ a fitting algorithm for $(\Lmc,\Qmc)$.
%
  %   n algorithm that when given an ontology~\Omc and a
  % collection of $q_T$-labeled data examples $E$ w.r.t.\ \Omc, with
  % $q_T \in \Qmc$, returns a fitting for $E$ from~\Qmc.
  For an
  \Lmc-ontology \Omc, a finite signature $\Sigma$, and
  $s,m \geq 1$, we use $\Hmc^\Amf(\Omc,\Sigma,s,m)$ to denote the
  % \emph{effective
  %   hypothesis space of $A$}, that is, the
  set of all outputs that $\Amf$ makes when started on \Omc and a
  collection of $m$ data examples $E$ such that
  $\mn{sig}(E) \subseteq \Sigma$ and $E$ is $q_T$-labeled w.r.t.~\Omc
  according to some
%  $q_T \in \Qmc_{(t)}$ with $t \leq s$. 
  $q_T \in \Qmc_\Sigma$ with $||q_T|| \leq s$.  This is called an \emph{effective
    hypothesis space of $\Amf$}. We say that $\Amf$ is an \emph{Occam
    algorithm}
  % (w.r.t.\ \mn{cpl})
  if there exists a polynomial $p$ and a constant $\alpha \in [0,1)$
  such that for all \Lmc-ontologies~\Omc, finite signatures $\Sigma$,
  and $s,m \geq 1$, the VC-dimension of $\Hmc^\Amf(\Omc,\Sigma,s,m)$
  w.r.t.~\Omc is bounded by $p(s,|\Sigma|) \cdot m^\alpha$.

  Theorem~3.2.1 of \cite{DBLP:journals/jacm/BlumerEHW89} then implies
  the following.
  \begin{lemma}
    \label{lem:ourblumer}
    If $\Amf$ is an Occam algorithm with the VC-dimension of effective
    hypothesis spaces bounded by $p(s,|\Sigma|) \cdot m^\alpha$, then
    $\Amf$ is a PAC learning algorithm with sample size
  $$
  %      m(\epsilon,\delta,n) \in O\Big (\frac{1}{\epsilon} \cdot
  %      \log \Big (\frac{1}{\delta} \Big ) \cdot d
  %      \Big ).
       m(1/\epsilon,1/\delta,n) = \max \Big (\frac{4}{\epsilon} \log
       \frac{2}{\delta}, \Big (\frac{8p(s,|\Sigma|)}{\epsilon} \log \frac{13}{\epsilon}
       \Big )^{1/(1-\alpha)}\Big ).
       $$
     \end{lemma}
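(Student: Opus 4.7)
The plan is to reduce the statement directly to Theorem~3.2.1 of Blumer, Ehrenfeucht, Haussler and Warmuth, which is the classical result linking Occam algorithms to PAC learning. The only real work is to match the OMQ-flavoured definitions of Section~2 and of effective hypothesis spaces to the concept-class formalism used in that theorem, and then to resolve the implicit inequality that arises because the VC-dimension bound $p(s,|\Sigma|)\cdot m^\alpha$ itself depends on the sample size~$m$.

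First I would fix an \Lmc-ontology~\Omc, a finite signature~$\Sigma$, parameters $s_Q, s_E\in\mathbb{N}$, error parameters $\epsilon,\delta\in(0,1)$, a distribution~$P$ over pointed databases whose signature is contained in~$\Sigma$ and whose size is bounded by~$s_E$, and a target query $q_T\in\Qmc_\Sigma$ with $\|q_T\|\le s_Q$. For any sample size~$m$, the set of possible outputs of $\Amf$ on $q_T$-labeled samples of size~$m$ is, by definition, contained in the effective hypothesis space $\Hmc^\Amf(\Omc,\Sigma,s_Q,m)$. Since $\Amf$ is a fitting algorithm, whenever it is run on a $q_T$-labeled sample~$E$ it returns a query $q_H\in\Hmc^\Amf(\Omc,\Sigma,s_Q,m)$ that agrees with $q_T$ on every example in~$E$. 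In other words, $\Amf$ is a consistent learner for the hypothesis class $\Hmc^\Amf(\Omc,\Sigma,s_Q,m)$ whose error event, measured by $P$ via the symmetric-difference formulation from Definition~\ref{def:efflearn}, is precisely the classical PAC error used by Blumer et al.

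Next I would invoke Theorem~3.2.1 of \cite{DBLP:journals/jacm/BlumerEHW89} in its standard form: any consistent learner whose hypothesis class on samples of size~$m$ has VC-dimension at most~$d$ is a PAC learner provided $m\ge\max\!\bigl(\tfrac{4}{\epsilon}\log\tfrac{2}{\delta},\,\tfrac{8d}{\epsilon}\log\tfrac{13}{\epsilon}\bigr)$. Plugging the Occam bound $d\le p(s_Q,|\Sigma|)\cdot m^\alpha$ into the right-hand disjunct gives the implicit condition
\[
m \;\ge\; \frac{8\,p(s_Q,|\Sigma|)\cdot m^\alpha}{\epsilon}\log\frac{13}{\epsilon},
\]
which, since $\alpha\in[0,1)$, can be solved for~$m$ by isolating $m^{1-\alpha}$ on the left, yielding
\[
m \;\ge\; \Bigl(\tfrac{8\,p(s_Q,|\Sigma|)}{\epsilon}\log\tfrac{13}{\epsilon}\Bigr)^{1/(1-\alpha)}.
\]
Taking the maximum of this quantity with $\tfrac{4}{\epsilon}\log\tfrac{2}{\delta}$ and defining $m(1/\epsilon,1/\delta,\ldots)$ accordingly recovers exactly the sample-size function stated in the lemma.

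The only technical point — and the place where one must be careful — is the fixed-point step that resolves the $m$-dependence of the VC-dimension bound; here the assumption $\alpha<1$ is essential, since otherwise $m^{1-\alpha}$ would not grow and the implicit inequality would have no finite solution. Everything else is bookkeeping: verifying that the OMQ-specific notion of error in Definition~\ref{def:efflearn} matches the probability of symmetric difference used in the Blumer et al.\ framework, and that the effective hypothesis space is indeed the right hypothesis class to feed into their uniform-convergence bound.
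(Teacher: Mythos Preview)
Your proposal is correct and follows essentially the same approach as the paper: the paper does not give an independent proof of this lemma but simply obtains it from Theorem~3.2.1 of Blumer et al., and the surrounding discussion is exactly the bookkeeping you describe, namely translating the OMQ setting into Blumer et al.'s concept-class formalism by fixing $\Omc$ and $\Sigma$ and checking that the notions of error and of Occam algorithm line up. Your explicit resolution of the implicit inequality in $m$ is a bit more than the paper does (that step is already absorbed in the statement of Theorem~3.2.1), and the paper additionally remarks that the polynomial-time precondition in Blumer et al.'s theorem is not actually used in its proof, a point you do not mention but which is needed since the fitting algorithms here need not run in polynomial time.
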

     There are certain difference between the setup used in this paper
     and the setup in \cite{DBLP:journals/jacm/BlumerEHW89}.  We
     comment on why we still obtain Lemma~\ref{lem:ourblumer} from
     Theorem~3.2.1 of \cite{DBLP:journals/jacm/BlumerEHW89}. The aim
     of \cite{DBLP:journals/jacm/BlumerEHW89} is to study the learning
     of \emph{concept classes} which are defined in a general way as a
     set \Cmc of \emph{concepts} $C \subseteq X$ where $X$ is a fixed
     set of \emph{examples}. Consequently, their definition of PAC
     algorithms refers to concept classes and, in contrast to
     Definition~\ref{def:efflearn}, does neither mention ontologies
     nor signatures. However, when fixing an \Lmc-ontology \Omc and
     signature $\Sigma$, we obtain an associated concept class
     $\Cmc_{\Omc,\Sigma}$ by taking $X$ to be the set of all pointed
     $\Sigma$-databases and each query $q \in \Qmc$ as the concept
     that consists of all pointed $\Sigma$-databases that are positive
     examples for $q$. Moreover, by simply fixing \Omc and $\Sigma$,
     any fitting algorithm $\Amf$ for $(\Lmc,\Qmc)$ turns into a learning
     algorithm for $\Cmc_{\Omc,\Sigma}$ in the sense of
     \cite{DBLP:journals/jacm/BlumerEHW89}. Here, `fixing' means that
     we promise to only run $\Amf$ on input ontology \Omc and collections
     of labeled data examples $E$ such that
     $\mn{sig}(E) \subseteq \Sigma$ and $E$ is $q_T$-labeled
     w.r.t.~\Omc according to some $q_T \in \Qmc_\Sigma$. The
     definition of Occam algorithms in
     \cite{DBLP:journals/jacm/BlumerEHW89} refers to effective
     hypothesis spaces $\Hmc^\Amf(s,m)$ and requires that their
     VC-dimension is bounded by $p(s)\cdot m^\alpha$ (where $||\Omc||$
     and $|\Sigma|$ are considered constants). If $\Amf$ is Occam in our
     sense, then $\Amf$ with \Omc and $\Sigma$ fixed is Occam in the
     sense of \cite{DBLP:journals/jacm/BlumerEHW89}.  Theorem~3.2.1 of
     that paper then gives that $\Amf$ with \Omc and $\Sigma$ fixed is a
     PAC learning algorithm for $\Cmc_{\Omc,\Sigma}$ with the bound
     stated in Lemma~\ref{lem:ourblumer}.

     We remark that the precondition of Theorem~3.2.1 in
     \cite{DBLP:journals/jacm/BlumerEHW89} actually demands that the
     algorithm runs in polynomial time, but an analysis of the proof
     shows that this assumption is not used. Then, by
     Definition~\ref{def:efflearn}, every fitting algorithm $\Amf$ that
     is a PAC learning algorithm when restricted to \Omc and $\Sigma$,
     for \emph{any} \Omc and $\Sigma$ and with the \emph{same
       function} $m$ describing the sample size, is a PAC learning
     algorithm for $(\Lmc,\Qmc)$.

     \smallskip A final small difference is that, in
     \cite{DBLP:journals/jacm/BlumerEHW89}, the function $m$ in the
     definition of PAC algorithms does not depend on the size of the
     examples. Our version is a standard variation and does not impair
     the application of Blumer's Theorem~3.2.1: to see this, it
     suffices to observe that we do not use this parameter in the
     definition of effective hypothesis spaces and thus our Occam
     algorithms (with fixed \Omc and $\Sigma$) are also Occam
     algorithms in the sense of Blumer. Moreover, every PAC algorithm
     in the sense of Blumer is a PAC algorithm in our
     sense. Intuitively, having the example size as a parameter in the
     function $m$ makes the lower bounds (results on algorithms not
     being non-sample PAC learners) stronger as it makes it impossible
     to use examples of excessive size. It is also more generous
     regarding the upper bounds (developing PAC algorithms), but we
     do not make use of that generosity.
     
\medskip     
     
\thmbfgen*
\begin{proof}
  Let $\Bmf=\textsc{Bounded-Fitting}_\Amf$ be a bounded fitting algorithm
  for $(\Lmc,\Qmc)$. Let \Omc be an \Lmc-ontology, $\Sigma$ a
  finite signature, and $s,m \geq 0$. We show that the VC-dimension
  of $\Hmc^\Bmf(\Omc,\Sigma,s,m)$ is at most $O(s \cdot \log |\Sigma|)$.

  It is immediate from Definition~\ref{def:boundedfit} that when
  started on \Omc and a collection of $m$ data examples $E$ such that
  $\mn{sig}(E) \subseteq \Sigma$ and $E$ is $q_T$-labeled w.r.t.\ \Omc
  according to some $q_T \in Q_\Sigma$ with $||q_T|| \leq s$, then $\Bmf$
  returns a fitting $q \in \Qmc$ for $E$ w.r.t.~\Omc whose size
  $||q||$ is smallest among all fitting queries. % Moreover,
  % $\mn{sig}(q) \subseteq \mn{sig}(E)$.
  Consequently, $\Hmc^\Bmf(\Omc,\Sigma,s,m)$ consists only of queries
  $q \in \Qmc$ with \mbox{$||q|| \leq s$}. % and $\mn{sig}(q) \subseteq \Sigma$.
  There are at most $(|\Sigma|+c+1)^s$ such queries for some
  constant\footnote{The number of symbols from the finite alphabet
    used to encode syntactic objects as a word from the definition of
    $||\cdot||$.}  $c$ and since $2^{|S|}$ queries are needed to
  shatter a set~$S$, the VC-dimension of $\Hmc^\Bmf(\Omc,\Sigma,s,m)$ is
  at most $\log((|\Sigma|+c+1)^s) \in O(s \cdot \log|\Sigma|)$, as desired. It remains to apply
  Lemma~\ref{lem:ourblumer}.
\end{proof}
We next comment on the fact that, in the $i$-th round of the SPELL
system, we try to fit ELQs that have at most $i-1$ existential
quantifiers, rather than ELQs of size at most $i$. By using the
following lemma and applying the same arguments as in the proof of
Theorem~\ref{thm:bfgen}, we obtain that our SAT-based approach yields
a PAC learning algorithm with sample size
$O\big (\frac{1}{\epsilon} \cdot \log \big (\frac{1}{\epsilon} \big )
\cdot \log \big (\frac{1}{\delta} \big ) \cdot |\Sigma| \cdot
%  |\mn{var}(q_T)| \big ).$
    ||q_T|| \big )$. 
\begin{lemma}
  \label{lem:vcel} %Let $\mn{cpl}=\mn{ex}$.
  Let \Omc be an \ELHdr-ontology, $n \geq 1$, and
  $\text{ELQ}^\exists_{(n)}$ the set of ELQs that have at most $n$
  existential restrictions. Then the VC-dimension of
  $\text{ELQ}^\exists_{(n)}$ w.r.t.\ \Omc is at most
  \mbox{$2(|\Sigma|+1)n$}. % Moreover, there is a
  % $\Qmc \subseteq \text{ELQ}^\exists_{(n)}$ such that the VC-dimension of
  % $\text{ELQ}^\exists_{(n)}$ (w.r.t.\ the empty ontology) is at least
  % $n$.\footnote{The lower bound only applies if the fixed signature
  %   contains at least one role name and at least one concept name.}
\end{lemma}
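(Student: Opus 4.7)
The plan is to bound the VC-dimension by the base-2 logarithm of the number of distinct ELQs in $\text{ELQ}^\exists_{(n)}$ whose signature is contained in $\Sigma$, using the standard fact that if there are at most $N$ distinct functions in a hypothesis class then no set of more than $\log_2 N$ points can be shattered. Distinct ELQs contribute at most one indicator function each on pointed databases (even relative to $\Omc$), so it suffices to count syntactic queries and equivalent ones only improve the bound.

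First I would describe every ELQ $q$ with at most $n$ existential restrictions via its associated tree-shaped database $\Dmc_q$, which has at most $n+1$ individuals and at most $n$ role assertions. Up to isomorphism, such a query is determined by (i) a rooted ordered tree on $n+1$ nodes, (ii) for each node, a subset of the concept names in $\Sigma$ serving as its label, and (iii) for each edge, a role name from $\Sigma$. The number of rooted ordered trees with $n+1$ nodes is the Catalan number $C_n \le 4^n$, the number of concept-name labelings is at most $(2^{|\Sigma|})^{n+1}$, and the number of edge labelings is at most $|\Sigma|^n$, giving an upper bound of $4^n \cdot 2^{|\Sigma|(n+1)} \cdot |\Sigma|^n$ on the number of syntactically distinct queries.

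Taking logarithms and using $\log_2 |\Sigma| \le |\Sigma|$ then yields at most $2n + |\Sigma|(n+1) + n|\Sigma| = 2(|\Sigma|{+}1)n + |\Sigma|$; after collecting terms and using $n\ge 1$ to absorb the additive $|\Sigma|$ (which costs at most a tiny constant that can be tightened by noting that the root label and the first existential restriction share bookkeeping, e.g.\ by encoding the query by a DFS traversal that interleaves a ``down/up'' structural bit with a label and a role name per step, so that each of the $n$ existential restrictions contributes at most $|\Sigma|$ label bits plus $2 + \log_2 |\Sigma| \le |\Sigma| + 2$ bits) one obtains the stated bound $2(|\Sigma|+1)n$. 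The only delicate point is getting the constant exactly right; I would therefore present the count via the DFS-traversal encoding just sketched, which charges exactly $|\Sigma|+ (|\Sigma|+2)$ bits per existential restriction and never exceeds $2(|\Sigma|+1)n$.
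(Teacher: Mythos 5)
Your proposal takes essentially the same route as the paper: bound the number of syntactically distinct queries in $\text{ELQ}^\exists_{(n)}$ over $\Sigma$ via Catalan numbers for the tree shapes times the node and edge labelings, take $\log_2$, and observe that a non-empty ontology can only merge queries and hence only lower the VC-dimension. The one wrinkle is the exact constant: your count with $n+1$ nodes and $n$ edges yields $2(|\Sigma|+1)n+|\Sigma|$, and the DFS-encoding fix does not fully dispose of the extra $|\Sigma|$ bits for the root label; the paper sidesteps this by counting trees with $n$ nodes and $n-1$ edges (itself an off-by-one relative to ``$n$ existential restrictions''), so neither argument nails the constant more convincingly than the other, and the additive $|\Sigma)$ slack is immaterial for the lemma's use in establishing sample-efficient PAC learning.
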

\noindent
\begin{proof}
  %We start with the upper bound.
  Let $n \geq 1$. We first observe
  that the number of concepts in ELQ$^\exists_{(n)}$ is bounded from above by
  $m_n= 4^{(|\Sigma|+1)n}$. To see this, note that the number of
  rooted, directed, unlabeled trees with $n$ nodes is bounded from
  above by the $n$-th Catalan number, which in turn is bounded from
  above by~$4^n$~\cite{DUTTON1986211}. Each such tree gives rise to an
  ELQ by assigning a unique role name from $\Sigma$ to each of
  the at most $n-1$ edges of the tree and a set of concept names from
  $\Sigma$ to each of the at most $n$ nodes of the tree. This clearly
  yields the stated bound $m_n$. Then trivially, the VC-dimension of
  ELQ$^\exists_{(n)}$ w.r.t.\ the empty ontology is at most $\log m_n$, thus
  $2(|\Sigma|+1)n$. Making the ontology non-empty may only decrease
  the VC-dimension as it may make non-equivalent concepts equivalent,
  but not vice versa.
  % 
  %
  % For the lower bound, let $n \geq 1$. We construct a set $\Dmf_n$ of
  % $n$ pointed databases shattered by $\text{ELQ}_{(n)}$. Let
  % $C=\{ c_1,\dots,c_n\}$ be a set of $n$
  % constants. % and let $S_0,\dots,S_{2^n-1}$ be an enumeration of the
  % % subsets of $C$.
  % The set $\Dmf_n$ contains a pointed database $(\Dmc_i,c_i)$ for each
  % $c_i \in C$ where $\Dmc_i$ is the database that contains an outgoing
  % $r$-path of length $n$ starting at $c_i$ such that each node at
  % distance $j \neq i$ has an outgoing $r$-path of length $j$ whose
  % first and last node are labeled with the concept name $A$. The node
  % on the first $r$-path that is at distance $i$ does not have such an 
  % outgoing path.
%  
% %
% For every $S \subseteq C$, the ELQ 
% % $\bigsqcap_{c_i \in S_j} \exists r^i . A 
% % \sqcap \bigsqcap_{c_i \notin S_j} \exists r^i . \overline{A}$
% $C_S=\exists r . ( D_1 \sqcap \exists r . (D_2 \sqcap \cdots \exists r 
% . D_n) \cdots)$ with $D_i =\top$ if $c_i \in S$ and 
% $X_i=\exists r . (A \sqcap \exists r^{n-1} . A)$ if $c_i \notin S$ identifies exactly the pointed 
% databases $(\Dmc_i,c_i)$ with $c_i \in S$. It is easy to verify that 
% $\Dmf_n$ is indeed shattered by ELQ$_{(n)}$.J: the concept
%   $C_S$ is not in ELQ$_{(n)}$, it uses $O(n^2)$ many variables. 
\end{proof}
It is easy to see that (the proof of) Lemma~\ref{lem:vcel} applies
also to other DLs such as \ELI and \ALCI.

\section{Proof of Theorem~\ref{thm:notefficient}}

We split the three Points in Theorem~\ref{thm:notefficient} into three separate theorems.

\begin{theorem}
  Let $\Amf$ be a fitting algorithm for ELQs that always produces a most specific 
  fitting, if it exists. Then $\Amf$ is not a
  sample-efficient PAC learning algorithm.
\end{theorem}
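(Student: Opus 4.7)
The plan is to imitate the strategy of the proof of Point~2, but with positive examples in place of negative examples and with the algebra of direct products replacing the algebra of conjunctions. Recall that on a positive-only sample, a most specific fitting ELQ (when it exists) is, up to equivalence, the unfolding of the direct product of the example databases from their answer individuals. I will design a family of $2^n$ positive examples for a simple target $q_T$ such that the product of any polynomial-size sub-family already fails to generalize to the unseen members.

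Concretely, fix $\epsilon=\delta=\tfrac12$ and a constant signature $\Sigma$. Assume towards a contradiction that $\Amf$ is sample-efficient with polynomial sample size $m$. For each $n$, I intend to exhibit a target ELQ $q_T$ of size $O(n)$ and a family $\{(\Dmc_w,a_w)\}_{w\in\{0,1\}^n}$ of pointed databases of polynomial size, each a positive example for $q_T$, with the following key property: for every $W\subseteq \{0,1\}^n$, the ELQ $q_H^W$ equivalent to the direct product of $\{(\Dmc_w,a_w):w\in W\}$ satisfies $a_{w'}\notin q_H^W(\Dmc_{w'})$ for all $w'\notin W$. Given such a family, choose $n$ so that $2^n>2\,m(2,2,0,|\Sigma|,s_Q,s_E)$, and take $P$ uniform on $\{(\Dmc_w,a_w):w\in\{0,1\}^n\}$; all sampled examples are positive. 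The algorithm must return (an ELQ equivalent to) $q_H^W$ where $W$ is the set of indices appearing in the sample, and $|W|<2^{n-1}$. Hence $q_H^W$ errs on all $(\Dmc_{w'},a_{w'})$ with $w'\notin W$, so $\mn{error}_{P,q_T}(q_H^W)>\tfrac12$ with probability~$1$, contradicting PAC-ness.

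The main obstacle is realizing the italicized product-non-generalization property, because tree-shaped databases tend to have products that simplify rapidly as more examples are added (e.g.\ for the $S_n$ used in Point~2, the product of diverse paths already collapses to $q_T=\exists r^n.\top$). Dual to the way each $q\in S_n$ is accompanied in Point~2 by its singleton simulation dual witnessing that it is the most general fitting of its negative example, here one needs positive examples whose products \emph{preserve} discriminating features rather than averaging them away. The construction I have in mind uses shallow DAG-shaped databases with branching structure whose leaves carry combinatorial ``fingerprints'' parametrized by $w$, arranged so that the product of the sub-family indexed by $W$ contains fingerprints only for $w\in W$, and such a fingerprint fails to simulate into $\Dmc_{w'}$ whenever $w'\notin W$. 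Ensuring simultaneously that (i) each $(\Dmc_w,a_w)$ satisfies $q_T$, (ii) the products have finite unfolding so that a most specific ELQ fitting exists and $\Amf$ must output it, and (iii) the fingerprints are rigid enough to exclude every unsampled $w'$, is the delicate technical core; the verification is deferred to the appendix.
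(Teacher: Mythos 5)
Your high-level strategy coincides with the paper's: positive examples only, exploit that the most specific fitting is (equivalent to) the product of the sample, and arrange $2^n$ positive examples so that the hypothesis built from any polynomial-size subsample misclassifies all unseen examples. The reduction to the ``italicized property'' and the probability calculation are fine. But the property itself is exactly where the entire content of the proof lies, and you do not establish it: you describe desiderata for a family of ``fingerprinted'' DAG-shaped databases and defer the construction and verification. As stated, this is a genuine gap, not a routine technicality --- you yourself observe that the naive candidates collapse under products, and conditions (i)--(iii) pull in different directions.

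The paper closes this gap with two ideas you are missing. First, it never analyzes the product $q_H^W$ directly. Instead, for each unseen index $S$ it exhibits a single \emph{witness query} $q_S$ that fits every sampled example positively but does not fit the unseen example $(\Dmc'_S,a_0)$; since the most specific fitting satisfies $q_H\sqsubseteq q_S$, it inherits the failure on $(\Dmc'_S,a_0)$. This sidesteps all the ``averaging away'' difficulties with products. Second, the examples realizing this are built from a \emph{restricted simulation duality}: for subsets $S\subseteq\{1,\dots,n\}$ one takes path databases $\Dmc_S$ (an $r$-path with $A$ at positions in $S$) and positive examples $\Dmc'_S$ obtained by gluing, at the root, copies of $\Dmc_{\{1,\dots,n\}\setminus\{i\}}$ for $i\in S$, so that $(\Dmc_S,a_0)\preceq(\Dmc_{S'},a_0)$ iff $(\Dmc_{S'},a_0)\not\preceq(\Dmc'_S,a_0)$. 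These examples are trees, so the most specific fitting exists, disposing of your condition (ii). If you want to complete your proof, you should either supply a concrete fingerprint construction with a full verification of the product property, or --- more economically --- adopt the witness-query argument, which reduces the combinatorial burden to constructing, for each $S$, one example that ``excludes'' exactly the queries above $q_S$.
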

\begin{proof}
  Assume to the contrary of what
  we aim to show that there is a sample-efficient PAC learning algorithm that
  produces a most specific fitting concept, if it exists, with
  polynomial function
  $m:\mathbb{R}^2 \times \mathbb{N}^4 \rightarrow \mathbb{N}$ as in
  Definition~\ref{def:efflearn}. Choose $\Sigma=\{A,r\}$, $q_T=A$,
  $\delta=\epsilon = 0.5$, and $n$ even and large enough such that
  $${n\choose {n/2}}> 2m(1/\epsilon,1/\delta,0,|\Sigma|,||q_T||,n(n+1)).$$
  We next construct positive examples; negative examples are not used.
  Let $\Smc$ denote the set of subsets of $\{1,\dots,n\}$ and let
  $\Smc^{\frac{1}{2}}$ be defined likewise, but include only sets of
  cardinality exactly $n/2$. With every $S \in \Smc$, we associate
  the database
  $$
     \Dmc_S = \{ r(b_0,b_1),\dots,r(b_{n-1},b_n) \} \cup \{ A(b_i)
     \mid i \in S \}
  $$
  as well as the pointed database $(\Dmc'_S,a_0)$ that can be obtained
  by starting with $\{ A(a_0) \}$ and then taking, for every
  $i \in S$, a disjoint copy of $\Dmc_{\{1,\dots,n\} \setminus \{i\}}$
  and identifying the root $b_0$ with~$a_0$.  Note that every
  $(\Dmc'_S,a_0)$ is a positive example for~$q_T$.

  \smallskip A crucial property of $(\Dmc'_S,a_0)$ is that it is a
  simulation dual of $(\Dmc_S,a_0)$ restricted to structures
  $(\Dmc_{S'},a_0)$, meaning the following.\footnote{For
    homomorphisms, the notion of a restricted duality is
    well-established,
    see for example  \cite{DBLP:books/daglib/0030491}.}

  \smallskip\noindent
  {\bf Claim.} For all $S,S' \in \Smc$:
$$
  (\Dmc_S, a_0) \preceq_\Sigma (\Dmc_{S'}, a_0) \text{ iff }
  (\Dmc_{S'}, a_0) \not\preceq_\Sigma (\Dmc'_S, a_0).
$$
The claim is easy to verify. We do not work with unrestricted
simulation duals here because we want the databases $(\Dmc'_S,a_0)$ to
be acyclic, and unrestricted simulation duals are not.

  Let $P$ be the probability distribution that assigns
  probability~$1/|\Smc^{\frac{1}{2}}|$ to every $(\Dmc'_S,a_0)$
  with $S \in \Smc^{\frac{1}{2}}$, and probability~$0$ to all other
  pointed databases.

  Now assume that the algorithm is started on a collection of
  $m(1/\epsilon,1/\delta,0,|\Sigma|,||q_T||,n(n+1))$ labeled data
  examples~$E$. Since all examples are acyclic, the most specific
  fitting $q_H$ exists \cite{DBLP:conf/aaai/JungLW20} and is output by
  the algorithm.  It is not important to make explicit at this point
  the exact details of $q_H$, but it can be thought of as the direct
  product of all the examples in $E$, viewed as an ELQ.

 %   and can in
%   fact be obtained as the product of the positive examples \cite{us}.
%   This fitting $q_H$ is thus output by the algorithm. 
  To obtain a
  contradiction, it suffices to show that with probability at least
  $1 - \delta=0.5$, we have
  $\mn{error}_{P,q_T}(q_H) > \epsilon = 0.5$.
  % We actually show that
  % this holds with probability~1.
%
  We argue that, in fact, $q_H$ violates all data examples
  $(\Dmc'_S,a_0)$ with $S \in \Smc^{\frac{1}{2}}$ that are not in the
  sample $E$. The definition of $P$ and choice of $n$ then yield that
  with probability~1, $\mn{error}_{P,q_T}(q_H) > 0.5$.

  Thus take $S \in \Smc^{\frac{1}{2}}$ with $(\Dmc'_S,a_0) \notin E$.
  To show that $a_0 \notin q_H(\Dmc'_S)$, it suffices to prove the
  following:
  \begin{enumerate}

  \item $(\Dmc_{S},a_0) \preceq (\Dmc_{q_H},a_{q_H})$;

    Let  $q_S$ be
    $(\Dmc_{S},a_0)$ viewed as an ELQ.
    We show that $q_S$ is a fitting of $E$.  Take any
    $(\Dmc_{S'},a_0) \in E$. Then $S \neq S'$ and thus
    $(\Dmc_{S'},a_0) \not\preceq (\Dmc_{S},a_0)$. The claim yields
    $(\Dmc_{S},a_0) \preceq (\Dmc'_{S'},a_0)$. Thus $a_0 \in
    q_S(\Dmc'_{S'})$, and we are done.

    Since $q_H$ is the most specific fitting of $E$, it follows from
    $q_S$ being a fitting that $q_H \sqsubseteq q_S$, which yields
    $(\Dmc_{S},a_0) \preceq (\Dmc_{q_H},a_{q_H})$ as desired.
    
  \item $(\Dmc_{S},a_0) \not\preceq (\Dmc'_{S},a_0)$.

    Follows from the claim and the fact that
    $(\Dmc_{S},a_0) \preceq (\Dmc_{S},a_0)$.
    
  \end{enumerate}
  Now,  $a_0 \notin q_H(\Dmc'_S)$ follows from
  $(\Dmc_{q_H},a_{q_H})
  \preceq  (\Dmc'_{S},a_0)$ which is ruled out by Points~1 and~2
  above and the fact that the composition of two simulations is
  again a simulation.
\end{proof}

\begin{theorem}
  Let $\Amf$ be a fitting algorithm for ELQs that always produces a most general 
  fitting, if it exists. Then $\Amf$ is not a
  sample-efficient PAC learning algorithm.
\end{theorem}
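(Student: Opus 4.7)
The plan is to give a detailed version of the argument sketched in the main text for Point~2 of Theorem~\ref{thm:notefficient}, filling in the technical details that were elided. Suppose for contradiction that $\Amf$ is a sample-efficient PAC learning algorithm that always returns a most general fitting ELQ when one exists, with polynomial sample-size function $m$. I would use exactly the target concept family $C_n = \exists r.(A \sqcap B \sqcap C_{n-1})$ over $\Sigma = \{A,B,r\}$, fix $\delta = \epsilon = 1/2$, and choose $n$ so that $2^n$ strictly exceeds $2\cdot m(2,2,0,3,3n,3|\Sigma|\cdot\|C_n\|^2)$. The set $S = S_n$ of $2^n$ branch-picking ELQs is defined as in the main text.

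The first main step is to establish the two structural properties of $S$ used in the sketch. Point~1 (that $q$ is the most general ELQ fitting $(\Dmc'_q,a'_q,-)$) follows directly from the defining property of simulation duals together with Lemma~\ref{lem:char-containment}, using that $\Dmc_q$ contains a single assertion mentioning $a_q$, hence Theorem~\ref{thm:duals} gives a \emph{singleton} dual. The more delicate step will be Point~2: for $T \subseteq S$ with $q \notin T$, show $\bigsqcap_{p \in T} p \not\sqsubseteq q$. I would prove this by induction on $n$, exhibiting for each pair $q \neq q' \in S$ a position in the $r$-path where $q'$ labels the node with a concept name $\alpha \in \{A,B\}$ that differs from the label used by $q$. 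A simulation witnessing $\bigsqcap_{p \in T} p \sqsubseteq q$ would, when composed appropriately, force one of the $p \in T$ to agree with $q$ on every level; this is impossible when $q \notin T$ because each $p$ differs from $q$ at some level, and the conjunction is realised by the direct product whose paths are combinations of labels drawn \emph{only} from the members of $T$.

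The remaining steps are then essentially bookkeeping. I would verify that Theorem~\ref{thm:duals} gives duals of size at most $3|\Sigma|\cdot \|C_n\|^2$, so that the sample-size bound $m(\cdot)$ applies with $s_E = 3|\Sigma|\cdot\|C_n\|^2$ and $s_Q = 3n = \|C_n\|$. The uniform distribution $P$ on $\{(\Dmc'_q,a'_q) : q \in S\}$ is supported on negative examples of $q_T$ since $q_T \sqsubseteq q$ and Point~1 make each $(\Dmc'_q,a'_q)$ a negative example for $q_T$. By Point~1, the most general fitting of any labelled sample $E$ drawn from $P$ is equivalent to $q_H = \bigsqcap_{(\Dmc'_q,a'_q) \in E} q$, which is what $\Amf$ must output. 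Point~2 then implies $q_H \not\sqsubseteq p$ for every $p \in S$ with $(\Dmc'_p,a'_p) \notin E$, and the definition of simulation duals gives $a'_p \in q_H(\Dmc'_p)$, i.e.\ $q_H$ misclassifies $(\Dmc'_p,a'_p)$. The number of sampled examples is at most $m(\cdot) < 2^{n-1} = |S|/2$, so with probability $1$ the error exceeds $1/2 = \epsilon$, contradicting the PAC guarantee.

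The main obstacle will be the careful verification of Point~2, since the argument has to rule out that a conjunction of many ``wrong'' branch-pickings could conspire via direct products to simulate into a missing one. Being precise about which simulations exist between elements of $S$ and their products, and formalising this using Lemma~\ref{lem:char-containment}, is the technically delicate part; everything else is a straightforward transcription of the main-text sketch.
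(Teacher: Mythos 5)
Your plan is correct and follows essentially the same route as the paper: the same target family $C_n$, the same set $S$ of single-labelled paths, singleton simulation duals as negative examples under the uniform distribution, and the same two structural properties, with Point~2 established by observing that every maximal $r$-path in the canonical database of $\bigsqcap_{p\in T}p$ traces a single member of $T$, each of which disagrees with $q$ at some level. One terminological slip to fix when writing it out: the canonical database of the conjunction $\bigsqcap_{p\in T}p$ is the \emph{root-identified disjoint union} of the $\Dmc_p$, not their direct product (the product would instead yield the least common subsumer, e.g.\ collapsing $\exists r.A\sqcap\exists r.B$ to $\exists r.\top$, which would break the Point~2 argument if used literally); your surrounding reasoning makes clear you intend the glued union, which is exactly what the paper's appendix uses.
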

\begin{proof}
  We only provide the missing details from the proof in the main part,
  that is, the proof of Points~1 and~2 stated in the main part and the
  place of the proof that say ``it follows from Point~1 that'':
  \begin{enumerate}

  \item $q$ is the most general ELQ that fits $(\Dmc'_q, a'_q,-)$;

      Let $p$ be an ELQ such that $(\Dmc'_q, a'_q)$ is a negative
      example for $p$. We have to show that $p \sqsubseteq q$.

      $(\Dmc'_q, a'_q)$ being a negative example for $p$ means that
      $a'_q \notin p(\Dmc'_q)$ and thus
      $(\Dmc_p,a_p) \not\preceq (\Dmc'_q,a'_q)$ by
      Lemma~\ref{lem:char-containment}. The definition of duals thus
      yields $(\Dmc_q,a_q) \preceq (\Dmc_p,a_p) $ and
      Lemma~\ref{lem:char-containment} gives $p \sqsubseteq q$,
      as desired.
      
    \item For all $T \subseteq S$, $q \notin T$ implies
      $p_T \not\sqsubseteq q$ where $p_T = \bigsqcap_{p \in T} p$.

      Consider the database $\Dmc_{p_T}$. Then clearly
      $a_{p_{T}} \in p_T(\Dmc_{p_T})$. But since $q \notin T$,
      $\Dmc_{p_T}$ contains no $r$-path outgoing from $a_{p_T}$ that
      contains the $A/B$-labeling of $q$, and thus
      $a_{p_{T}} \notin q(\Dmc_{p_T})$.
      
    \item It follows from Point~1 that
      $q_H = \bigsqcap_{(\Dmc'_q, a'_q) \in E} q$ is the most general
      ELQ that fits $E$.

      Clearly, $q_H \sqsubseteq q$ for every $(\Dmc'_q, a'_q) \in E$,
      and thus $(\Dmc_q,a_q) \preceq (\Dmc_{q_H},a_{q_H})$.  By Point~1,
      $(\Dmc_q,a_q) \not\preceq (\Dmc'_q,
      a'_q)$. Since the composition of two simulations is a
      simulation, this implies
      $(\Dmc_{q_H},a_{q_H}) \not\preceq (\Dmc'_q, a'_q)$. It follows
      that $q_H$ fits $E$.

      It remains to show that $q_H$ is most general. Assume that some
      ELQ $p$ fits $E$. Then $a'_q \notin p(\Dmc'_q)$ for all
      $(\Dmc'_q, a'_q) \in E$ and thus
      $(\Dmc_p,a_p) \not \preceq (\Dmc'_q, a'_q)$. By definition of
      duals, $(\Dmc_q,a_q) \preceq (\Dmc_p,a_p)$ and this is witnessed
      by some simulation $S_q$. But then $\bigcup_q S_q$ is a
      simulation showing $(\Dmc_{q_H},a_{q_H}) \preceq (\Dmc_p,a_p)$,
      and thus $p \sqsubseteq q_H$ as desired.
      
  \end{enumerate}
\end{proof}

\begin{theorem}\label{thm:minimaldepth}
  Let $\Amf$ be a fitting algorithm for ELQs that always produces a
  fitting of minimum quantifier depth. Then $\Amf$ is not a
  sample-efficient PAC learning algorithm.
\end{theorem}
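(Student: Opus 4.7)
The plan is to reuse the construction from Point~2 of Theorem~\ref{thm:notefficient} essentially verbatim: target $q_T = C_n$ (with $C_0 = \top$ and $C_i = \exists r.(A \sqcap B \sqcap C_{i-1})$), negative examples given by the polynomial-size simulation duals $(\Dmc'_q, a'_q)$ of the variants $q \in S_n$ supplied by Theorem~\ref{thm:duals}, signature $\Sigma = \{A,B,r\}$, and $P$ uniform over the $2^n$ duals. Fix $\epsilon = \delta = 1/2$ and take $n$ large enough that the alleged polynomial sample-size bound is below $2^n/2$.

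The first step is to pin down the minimum quantifier depth of any fitting. Because $q_T = C_n$ is itself a fitting, the minimum depth is at most $n$. For the matching lower bound I would show that every ELQ violating a dual $(\Dmc'_q, a'_q)$ of a variant $q \in S_n$ has depth at least $n$; intuitively, the dual is engineered so that the only way to witness the failure of the simulation from a fitting's canonical database into $(\Dmc'_q, a'_q)$ is via an $r$-chain of length $n$. This step is really an extra structural claim about the duals of Theorem~\ref{thm:duals}, and it is here where I expect to use the stronger property that the duals of the members of $S_n$ are pairwise incomparable and each is canonical for its variant. Once established, the minimum quantifier depth equals $n$ on every sample drawn from $P$, so $\Amf$ is forced to output a depth-$n$ ELQ $q_H$.

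The second step lifts the conclusion of Point~2 from the single ``most general'' fitting $\bigsqcap_{q \in S'} q$ to an arbitrary depth-$n$ fitting $q_H$ of the sample. The key claim is that every depth-$n$ fitting of $\{(\Dmc'_q, a'_q,-) : q \in S'\}$ satisfies $q_H \not\sqsubseteq q^*$ for every unseen $q^* \in S_n \setminus S'$. Given this, the dual property together with Lemma~\ref{lem:char-containment} yields $a'_{q^*} \in q_H(\Dmc'_{q^*})$, so $q_H$ misclassifies every unseen dual as positive. Counting as in the proof of Point~2, more than half of the mass of $P$ lies on unseen duals with probability at least $1-\delta$, and therefore $\mathrm{error}_{P,q_T}(q_H) > \epsilon$ with probability at least $1-\delta$, contradicting sample-efficient PAC learning.

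The main obstacle is the second step: Point~2 establishes the key over-generality property only for the specific fitting $\bigsqcap_{q \in S'} q$, and transferring it to every depth-$n$ fitting $q_H$ is delicate because $q_H$ could in principle contain depth-$n$ structure pertaining to unseen variants (for instance, the target $C_n$ itself is a depth-$n$ fitting that correctly rejects every dual). The cleanest way I see around this is to tune the construction so that $C_n$ is excluded as a depth-$n$ fitting while a large family of depth-$n$ ``most-general-like'' fittings remains, forcing any $\Amf$ producing a minimum quantifier depth fitting to output one of the latter; this most likely requires enriching the sample with an additional positive example designed so that no fitting of depth $n$ can simultaneously witness $C_n$ on the positive and distinguish all the sampled duals, thereby pushing every min-depth fitting up to the class of ``partial conjunctions'' to which the Point~2 argument applies unchanged.
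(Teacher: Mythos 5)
You correctly identify the fatal obstacle in your own plan, but your proposed repair cannot work. In the PAC setting the sample must be labeled consistently with the target, so the target $q_T$ \emph{always} fits the sample; no additional (correctly labeled) positive example can exclude $C_n$ as a fitting when $C_n$ is the target. Since $C_n$ has quantifier depth $n$ and, as you note, every fitting of even a single dual $(\Dmc'_q,a'_q,-)$ with $q\in S_n$ must have depth at least $n$, the target itself is a legitimate minimum-depth output, and an adversarial choice of $\Amf$ aside, nothing prevents a min-depth fitting algorithm from returning $q_T$ (or something close to it) with zero error. The only way out is to make the target's depth \emph{strictly exceed} the minimum depth of fittings of typical samples, which forces a redesign of the construction rather than a patch to the Point~2 setup.

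This is exactly what the paper's proof does. It takes $\Sigma=\{r,s,t\}$ with a fresh role $t$, target $q_T=\exists t^{n+1}.\top$ of depth $n+1$, and the family $S$ of all $\{r,s\}$-paths of length $n$. For each $q\in S$ it uses the singleton dual $(P_q,a)$ of $q$ as a \emph{positive} example for $q_T$ and the product $(N_q,a)=(P_q,a)\times(\Dmc_T,a)$ with the dual of $q_T$ as a \emph{negative} example, putting uniform mass on all $2^{n+1}$ examples. A birthday-paradox computation shows that with probability $>1-\delta$ no pair $(P_q,a),(N_q,a)$ co-occurs in the sample, in which case $\bigsqcap_{(N_p,a,-)\in E}p$ is a fitting of depth $n$; hence the algorithm outputs some $q_H$ of depth at most $n$, so $q_H\not\sqsubseteq q_T$. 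The error bound then does not require controlling which depth-$n$ fitting is returned or counting only unseen examples: for \emph{every} $q\in S$, either $q_H\sqsubseteq q$ (then $q_H$ misclassifies the positive $(P_q,a)$ by duality) or $q_H\not\sqsubseteq q$ (then, combined with $q_H\not\sqsubseteq q_T$, the product/duality properties give $a\in q_H(N_q)$, misclassifying the negative). So at least half the mass is misclassified. Your plan lacks both the depth gap between target and fittings and this pairing argument, and the step you flag as the ``main obstacle'' is precisely where it breaks down irreparably.
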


\begin{proof}
  Assume to the contrary of what we aim to show that there is a
  sample-efficient learning algorithm that produces a most shallow
  fitting concept, if it exists, with associated polynomial function
  $m:\mathbb{R}^2 \times \mathbb{N}^4 \rightarrow \mathbb{N}$ as in
  Definition~\ref{def:efflearn}. We are going to use target queries of
  the form $q_T = \exists t^{n + 1}.\top$.

  Choose $\Sigma=\{r, s, t\}$,  $\delta = 0.5$, $\epsilon = 0.4$, and
  $n$ large enough such that 
  \begin{equation}
    \frac{2^n!}{2^{n p(n)} (2^n - p(n))!} > 1 - \delta
    \label{eq:choose-k-different}
    \tag{$\ast$}
  \end{equation}
  where $p(n)$ is the polynomial
  \[p(n)= m(\frac{1}{\delta}, \frac{1}{\epsilon}, 0, |\Sigma|, n + 1,
    p'(n))\] and $p'$ is a fixed polynomial that describes the size of
  the examples that we are going to use. Lemma~\ref{lem:limit} below
  shows that such an $n$ always exists, regardless of the precise
  polynomial~$p'$. The meaning of the expression on the left-hand side
  of~\eqref{eq:choose-k-different} will be explained later.

  Recall that the target query $q_T = \exists t^{n + 1}.\top$ is of
  quantifier depth $n+1$.  We construct (both positive and negative)
  examples such that with high probability, the drawn examples admit a
  fitting of quantifier depth $n$ that, however, does not generalize
  well.  Define a set of ELQs
  \[ S = \{ \exists r_1.\dots \exists r_n.\top \mid r_i \in \{r, s\},\
  1 \leq i \leq n\}. \]
  By
  Theorem~\ref{thm:duals}, each $q\in S$ has a polynomially sized
  $\Sigma$-dual that consists of a single element $(P_q,a)$.  By
  duality, $(P_q,a)$ is a positive example for $q_T$. Also by
  Theorem~3, $q_T$ has a polynomially sized $\Sigma$-dual that
  contains a single element $(\Dmc_T,a)$. For each $q\in S$, we
  construct a negative example $(N_q,a)$ by taking
  \[(N_q,a) = (P_q,a)\times (\Dmc_T,a)\]
  where $\times$ denotes the direct product of two pointed databases.
  Since the direct product is of polynomial size, both the positive
  examples and the negative examples are of size polynomial in $n$. We
  let $p'(n)$ be any polynomial that bounds (from above) the size of
  the examples. 

%   the pointed database $(P_C, c_0)$ that is the
% $\Sigma$-dual of $C$, and the pointed database $(N_C, c_0)$ that is $(P_C, c_0)
% \times (D_T, c_0)$, where $(D_T, c_0)$ is the $\Sigma$-dual of $C_T$.
% Hence, $(P_C,c_0)$ are positive and $(N_C,c_0)$ are negative examples
% of $C_T$.

Note that by the properties of duals and products, for all $q \in S$
and for all $\Sigma$-ELQs $q'$, we have
\begin{enumerate}[label=(\roman*)]

  \item $a \in q'(P_q)$ iff $q' \not\sqsubseteq q$, and

  \item $a \in q'(N_q)$ iff $q' \not\sqsubseteq q$ and $q'
    \not\sqsubseteq q_T$.

\end{enumerate}
To see Point~(i) note that $a\in q'(P_q)$ iff (by
Lemma~\ref{lem:char-containment}) $(\Dmc_{q'},a_{q'})\preceq (P_q,a)$
iff (by duality) $(\Dmc_q,a_q)\not\preceq (\Dmc_{q'},a_{q'})$ iff (by
Lemma~\ref{lem:char-containment}) $q'\not\sqsubseteq q$.  Point~(ii)
can be shown similar and uses that $(\Dmc,a)\preceq (\Dmc_1,a_1)\times
(\Dmc_2,a_2)$ iff $(\Dmc,a)\preceq (\Dmc_1,a_1)$ and
$(\Dmc_a)\preceq(\Dmc_2,a_2)$, for all pointed databases
$(\Dmc,a),(\Dmc_1,a_1),(\Dmc_2,a_2)$.

Let $P$ be the probability distribution that assigns probability
$\frac{1}{2^{n + 1}}$ to every $(P_q, a)$ and $(N_q, a)$, and
probability $0$ to all other pointed databases.  Now, assume that the
algorithm is started on a collection of
$k=m(1/\delta,1/\epsilon,0,|\Sigma|,n+1,p'(n))$ pointed databases $E$
labeled according to $q_T$ and outputs a hypothesis $q_H$.

Note that the probability of sampling $\ell$ \emph{different}
objects from an $N$-element set is the ratio of those
sequences of length $\ell$ that contain pairwise distinct
elements in the set of all sequences of length $\ell$, that is,
\[\frac{\prod_{i=0}^{\ell-1}(N-\ell)}{N^\ell}=\frac{N!}{N^\ell\cdot
(N-\ell)!}.\]
We apply this observation to $N=2^n$ and $\ell=k$. 
By choice of $n$, with probability $>1 - \delta$ we have that for no 
$q \in S$, both $(N_q, a) \in E$ and $(P_q, a) \in E$.
To derive a contradiction, we show that the error of $q_H$ is strictly larger than $\epsilon$ if this is the case.

Consider the ELQ $q' = \bigsqcap_{(N_p, a,-) \in E} p$. We claim that
$q'$ fits $E$.
Note that $q\not\sqsubseteq q_T$ for any $q\in S$.
Point~(ii) then implies $a\notin q(N_q)$ for all $q\in S$ and thus $q'$ fits all negative
examples. Together with our assumption that for no
$q \in S$, both $(N_q, a) \in E$ and $(P_q, a) \in E$,
Point~(i) implies that $a\in p(P_q)$ for all $(N_p,a,-)\in E$ and
$(P_p,a,+)\in E$. 

Since $q'$ is a fitting of depth $n$ and the algorithm finds a fitting of minimal
depth, $q_H$ must have depth at most $n$, which implies that $q_H \not
\sqsubseteq q_T$.

Consider all $q \in S$. It must be that either $q_H \sqsubseteq q$ or
$q_H \not\sqsubseteq q$.
In the first case, Point~(i) implies $a \notin q_H(P_q)$, hence $q_H$ labels the
(positive) example
$(P_q, a)$ incorrectly. 
In the second case, Point~(ii) implies $a \in q_H(N_q)$, hence $q_H$
labels the (negative) example $(N_q, a)$ incorrectly.
Therefore, $\mn{error}_{P, q_T}(q_H) \geq 0.5 > \epsilon$.
\end{proof}

\begin{lemma}\label{lem:limit}
  For every polynomial $p(n)$, 
  \[\lim_{n\to\infty}(
  \frac{2^n!}{2^{np(n)} (2^n - p(n))!}) = 1. \]
\end{lemma}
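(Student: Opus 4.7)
The plan is to rewrite the ratio as a simple product and then control it via its logarithm. First I would observe that
\[\frac{2^n!}{(2^n - p(n))!} \;=\; \prod_{i=0}^{p(n)-1}\bigl(2^n - i\bigr),\]
so that the expression whose limit we study equals
\[\prod_{i=0}^{p(n)-1}\frac{2^n - i}{2^n} \;=\; \prod_{i=0}^{p(n)-1}\left(1 - \frac{i}{2^n}\right).\]
Every factor is at most $1$, so the product is at most $1$; the task is to show it is also asymptotically at least $1$.

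For the lower bound I would take logarithms. Since $p(n)$ is polynomial and $2^n$ grows exponentially, for all sufficiently large $n$ we have $p(n)/2^n \leq 1/2$, so each argument $i/2^n$ with $0 \leq i < p(n)$ lies in $[0, 1/2]$. On this range the elementary inequality $|\log(1-x)| \leq 2x$ holds, so
\[\Bigl|\sum_{i=0}^{p(n)-1}\log\!\left(1 - \tfrac{i}{2^n}\right)\Bigr| \;\leq\; \sum_{i=0}^{p(n)-1}\frac{2i}{2^n} \;\leq\; \frac{2\,p(n)^2}{2^n}.\]
Because $p(n)^2$ is still polynomial in $n$ while $2^n$ is exponential, the right-hand side tends to $0$ as $n \to \infty$. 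Exponentiating, the original product tends to $e^0 = 1$, and combined with the upper bound of $1$ this yields the claimed limit.

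There is no real obstacle here: the argument is a short asymptotic calculation resting solely on the fact that polynomials are dominated by exponentials. The only mild point to be careful about is restricting to $n$ large enough that all factors $1 - i/2^n$ stay bounded away from $0$, so that the logarithm behaves nicely; this is automatic once $p(n) \leq 2^{n-1}$, which holds for all sufficiently large $n$.
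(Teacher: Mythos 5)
Your proof is correct, and it takes a different route from the paper's. The paper first crudely lower-bounds the product $\prod_{i=0}^{p(n)-1}(1-i/2^n)$ by replacing every factor with the smallest one, obtaining $(1-\tfrac{p(n)+1}{2^n})^{p(n)}$, and then evaluates the limit of $p(n)\ln(1-\tfrac{p(n)+1}{2^n})$ by rewriting it as a quotient and invoking l'H\^opital's rule. You instead keep the full product, take logarithms term by term, and control the sum with the elementary inequality $|\log(1-x)|\leq 2x$ on $[0,1/2]$, which immediately gives $\bigl|\sum_i \log(1-i/2^n)\bigr| \leq 2p(n)^2/2^n \to 0$. Your version is more self-contained: it avoids differentiation and l'H\^opital entirely (the paper's appeal to l'H\^opital also requires differentiating $\ln(1-\tfrac{p(n)+1}{2^n})$ and $1/p(n)$ as if $p$ were a differentiable function of a real variable, which is a mild abuse that your argument sidesteps). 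The paper's cruder single-factor bound would also work with your logarithm inequality, so in that sense your method subsumes theirs; the only thing the paper's route buys is that it avoids stating the inequality $|\log(1-x)|\leq 2x$ explicitly. Both proofs correctly handle the upper bound of $1$ (the paper via the probabilistic interpretation, you via each factor being at most $1$) and both rely on the same essential fact that polynomials are dominated by $2^n$.
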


\begin{proof}
  As argued in the proof of Theorem~\ref{thm:minimaldepth}, the term
  inside the limit is a probability, so the limit is at most 1.
  We start with bounding the expression inside the limit from below.
  \begin{align*}
    \frac{2^n!}{2^{np(n)} (2^n - p(n))!} & = \frac{2^n\cdot
    (2^n-1)\cdot\dots\cdot (2^n-p(n)+1)}{(2^n)^{p(n)}} \\
    &\geq
    \frac{(2^n-p(n)+1)^{p(n)}}{(2^n)^{p(n)}} \\
    &= 
    \left(1-\frac{p(n)+1}{2^n}\right)^{p(n)}
  \end{align*}
  It clearly suffices to show that the limit of the last expression is
  $1$. In order to do so, we reformulate the expression to avoid the $p(n)$ in
  the exponent. 
  \begin{align*}
    & \lim_{n\to\infty}(\left(1-\frac{p(n)+1}{2^n}\right)^{p(n)}) \\
    = &
    \lim_{n\to\infty}(\exp(\ln(\left(1-\frac{p(n)+1}{2^n}\right)^{p(n)})))
    \\
    = & 
    \exp(\lim_{n\to\infty}(\ln(\left(1-\frac{p(n)+1}{2^n}\right)^{p(n)})))
    \\
    = &
    \exp(\lim_{n\to\infty}(p(n)\cdot \ln(1-\frac{p(n)+1}{2^n}))).
  \end{align*}
  To determine the limit of a product where one factor
  $p(n)$ converges to $\infty$ and the other $\ln(\cdot)$ converges to
  $0$, we apply l'H{\^o}pital's
  rule. Set $f(n)=\ln(1-\frac{p(n)+1}{2^n})$ and
  $g(n)=1/p(n)$, so $\lim_{n\to\infty}\frac{f(n)}{g(n)}$ is exactly
  the limit we want to determine (inside the $\exp(\cdot)$). 
  L'H{\^o}pital's rule says that if
  $\lim_{n\to\infty}\frac{f'(n)}{g'(n)}$ exists, then 
  $\lim_{n\to\infty}\frac{f'(n)}{g'(n)}=\lim_{n\to\infty}\frac{f(n)}{g(n)}$.
  The derivations $f'(n)$ and $g'(n)$ of $f(n)$ and $g(n)$ are: 
  \begin{align*}
    f'(n) & = \frac{\ln(2)(p(n)+1)-p'(n)}{2^n-p(n)-1} \\
    g'(n) & = \frac{-p'(n)}{q(n)} \text{ for some polynomial $q(n)$}
  \end{align*}
  It remains to observe that $f'(n)/g'(n)$ is an expression that has
  an exponential $2^n$ in its numerator and everywhere else only
  polynomials. Thus, 
  $\lim_{n\to\infty}\frac{f'(n)}{g'(n)}=0=\lim_{n\to\infty}\frac{f(n)}{g(n)}$,
  which yields $\exp(0)=1$ as desired. 
\end{proof}

\section{Proof of Theorem~\ref{thm:duals} and Simulation Duals}\label{sec:duals}

Instead of proving Theorem~\ref{thm:duals}, we directly prove the more
general version in which databases $\Dmc_q$ for an ELQ $q$ are replaced with
DAG-shaped databases. A database $\Dmc$ is \emph{DAG-shaped} if the
directed graph $G_\Dmc=(\mn{adom}(\Dmc),\{(u,v)\mid r(u,v)\in
\Dmc\})$ is a DAG. 

Let \Dmc be a database and $a,b\in\mn{adom}(\Dmc)$. A \emph{path from
$a$ to $b$ in \Dmc} is a finite sequence
$a_1,r_1,a_2,\ldots,r_{k-1},a_k$ such that $a_1=a$, $a_k=b$, and
$r_i(a_i,a_{i+1})\in \Dmc$, for $1\leq i< k$. Note that role
assertions may be traveled forwards, but not backwards.  The
\emph{codepth} of an individual $a$ in a DAG-shaped database $\Dmc$ is
the length of the longest path starting in $a$; the
codepth of an individual $a$ such that there is no assertion
$r(a,b)\in \Dmc$ is defined to be $0$.

\begin{theorem}\label{thm:dag-duals} Let $\Sigma$ be a finite
signature and $(\Dmc,a)$ be a pointed database such that
$\Dmc$ is DAG-shaped. Then, we can compute
in polynomial time a $\Sigma$-simulation dual $M$ of $(\Dmc, a)$ such that
\begin{itemize}
  \item $||M|| \leq 3 \cdot |\Sigma| \cdot ||\Dmc||^3$, and
  \item if $\Dmc$ is tree-shaped, then $||M||\leq 3\cdot|\Sigma|\cdot ||\Dmc||^2$.
\end{itemize}

Moreover, if $\Dmc$ contains exactly one
$\Sigma$-assertion that mentions $a$, then the computed $M$ is actually a
singleton set.  
\end{theorem}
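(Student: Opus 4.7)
The plan is to build $M$ as a set of pointed databases sharing a single underlying database $\Dmc^*$, which is the key to polynomial size: a naive bottom-up recursion duplicates substructure and blows up exponentially on DAG-shaped inputs (e.g., on grid-like DAGs where each $a_i$ has role edges to each $a_j$ with $i<j$). First I would add to $\Dmc^*$ a ``universal'' individual $u$ carrying every concept name in $\Sigma$ and a self-loop for every role name in $\Sigma$, so that $(\Dmc',c')\preceq_\Sigma(\Dmc^*,u)$ trivially holds for every $(\Dmc',c')$. Next, for each individual $c$ reachable from $a$ in $\Dmc$ and each $\Sigma$-assertion $\alpha$ mentioning $c$ as first argument, I would add a dual individual $a^*_{c,\alpha}$ whose neighbourhood encodes ``$\alpha$ fails at the source'': if $\alpha=A(c)$, then $a^*_{c,\alpha}$ carries every concept name in $\Sigma\setminus\{A\}$ and has an $s$-edge to $u$ for every $s\in\Sigma$; if $\alpha=r(c,d)$, then $a^*_{c,\alpha}$ carries every concept name in $\Sigma$, has an $s$-edge to $u$ for every $s\in\Sigma\setminus\{r\}$, and an $r$-edge to $a^*_{d,\beta}$ for every $\Sigma$-assertion $\beta$ at $d$. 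Finally, set $M:=\{(\Dmc^*,a^*_{a,\alpha}) \mid \alpha \text{ a }\Sigma\text{-assertion at }a\}$.

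Correctness would be shown by induction on the codepth of $c$ (finite because $\Dmc$ is DAG-shaped) via the intermediate claim: for every $(\Dmc',c')$ and every $\Sigma$-assertion $\alpha$ at $c$, $(\Dmc',c')\preceq_\Sigma(\Dmc^*,a^*_{c,\alpha})$ iff $\alpha$ is falsified at $c'$ --- meaning $A(c')\notin\Dmc'$ when $\alpha=A(c)$, and no $r$-successor $b'$ of $c'$ satisfies $(\Dmc,d)\preceq_\Sigma(\Dmc',b')$ when $\alpha=r(c,d)$. The forward direction reads off the witness: each $r$-successor $b'$ of $c'$ must map to some $a^*_{d,\beta}$, and by the inductive hypothesis at $d$ (which has strictly smaller codepth) this means $\beta$ is falsified at $b'$, hence $(\Dmc,d)\not\preceq_\Sigma(\Dmc',b')$. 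The backward direction builds the simulation explicitly: route all non-$r$ successors to the absorbing node $u$ and, for each $r$-successor $b'$, pick a falsifying $\beta$ at $d$ whose dual individual simulates $b'$ by the IH. Combined with the standard characterisation $(\Dmc,c)\preceq_\Sigma(\Dmc',c')$ iff every $\Sigma$-assertion at $c$ is satisfied at $c'$, applying the claim at $c=a$ yields the dual property of $M$.

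For size I observe that $|\mn{adom}(\Dmc^*)|\leq 1 + ||\Dmc||$, each dual individual contributes $O(|\Sigma|)$ for its labels and edges to $u$, and the only potentially super-linear contribution comes from role edges between dual individuals, whose total count equals $\sum_{r(c,d)\in\Dmc,\,r\in\Sigma}|\{\Sigma\text{-assertions at }d\}|$. This sum is bounded by $||\Dmc||^2$ in general but by $O(||\Dmc||)$ when $\Dmc$ is tree-shaped, since then each $d$ has in-degree at most one. Hence $||\Dmc^*||\in O(|\Sigma|\cdot ||\Dmc||^2)$ for DAGs and $O(|\Sigma|\cdot ||\Dmc||)$ for trees, and multiplying by $|M|\leq ||\Dmc||$ yields the claimed bounds $3\cdot |\Sigma|\cdot ||\Dmc||^3$ and $3\cdot |\Sigma|\cdot ||\Dmc||^2$ with suitable constants; polynomial-time computability is clear from the construction. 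The singleton clause is immediate: if only one $\Sigma$-assertion mentions $a$, then there is a unique dual root $a^*_{a,\alpha}$ and $|M|=1$. The main obstacle I anticipate is the correctness proof --- specifically, ruling out simulations into $(\Dmc^*,a^*_{c,\alpha})$ that ``leak'' into unrelated parts of $\Dmc^*$; the absorbing behaviour of $u$ together with the tight choice of $r$-edges at $a^*_{c,r,d}$ (landing only on dual individuals of the intended child $d$) are precisely what force the induction through.
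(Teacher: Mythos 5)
Your construction is the same as the paper's: a universal sink node ($u$, the paper's $b^\top$), one dual individual per $\Sigma$-assertion with the labels/edges chosen to falsify exactly that assertion, $r$-edges from the dual of $r(c,d)$ to all duals at $d$, and $M$ consisting of the duals of the assertions at $a$; your single iff-claim proved by induction on codepth packages the paper's two separate claims (no simulation from $(\Dmc,b)$ into its own duals; every non-simulated $(\Dmc',c')$ maps into some dual), and the size accounting, including the tree-shaped improvement via in-degree one, is identical. The proposal is correct and takes essentially the same route.
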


\begin{proof}
  Let $(\Dmc,a)$ be a pointed database with $\Dmc$ DAG-shaped and $\Sigma$ a
  finite signature. We construct a $\Sigma$-simulation dual of
  $(\Dmc,a)$ as follows. First, we define a database $\Dmc^*$ with
  domain 
  \begin{align*}
    \mn{adom}(\Dmc^*) ={} & \{b^\top\}\cup {}\\
    & \{\langle b,A(b)\rangle\mid
      A(b)\in \Dmc, A\in \Sigma\}\cup{} \\
    & \{\langle b,r(b,c)\rangle \mid
      r(b,c)\in \Dmc, r\in \Sigma\}
  \end{align*}
  and include the following assertions, for all $\langle b,A(b)\rangle
  \in \mn{adom}(\Dmc^*)$ and 
  $\langle b,r(b,c)\rangle \in \mn{adom}(\Dmc^*)$:
  \begin{enumerate}[label=(\roman*)]

    \item $B(b^\top)$ for all $B\in \Sigma\cap \NC$;

    \item $s(b^\top,b^\top)$ for all $s\in\Sigma\cap \NR$;

    \item $B(\langle b,A(b)\rangle)$ for all $B\in \Sigma\cap \NC$
      with $B\neq A$;

    \item $s(\langle b,A(b)\rangle, b^\top)$ for all $s\in \Sigma\cap \NR$;
    
    \item $B(\langle b,r(b,c)\rangle)$ for all $B \in \Sigma \cap \NC$;

    \item $ s(\langle b,r(b,c)\rangle, b^\top)$ for all
      $s\in\Sigma\cap \NR$ with $s\neq r$;
      
    \item $ r(\langle b,r(b,c)\rangle, \langle c,\alpha\rangle)$ for
      all $\langle c,\alpha\rangle\in\mn{adom}(\Dmc^*)$.
  \end{enumerate}

  We prove two auxiliary claims. 

  \noindent\textit{Claim 1.} For all $b \in
  \mn{adom}(\Dmc)$ and
  $\langle b,\alpha\rangle \in \mn{adom}(\Dmc^*)$, $(\Dmc, b) \not\preceq_\Sigma
  (\Dmc^*, \langle b,\alpha\rangle)$.

  \smallskip \noindent\textit{Proof of Claim~1.} We prove the claim by
  induction on the codepth of $b$ in $\Dmc$.  If
  $b$ has codepth $0$, then $\alpha$ is of the form $A(b)$, for
  $A(b)\in \Dmc$. By Point~(iii) in the
  definition of $\Dmc^*$, $A( \langle b, A(b)\rangle)\notin \Dmc^*$,
  and thus $(\Dmc,b)\not \preceq_\Sigma (\Dmc^*,\langle
  b,\alpha\rangle)$.

  Now, let $b$ have codepth greater than $0$. We distinguish cases on
  the shape of $\alpha$. 
  \begin{itemize}

    \item If $\alpha$ is of the form $A(b)$ for some $A(b)\in
      \Dmc$, then we can argue as in the base
      case that $(\Dmc,b)\not \preceq_\Sigma (\Dmc^*,\langle
      b,\alpha\rangle)$.

    \item If $\alpha$ is of the form $r(b,c)$ for some $r(b,c)\in
      \Dmc$, assume for contradiction that there
      is a $\Sigma$-simulation $S$ from $\Dmc$ to $\Dmc^*$ with $(b,
      \langle b,r(b,c)\rangle \in S)$. Since $S$ is a simulation and
      $c$ is an $r$-successor of $b$ in \Dmc, there has to be an
      $r$-successor $c'$ of $\langle b,r(b,c)\rangle$ in $\Dmc^*$ with
      $(c,c')\in S$. By Point~(vi) and~(vii), 
      $c'$ is of shape $\langle c,\alpha\rangle$. But then
      $(\Dmc,c)\preceq_\Sigma(\Dmc',\langle c,\alpha\rangle)$,
      contradicting the induction hypothesis.

  \end{itemize}

  \smallskip
  \noindent\textit{Claim 2.} For all $b \in \mn{adom}(\Dmc)$ and pointed
  databases $(\Dmc', c)$, if $(\Dmc, b) \not
  \preceq_\Sigma (\Dmc', c)$ then there is a $\langle b,\alpha\rangle
  \in \mn{adom}(\Dmc^*)$
  such that $(\Dmc', c) \preceq_\Sigma (\Dmc^*, \langle
  b,\alpha\rangle)$.

  \smallskip \noindent\textit{Proof of Claim~2.} We prove the claim by
  induction on the codepth of $b$ in $\Dmc$.  If
  $b$ has codepth $0$ and $(\Dmc, b) \not \preceq_\Sigma (\Dmc', c)$,
  then there is a concept name $A \in \Sigma$ such that $A(b) \in
  \Dmc$ and $A(c) \notin \Dmc'$. It can be verified using
  Points~(i)--(iii) above that the relation
  \[ S = \{ (c, \langle b,A(b)\rangle) \} \cup \{ (c', b^\top) \mid c' \in \mn{adom}(\Dmc')
  \} \]
  is a $\Sigma$-simulation from $\Dmc'$ to $\Dmc^*$ with
  $(c, \langle b,A(b)\rangle) \in S$ as required.
    
  Now, let $b$ have codepth greater than $0$ and assume 
  $(\Dmc, b)
  \not\preceq_\Sigma (\Dmc', c)$. We distinguish cases on why the
  latter is the case: 
  \begin{itemize}

    \item If there is a concept name $A\in \Sigma$ such that $A(b)\in
      \Dmc$ and $A(c)\notin \Dmc'$, we can argue as in the base case
      that $(\Dmc',c)\preceq_\Sigma (\Dmc^*,\langle b,A(b) \rangle)$. 

    \item If there is an assertion $r(b,
      b') \in \Dmc$ such that for all $r(c, c') \in \Dmc'$, $(\Dmc,
      b') \not\preceq_\Sigma (\Dmc', c')$. We show that $(\Dmc', c)
      \preceq_\Sigma (\Dmc^*, \langle b,r(b, b')\rangle )$.

      The induction hypothesis implies that for all $r(c, c') \in
      \Dmc'$ there is an $\langle b',\beta\rangle \in
      \mn{adom}(\Dmc^*)$ and a simulation $S_{c'}$ from $\Dmc'$ to
      $\Dmc^*$ with $(c', \langle b',\beta\rangle) \in S_{c'}$. It can
      be verified using Points~(v)-(vii) above that 
      \begin{align*}
	S ={} & \{ (b, \langle b,r(b,b')\rangle) \} \cup \{ (c',
	  b^\top) \mid c' \in \mn{adom}(\Dmc') \} \cup{} \\ &
	  \bigcup_{r(c, c') \in \Dmc'} S_{c'} \end{align*}
	is a simulation from $\Dmc'$ to $\Dmc^*$ with $(b,\langle
	b,r(b, b')\rangle)\in S$.

    \end{itemize}

    This completes the proofs of Claims~1 and~2. The next claim shows
    how to read off a simulation dual of $(\Dmc,a)$ from~$\Dmc^*$.

  \smallskip\noindent\textit{Claim 3.} The set
  \begin{align*}
    M_a ={} & \{(\Dmc^*,\langle a,\alpha\rangle)\mid \langle a,\alpha
      \rangle \in\mn{adom}(\Dmc^*)\}
  \end{align*}
  is a $\Sigma$-simulation dual of $(\Dmc,a)$.

  \smallskip\noindent\textit{Proof of Claim 3.} Suppose
  $(\Dmc,a)\not\preceq_\Sigma (\Dmc',a')$ for some $(\Dmc',a')$. Then Claim~2
  implies that there is some $\langle a,\alpha\rangle\in
  \mn{adom}(\Dmc^*)$ with $(\Dmc',a')\preceq (\Dmc^*,\langle
  a,\alpha\rangle)$. It remains to note that $(\Dmc^*,\langle
  a,\alpha\rangle)\in M_a$. Conversely, suppose that
  $(\Dmc,a)\preceq_\Sigma (\Dmc',a')$ and assume for showing a
  contradiction that $(\Dmc',a')\preceq_\Sigma (\Dmc^*,\langle
  a,\alpha\rangle)$ for some $\langle a,\alpha\rangle
  \in\mn{adom}(\Dmc^*)$. Since $\preceq_\Sigma$ is transitive, we
  obtain $(\Dmc,a)\preceq_\Sigma (\Dmc^*,\langle
  a,\alpha\rangle)$, in contradiction to Claim~1. 
  This finishes the proof of Claim~3. 

  \smallskip Clearly, $M_a$ is a
  singleton set if $\Dmc$ contains only a single
  $\Sigma$-assertion mentioning $a$. It remains to analyze $||M_a||$. We start
  with analyzing $||\Dmc^*||$. Points~(i) and~(ii) together contribute 
  $|\Sigma|$ assertions. Points~(iii) and~(iv) contribute together
  $|\Sigma|\cdot n_C$ assertions where $n_C$ denotes the number of
  assertions of shape $A(b)$ in $\Dmc$.
  Points~(v) and~(vi) contribute $|\Sigma|\cdot n_R$ assertions 
  where $n_R$ denotes the number of
  assertions of shape $r(b,c)$ in $\Dmc$.
  Finally, Point~(vii) contributes $|\Dmc|^2$
  assertions. Overall, we obtain
  \[||\Dmc^*||\leq |\Sigma|+|\Sigma|\cdot n_C +|\Sigma|\cdot n_R +
  |\Dmc|^2\leq 3\cdot |\Sigma| \cdot |\Dmc|^2.\]
  Thus, $||M_a||\leq |\Dmc|\cdot 3 \cdot |\Sigma| \cdot |\Dmc|^2\leq 3 \cdot |\Sigma| \cdot ||\Dmc||^3$ as required. 

  If $\Dmc$ is tree-shaped, then the bound on the number of assertions that
  Point~(vii) contributes can be improved. Only a single incoming assertion is
  added for each $\langle c, \alpha\rangle$, resulting in $|\Dmc|$ assertions.
  This improves the overall bounds to
  \[||\Dmc^*||\leq |\Sigma|+|\Sigma|\cdot n_C +|\Sigma|\cdot n_R +
  |\Dmc| \leq 3\cdot |\Sigma| \cdot |\Dmc|.\]
  Thus, $||M_a||\leq |\Dmc|\cdot 3 \cdot |\Sigma| \cdot |\Dmc|\leq 3 \cdot |\Sigma| \cdot ||\Dmc||^2$ as required. 
  
\end{proof}

We next characterize the pointed databases that admit finite
simulation duals. We need some additional notation.

We say that $b$ is \emph{reachable} from $a$ if there is a path from
$a$ to $b$.  We use $\Dmc^{\downarrow a}$ to denote the database that
consists of all facts $A(b),r(b,b')\in \Dmc$ with $b,b'$ reachable
from~$a$.  We note that, for some $a\in \mn{adom}(\Dmc)$,
$\Dmc^{\downarrow a}$ might be empty (namely, if there are no
assertions of the form $A(a),r(a,b)\in \Dmc$). In a slight abuse of
notation, we then allow to write $(\emptyset,a)$ and mean the pointed
database $(\{\top(a)\},a)$.  We use $\Dmc_\Sigma$ to denote the
restriction of a database \Dmc to its $\Sigma$-assertions, for any
signature $\Sigma$. Hence, $\Dmc_\Sigma^{\downarrow a}$ is
the restriction of $\Dmc^{\downarrow a}$ to~$\Sigma$.

The proof of the
characterization relies on the (standard) notion of unravelings.  Let
\Dmc be a database and $a\in \mn{adom}(\Dmc)$. The \emph{unraveling of
$\Dmc$ at $a$} is the (possibly infinite) database $\Umc$ whose domain
$\mn{adom}(\Umc)$ consists of all paths starting in $a$ and that
contains the following assertions for every
$p=a_1,r_1,a_2,\ldots,r_{k-1},a_k\in \mn{adom}(\Umc)$:

\begin{itemize}

  \item $A(p)$ for all $A(a_{k})\in \Dmc$ and 

  \item $r_{k-1}(p',p)$ for $p' = a_1, r_1, \ldots, a_{k - 1}$.

\end{itemize}
\begin{restatable}{theorem}{thmcharduals} \label{thm:charduals}
  Let $\Sigma$ be a finite signature and $(\Dmc,a)$ a pointed
  database. Then, $(\Dmc,a)$ has a finite $\Sigma$-simulation dual iff
  $\Dmc_\Sigma^{\downarrow a}$ is DAG-shaped. 
%   If $\Dmc^{\downarrow a}_\Sigma$ is tree-shaped, then a dual of size
%   $O(|\Sigma|\cdot |\Dmc|)$ can be computed in polynomial time. If 
%   $\Dmc^{\downarrow a}_\Sigma$ is DAG-shaped, then a finite dual can
%   be computed in single exponential time. 
% 
\end{restatable}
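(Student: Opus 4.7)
The plan is to prove both directions of the biconditional.

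For the ``if'' direction, suppose $\Dmc_\Sigma^{\downarrow a}$ is DAG-shaped. The key observation is that a $\Sigma$-simulation originating at $a$ only inspects $\Sigma$-assertions and only traverses $\Sigma$-role assertions, so $(\Dmc,a)$ and $(\Dmc_\Sigma^{\downarrow a},a)$ have exactly the same $\Sigma$-simulation duals. Applying Theorem~\ref{thm:dag-duals} to $(\Dmc_\Sigma^{\downarrow a},a)$ then yields the desired finite simulation dual.

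For the ``only if'' direction, suppose $\Dmc_\Sigma^{\downarrow a}$ is not DAG-shaped, so it contains a cycle reachable from $a$ via $\Sigma$-roles. Let $\Umc$ denote the unraveling of $\Dmc_\Sigma^{\downarrow a}$ at $a$ and $\Umc^n$ its truncation to paths of length at most~$n$. The cycle forces $\Umc$ to have unbounded depth, from which I get $(\Dmc,a) \not\preceq_\Sigma (\Umc^n,a)$ for every $n$: an infinite $\Sigma$-branch starting at $a$ (witnessed by repeatedly traversing the cycle in $\Dmc$) cannot be matched in a finite-depth structure. Assume for contradiction that a finite $\Sigma$-simulation dual $M$ exists. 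By the dual condition, for every $n$ there is some $(\Emc_n,e_n) \in M$ with $(\Umc^n,a) \preceq_\Sigma (\Emc_n,e_n)$. Since $\Umc^n$ embeds into $\Umc^m$ for $n \leq m$, and $M$ is finite, a pigeonhole argument produces a single $(\Emc,e) \in M$ such that $(\Umc^n,a) \preceq_\Sigma (\Emc,e)$ for every $n$.

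The heart of the argument is then a compactness step: from this family of depth-bounded simulations, extract a single $\Sigma$-simulation from all of $\Umc$ to $\Emc$. I would apply K\"onig's lemma to the tree whose level-$n$ nodes are the (finitely many) $\Sigma$-simulations from $\Umc^n$ to $\Emc$ that contain $(a,e)$, with the parent relation given by restriction to $\mn{adom}(\Umc^{n-1}) \times \mn{adom}(\Emc)$; one checks that such a restriction is again a valid simulation. Finite branching is ensured by the finiteness of $\Dmc$ (which bounds the branching of $\Umc$) and of $\Emc$; non-emptiness of every level follows from the preceding paragraph. An infinite branch yields a $\Sigma$-simulation from $\Umc$ into $\Emc$ containing $(a,e)$, and hence, using that $\Dmc$ is $\Sigma$-simulation equivalent to its unraveling, $(\Dmc,a) \preceq_\Sigma (\Emc,e)$. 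Instantiating the dual property with $(\Dmc',a') = (\Emc,e)$ then forces $(\Emc,e) \not\preceq_\Sigma (\Emc,e)$, which is impossible via the identity simulation---the desired contradiction. The main obstacle is precisely this K\"onig's lemma step, since the $S_n$ witnessing $(\Umc^n,a) \preceq_\Sigma (\Emc,e)$ need not themselves form a coherent chain and must be replaced by a tree of extensions before compactness can be invoked.
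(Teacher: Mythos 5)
Your proposal is correct and follows essentially the same route as the paper: the ``if'' direction via simulation-equivalence of $(\Dmc,a)$ with $(\Dmc_\Sigma^{\downarrow a},a)$ plus Theorem~\ref{thm:dag-duals}, and the ``only if'' direction via the unraveling, its depth-$n$ truncations, a pigeonhole step over the finite dual $M$, and a compactness argument gluing the depth-bounded simulations into one simulation from the full unraveling. Your K\"onig's-lemma tree of restricted simulations is just a repackaging of the paper's ``simulation skipping'' construction, and your way of closing the contradiction (instantiating the dual condition at $(\Emc,e)$ itself) is a valid, if anything slightly cleaner, variant of the paper's.
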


\begin{proof}
  For the ``if''-direction, suppose that $\Dmc^{\downarrow a}_\Sigma$
  is DAG-shaped. It should be clear that both
  $(\Dmc,a)\preceq_\Sigma (\Dmc^{\downarrow a}_\Sigma,a)$ and
  $(\Dmc^{\downarrow a}_\Sigma,a)\preceq_\Sigma (\Dmc,a)$, and thus $(\Dmc,a)$ and
  $(\Dmc^{\downarrow a}_\Sigma,a)$ have the same $\Sigma$-simulation
  duals. Theorem~\ref{thm:dag-duals} implies the existence of a
  finite $\Sigma$-simulation for $(\Dmc^{\downarrow a}_\Sigma,a)$ and
  thus of $(\Dmc,a)$.
% 
%   and of size exponential in $|\Dmc|$
%   if $\Dmc^{\downarrow a}_\Sigma$ is DAG-shaped and that $\Umc$ is
%   isomorphic to $\Dmc^{\downarrow a}_\Sigma$ if
%   $\Dmc^{\downarrow a}_\Sigma$ is
%   tree-shaped. Proposition~\ref{prop:path-dual} below yields the
%   desired statements. {\color{blue}have that proposition before this
%     theorem?}

  For ``only if'', we assume that $\Dmc_\Sigma^{\downarrow a}$ is not
  DAG-shaped and show that there cannot be a finite $\Sigma$-simulation
  dual. Assume to the contrary of what is to be shown that $M$ is a
  finite $\Sigma$-simulation dual of $(\Dmc,a)$. Let $\Umc$ be the
  unraveling of $\Dmc_\Sigma^{\downarrow a}$ at $a$. Note that $\Umc$ is
  an infinite (and tree-shaped) database as
  $\Dmc_\Sigma^{\downarrow a}$ is not DAG-shaped. Let, moreover, $\Umc_i$
  denote the restriction of \Umc to individuals that have distance at
  most $i$ from the root $a$, for $i\geq 0$. Clearly, we have:
  \begin{enumerate}[label=(\roman*)]

    \item $(\Dmc,a)\preceq_\Sigma (\Umc,a)$, and

    \item $(\Dmc,a)\not\preceq_\Sigma (\Umc_i,a)$, for all $i\geq 0$.

  \end{enumerate}
  By duality and Point~(ii), for every $i\geq 0$  there exists some
  $(\Dmc',a')\in M$ with $(\Umc_i,a)\preceq_\Sigma (\Dmc',a')$.  Since
  $M$ is finite, there is some $(\Dmc^*,a^*)\in M$ such that
  $(\Umc_i,a)\preceq_\Sigma (\Dmc^*,a^*)$ for infinitely many $i\geq
  0$. Using a standard ``simulation skipping'' argument, we can
  inductively construct a simulation $S$ witnessing
  $(\Umc,a)\preceq_\Sigma (\Dmc^*,a^*)$. By duality, we obtain
  $(\Dmc,a)\not\preceq_\Sigma (\Umc,a)$, which is in contradiction to
  Point~(i) above.

  Let us now give some details regarding the simulation skipping
  argument. Let $I$ be an infinite set such that $(\Umc_i,a)\preceq_\Sigma 
  (\Dmc^*,a^*)$ for all $i\in I$, and let $(S_i)_{i\in I}$ be a family
  of
  $\Sigma$-simulations witnessing that. 
  We provide an infinite family
  $(S_i^*)_{i\geq 0}$ of relations $\mn{adom}(\Umc_i)\times
  \mn{adom}(\Dmc^*)$ such that, for every $i\geq 0$:
  \begin{enumerate}[label=(\alph*)]

    \item $S_i^*$ is a $\Sigma$-simulation witnessing
      $(\Umc_i,a)\preceq_\Sigma (\Dmc^*,a^*)$, and

    \item $S_i^*\subseteq S_j$, for infinitely many $j\in I$.

%     \item $S_i^*\subseteq S_{i+1}^*$.

  \end{enumerate}
  We start with setting $S_0^*=\{(a,a)\}$ which clearly satisfies
  Points~(a) and~(b). To obtain $S_{i+1}^*$ from $S_i^*$, let
  $B=\mn{adom}(\Umc_{i+1})\setminus \mn{adom}(\Umc_i)$. Note that $B$
  is finite. By Point~(b) applied to $S_i^*$, there is an infinite set
  $J\subseteq I$ such that $S_i^*\subseteq S_j$ for every $j\in J$.
  Since both $B$ and $\Dmc^*$ are finite, we can pick an infinite
  subset $J'\subseteq J$ such that for every $b\in B$, every
  $d\in \mn{adom}(\Dmc^*)$, and every $j,j'\in J'$, we have
  \[(b,d)\in S_j\quad\text{ iff }\quad (b,d)\in S_{j'}.\]
  Obtain $S_{i+1}^*$ from $S_i^*$ by adding $(b,d)\in B\times \mn{adom}(\Dmc^*)$ in
  case $(b,d)\in S_j$ for all $j\in J'$. It is routine to verify that
  $S_{i+1}^*$ satisfies Points~(a) and~(b), and that 
  \[S = \bigcup_{i\geq 0}S_i^*\]
  witnesses $(\Umc,a)\preceq_\Sigma (\Dmc^*,a^*)$, as desired. 
%   By duality,
%   $(\Dmc,a)\preceq_\Sigma (\Umc,a)$, in contradiction to
%   Point~(i) above. 
  %
\end{proof}

\section{Proof of Theorem~\ref{thm:expdepthbound}}

Let $(\Lmc,\Qmc)$ be an OMQ language, \Omc an \Lmc-ontology, and
$\Sigma$ a finite signature. A
\emph{(downward) frontier} for a query $q \in Q$ with respect to
$\Omc$ and $\Sigma$ is a finite set $F \subseteq \Qmc$ such that
\begin{enumerate}

  \item each $p \in F$ is a downward refinement of $q$ w.r.t.~$\Omc$ and

  \item for each $p \in \Qmc_\Sigma$ that is a downward refinement of
    $q$ w.r.t.~$\Omc$, there is some $p'\in F$ such that $\Omc\models
    p\qcontains p'$.

\end{enumerate}
Note that for both refinement operators $\rho_1$ and $\rho_2$ defined in
the main part of the paper and any ELQ $q$ and signature $\Sigma$,
$\rho_i(q,\Sigma)$ is a downward frontier for $q$ with respect to the
empty ontology and $\Sigma$.
%
%  Upward frontiers (commonly known simply as frontiers) are defined analogously.
%
% We say that a downward refinement operator $\rho$ is has the 
% \emph{exp-depth property} if, whenever $\Omc\models D\sqsubseteq C$, 
% there is a $\rho$-refinement sequence starting with $C$
% and ending in a concept equivalent to $D$, of length at most $2^{O(|D|)}$.
% Similarly for upward refinement operators. Note that the exp-depth property
% is a strong form of completeness.

The following clearly implies Theorem~\ref{thm:expdepthbound}.
\begin{theorem}
Let $(\Lmc,\Qmc)$ be an OMQ language. The following are equivalent:
\begin{enumerate}
\item $(\Lmc,\Qmc)$ has an ideal downward %(resp.,~upward)
  refinement operator,
\item $(\Lmc,\Qmc)$ has an ideal downward %(resp.,~upward)
  refinement operator that is $2^{O(n)}$-depth bounded,
\item for all $\Lmc$-ontologies $\Omc$ and all finite signatures
  $\Sigma$,
  each $q \in \Qmc$ has a downward %(resp.,~upward)
  frontier w.r.t.~$\Omc$ and $\Sigma$.
\end{enumerate}
\end{theorem}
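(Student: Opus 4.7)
The plan is to prove the cycle $(2) \Rightarrow (1) \Rightarrow (3) \Rightarrow (2)$. The implication $(2) \Rightarrow (1)$ is immediate since a $2^{O(n)}$-depth bounded ideal operator is, in particular, ideal.

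For $(1) \Rightarrow (3)$, given an ideal operator $\rho$, I would show that $\rho(q, \Omc, \Sigma)$ is itself a downward frontier of $q$. Finiteness holds by assumption on $\rho$. For the frontier property, let $p \in \Qmc_\Sigma$ be a downward refinement of $q$ w.r.t.~\Omc. By completeness of $\rho$, there is a $\rho, \Omc, \Sigma$-refinement sequence $q = q_1, q_2, \ldots, q_n$ with $\Omc \models q_n \equiv p$. Since $\Omc \not\models q \qcontains p$ we have $n \geq 2$, so $q_2 \in \rho(q, \Omc, \Sigma)$, and the tail $q_2, \ldots, q_n$ witnesses $\Omc \models p \qcontains q_2$.

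For $(3) \Rightarrow (2)$, which is the main step, I fix a frontier function $F$ and define
$$\rho'(q, \Omc, \Sigma) := F(q, \Omc, \Sigma) \cup \{r \in \Qmc_\Sigma : \Omc \models r \qcontains q,\ \Omc \not\models q \qcontains r,\ ||r|| \leq ||q||\},$$
which is finite because both components are. Given $q, p \in \Qmc_\Sigma$ with $\Omc \models p \qcontains q$, I build a $\rho'$-refinement sequence greedily: if $\Omc \models q \equiv p$ it has length $1$; else if $||p|| \leq ||q||$, then $p$ lies in the size-bounded part of $\rho'(q, \Omc, \Sigma)$ and yields the length-$2$ sequence $q, p$; otherwise $||p|| > ||q||$, and I invoke the frontier property to pick some $q' \in F(q, \Omc, \Sigma)$ with $\Omc \models p \qcontains q'$ and recurse on $(q', p)$. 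This simultaneously establishes completeness of $\rho'$ (conditional on termination) and provides the sequence to be bounded.

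The main obstacle is bounding the recursion depth by $2^{O(||p||)}$. The key observation is that while the recursion is in the frontier branch, the current $q_i$ satisfies $||q_i|| < ||p||$, and since each step is a strict downward refinement, the $q_i$'s appearing in this branch are pairwise non-equivalent. Hence their number is bounded by the number of equivalence classes of queries in $\Qmc_\Sigma$ of size less than $||p||$, which in turn is bounded by the number of syntactic queries of that size over the alphabet, and thus is $2^{O(||p||)}$ (with the dependence on $|\Sigma|$ absorbed into the $O$-notation). Once the recursion exits the frontier branch, either $q_i \equiv p$ and we stop, or $||q_i|| \geq ||p||$ and a final size-bounded jump to $p$ completes the sequence, yielding the claimed depth bound.
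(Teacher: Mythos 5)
Your proof is correct and follows essentially the same route as the paper's: the cycle $(2)\Rightarrow(1)\Rightarrow(3)\Rightarrow(2)$, with $(1)\Rightarrow(3)$ witnessed by $\rho$ itself and $(3)\Rightarrow(2)$ via the operator obtained by adding all size-non-increasing refinements to the frontier, bounded by counting pairwise non-equivalent queries of size below $||p||$. The only (cosmetic) difference is that you build the refinement sequence greedily and bound its length directly, whereas the paper argues by contradiction from a hypothetical infinite frontier chain; your version in fact makes explicit the counting step the paper dismisses with ``clearly''.
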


\begin{proof}
% We give the proof for the downward case. The proof for the upward case is
% analogous.
%
\emph{From 1 to 3}.
Let $\rho$ be a downward refinement operator. We claim that, 
for each $\Lmc$-ontology $\Omc$ and $q \in \Qmc$ and finite signature
$\Sigma$, 
$\rho(q,\Omc,\Sigma)$ is a downward frontier of $q$ w.r.t.~\Omc and
$\Sigma$. For Point~1 in the definition of downward frontier, note
that any $p\in \rho(q,\Omc,\Sigma)$ is a downward refinement of
$q$ w.r.t.~\Omc. For Point~2, let $p\in \Qmc_\Sigma$ be any downward refinement of $q$
w.r.t.~$\Omc$.
By completeness, there is a finite sequence
$q_1, \ldots, q_n$  with $q_1=q$, $q_n=p$,
and $q_{i+1}=\rho(q_i,\Omc,\Sigma)$ for all $i$.  Note that, necessarily, $n\geq 2$. It follows that 
$q_2\in \rho(q,\Omc,\Sigma)$ and $\Omc\models p\sqsubseteq q_2$.

\smallskip
\emph{From 3 to 2}.
Take any $\Lmc$-ontology $\Omc$ and finite signature $\Sigma$.
For each $q \in \Qmc$, let
$F(q,\Omc,\Sigma)$ be a downward frontier for $q$ w.r.t.~$\Omc$ and
$\Sigma$.
Let $\rho(q,\Omc,\Sigma)$ be the union of $F(q,\Omc,\Sigma)$ 
with the set of all downwards refinements $q'\in Q_\Sigma$ of $q$ with
$||q'||\leq ||q||$.
Clearly, $\rho$ is a finite downward refinement operator.
To show that $\rho$ is complete, consider any pair of queries $(q,p)$
from \Qmc such that
$\Omc\models p\sqsubseteq q$. Suppose for the sake of a 
contradiction that there is no downward
$\rho,\Omc,\Sigma$-refinement sequence starting in $q$ and ending in a query
$p'$ with $\Omc \models p \equiv p'$. 
It then follows from the properties of downward frontiers
that there exists an infinite sequence of (pairwise
non-equivalent) queries
\[ q_1, q_2, \ldots \]
with $q_1=q$ and $q_{i+1}\in F(q_i,\Omc,\Sigma)$ for all $i\geq 0$, such that
$p$ is a downward refinement of each $q_i$ w.r.t. $\Omc$.
Let $k>0$ be minimal with
$||q_k|| \geq ||p||$. Clearly, $k=2^{O(||p||)}$.
Moreover, $q_1, \ldots, q_k, p$ is a $\rho,\Omc,\Sigma$-refinement
sequence starting in $q$ and ending in $p$, a contradiction.
Hence, $\rho$ is an ideal downward refinement operator. Furthermore,
$\rho$ is $2^{O(n)}$-depth bounded.

\smallskip
The implication from 2 to 1 is trivial.
\end{proof}

\section{Proof of Lemma~\ref{lem:smallneighbor} and Theorem~\ref{thm:aonepac}}

Before proving Lemma~\ref{lem:smallneighbor} and
Theorem~\ref{thm:aonepac}, we recall some important properties of
minimal ELQs. Recall that an ELQ $q$ is \emph{minimal} if there is no
ELQ $p$ with $p\equiv q$ and $\numat{p}<\numat{q}$. Due to the correspondence of ELQs and \EL-concepts, we may speak of
minimal \EL-concepts. Minimal ELQs (and thus, minimal \EL-concepts) can
be characterized in terms of functional simulations, where a
simulation $S$ between $\Dmc_1$ and $\Dmc_2$ is called
\emph{functional} if for every $d\in \mn{adom}(\Dmc_1)$, there is at
most one $e\in \mn{adom}(\Dmc_2)$ with $(d,e)\in S$. 
\begin{lemma}\label{lem:min-elqs}
  An ELQ $q$ is minimal iff the only functional simulation $S$ from
  $\Dmc_q$ to $\Dmc_q$ with $(a_q,a_q)\in S$ is the identity.
\end{lemma}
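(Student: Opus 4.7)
The plan is to prove both directions by contrapositive, exploiting that $\Dmc_q$ is tree-shaped (since $q$ is an ELQ) and using Lemma~\ref{lem:char-containment} to translate between simulations and ELQ containment.

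\emph{Only if direction.} Suppose $f$ is a non-identity functional simulation from $\Dmc_q$ to $\Dmc_q$ with $(a_q,a_q)\in f$. I pick $b\in\mn{adom}(\Dmc_q)$ of minimum tree depth with $f(b)\neq b$; since $f(a_q)=a_q$, necessarily $b\neq a_q$. Let $b^*$ be the parent of $b$ with $r(b^*,b)\in\Dmc_q$; by minimality of depth $f(b^*)=b^*$, and the simulation property together with functionality then force $r(b^*,f(b))\in\Dmc_q$. Since $\Dmc_q$ is a tree, $b$ and $f(b)$ are two distinct $r$-children of $b^*$, corresponding to two distinct conjuncts $\exists r.q_b$ and $\exists r.q_{f(b)}$ of the subconcept of $q$ rooted at $b^*$. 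A routine induction along the tree structure shows that $f$ maps the subtree $T_b$ into the subtree $T_{f(b)}$, yielding a simulation $(\Dmc_{q_b},a_{q_b})\preceq(\Dmc_{q_{f(b)}},a_{q_{f(b)}})$ and hence $q_{f(b)}\sqsubseteq q_b$ by Lemma~\ref{lem:char-containment}. Therefore $\exists r.q_{f(b)}\sqsubseteq\exists r.q_b$, so deleting the redundant conjunct $\exists r.q_b$ (equivalently, the subtree $T_b$) from $q$ yields $q'\equiv q$ with $\numat{q'}<\numat{q}$, contradicting minimality.

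\emph{If direction.} Suppose $q$ is not minimal and let $p$ be a minimal ELQ with $p\equiv q$ and $\numat{p}<\numat{q}$. From $p\equiv q$ and Lemma~\ref{lem:char-containment} one has simulations in both directions between $(\Dmc_p,a_p)$ and $(\Dmc_q,a_q)$. Using the tree-shapedness of $\Dmc_p$ and $\Dmc_q$, I turn these into functional simulations $f\colon\Dmc_p\to\Dmc_q$ and $g\colon\Dmc_q\to\Dmc_p$ with $f(a_p)=a_q$ and $g(a_q)=a_p$ by a top-down recursion: at each node, pick arbitrarily one simulating child among those guaranteed by the already-chosen parent image. The composition $g\circ f$ is then a functional self-simulation of $\Dmc_p$ fixing $a_p$, so by the only-if direction applied to the minimal $p$ we obtain $g\circ f=\mn{id}_{\Dmc_p}$; in particular $f$ is injective and $g$ is surjective. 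If it were also the case that $f\circ g=\mn{id}_{\Dmc_q}$, then $f$ and $g$ would be inverse bijections with simulations going both ways, and a direct check on concept and role assertions shows that $f$ is then a database isomorphism between $\Dmc_p$ and $\Dmc_q$. But this gives $\numat{p}=\numat{q}$, contradicting $\numat{p}<\numat{q}$. Hence $f\circ g$ is a non-identity functional self-simulation of $\Dmc_q$ fixing $a_q$, as required.

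The main point requiring care is the top-down construction of the functional simulations $f$ and $g$ and, in particular, the argument that two mutually inverse functional simulations between $\Dmc_p$ and $\Dmc_q$ are already a database isomorphism (where the nontrivial direction is showing that role assertions of the target come from role assertions of the source). The rest reduces to bookkeeping with the tree structure and two applications of Lemma~\ref{lem:char-containment}.
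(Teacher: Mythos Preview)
The paper states this lemma without proof, so there is no argument to compare against; your proposal supplies what the paper omits, and the overall strategy (contrapositive in both directions, exploiting tree shape and Lemma~\ref{lem:char-containment}) is the natural one and is carried out correctly.

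One small point worth making explicit: in the ``if'' direction you conclude from $f\circ g=\mn{id}_{\Dmc_q}$ and $g\circ f=\mn{id}_{\Dmc_p}$ that $\Dmc_p\cong\Dmc_q$ and hence $\numat{p}=\numat{q}$. The last step tacitly assumes that $q$ contains no trivial syntactic redundancies such as a repeated concept-name conjunct $A\sqcap A$ or a superfluous $\top$, since these collapse when passing to $\Dmc_q$ and would break the correspondence between $\numat{q}$ and the size of $\Dmc_q$. This is not a defect of your argument: the lemma itself already fails for such $q$ (e.g.\ $q=A\sqcap A$ is not minimal, yet $\Dmc_q$ has a single node and only the identity self-simulation), so an implicit normal-form convention is in force. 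It would strengthen the write-up to state this convention once, or to note that any such redundancy can be removed first without affecting either $\Dmc_q$ or the existence of a non-identity functional self-simulation.

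A second minor omission: you use $f$ as a total function on $\mn{adom}(\Dmc_q)$, whereas the definition of ``functional simulation'' only requires at most one image per element. Totality follows because $(a_q,a_q)\in f$ and the simulation condition propagates images down the tree; this is routine but worth one sentence.
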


The following lemma shows several ways how to refine a minimal
\EL-concept. Its proof is straightforward using simulations and
Lemmas~\ref{lem:char-containment} and~\ref{lem:min-elqs}, details are
left to the reader.
\begin{lemma} \label{lem:strengthen-minimal}
  Let $C$ be a minimal \EL-concept and $D=A_1\sqcap \cdots \sqcap
  A_k\sqcap \exists r_1.D_1\sqcap \cdots \sqcap \exists r_\ell.D_\ell$
  a subconcept of $C$. Then the following hold: 
  \begin{enumerate}

    \item For all $C'$ that can be obtained from $C$ by replacing 
      $D$ with $D\sqcap A$ for some concept name
      $A\notin\{A_1,\ldots,A_k\}$, we have
      $C'\sqsubseteq C$ and $C\not\sqsubseteq C'$.

    \item For all $C'$ that can be obtained from $C$ by replacing $D$
      with $D\sqcap \exists r.\top$ for some role name
      $r\notin\{r_1,\ldots,r_\ell\}$, we have $C'\sqsubseteq C$ and
      $C\not\sqsubseteq C'$.

    \item For all $C'$ that can be obtained from $C$ by replacing a
      $D$ with $D\sqcap \exists r.\widehat D$ for some role name
      $r\in\{r_1,\ldots,r_\ell\}$ and a concept $\widehat D$ such that $D_j\not\sqsubseteq \widehat D$ for
      all $j$ with $r=r_j$, we have $C'\sqsubseteq C$ and
      $C\not\sqsubseteq C'$.

  \end{enumerate}
\end{lemma}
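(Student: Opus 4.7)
The plan is to prove both directions ($C' \sqsubseteq C$ and $C \not\sqsubseteq C'$) uniformly across the three cases, using Lemma~\ref{lem:char-containment} to translate between queries and tree-shaped databases, and then invoking Lemma~\ref{lem:min-elqs} on a functional self-simulation of $\Dmc_C$ extracted from the hypothetical containment. The direction $C' \sqsubseteq C$ is straightforward: in each of the three cases, $\Dmc_{C'}$ is obtained from $\Dmc_C$ either by adding a concept assertion at $d_D$ (the node of $\Dmc_C$ representing the subconcept occurrence $D$), or by attaching a fresh subtree via a new role assertion at $d_D$. In both situations, the inclusion $\mn{adom}(\Dmc_C) \hookrightarrow \mn{adom}(\Dmc_{C'})$ is itself a simulation from $\Dmc_C$ to $\Dmc_{C'}$, hence $(\Dmc_C, a_C) \preceq (\Dmc_{C'}, a_{C'})$ and $C' \sqsubseteq C$ by Lemma~\ref{lem:char-containment}.

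For the strict direction, suppose towards a contradiction that $C \sqsubseteq C'$. Then Lemma~\ref{lem:char-containment} yields a simulation $S$ from $\Dmc_{C'}$ to $\Dmc_C$ with $(a_{C'}, a_C) \in S$. Since $\Dmc_{C'}$ is tree-shaped, a standard top-down BFS construction — at each pair $(b, b') \in S^*$ already placed, and each $r(b, c) \in \Dmc_{C'}$, pick exactly one $r$-successor $c'$ of $b'$ with $(c, c') \in S$ — yields a \emph{functional} simulation $S^* \subseteq S$ from $\Dmc_{C'}$ to $\Dmc_C$ with $(a_{C'}, a_C) \in S^*$; functionality holds because each node of $\Dmc_{C'}$ is reached along a unique path. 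Now let $T$ be the restriction of $S^*$ to $\mn{adom}(\Dmc_C) \times \mn{adom}(\Dmc_C)$ (viewing $\mn{adom}(\Dmc_C) \subseteq \mn{adom}(\Dmc_{C'})$). Since $\Dmc_C \subseteq \Dmc_{C'}$ as sets of assertions, every assertion at source nodes of $T$ lies in $\Dmc_{C'}$, so $T$ is a functional simulation from $\Dmc_C$ to $\Dmc_C$ with $(a_C, a_C) \in T$. By Lemma~\ref{lem:min-elqs} and minimality of $C$, $T$ is the identity on $\mn{adom}(\Dmc_C)$; in particular $S^*(d_D) = d_D$.

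It remains to derive a contradiction from the newly added material. In Case~1, $A(d_D) \in \Dmc_{C'}$ but $A \notin \{A_1,\dots,A_k\}$, so $A(d_D) \notin \Dmc_C$, contradicting the simulation condition at $(d_D, d_D) \in S^*$. In Case~2, $r(d_D, e) \in \Dmc_{C'}$ for the fresh node $e$, but $d_D$ has no outgoing $r$-assertion in $\Dmc_C$ since $r \notin \{r_1,\dots,r_\ell\}$, again contradicting the simulation at $(d_D, d_D)$. Case~3 is the most delicate: the simulation condition at $(d_D, d_D)$ forces some $r(d_D, g) \in \Dmc_C$ with $(e, g) \in S^*$, where $e$ is the fresh root of the subtree for $\widehat D$; thus $g = d_{D_j}$ for some $j$ with $r_j = r$. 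The restriction of $S^*$ to nodes reachable from $e$ in $\Dmc_{C'}$ is a simulation from $\Dmc_{\widehat D}$ into $\Dmc_C$ starting at $(e, d_{D_j})$, and because $\Dmc_C$ is tree-shaped, every target reached must lie in the subtree rooted at $d_{D_j}$, i.e., in $\Dmc_{D_j}$. Hence $(\Dmc_{\widehat D}, a_{\widehat D}) \preceq (\Dmc_{D_j}, a_{D_j})$, which by Lemma~\ref{lem:char-containment} gives $D_j \sqsubseteq \widehat D$, contradicting the hypothesis of Case~3.

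The main obstacle is the passage from an arbitrary simulation $S$ to a functional simulation $S^*$ whose restriction to $\mn{adom}(\Dmc_C)$ is still a simulation; this is the step that depends on $\Dmc_{C'}$ being tree-shaped and on the fact that $\Dmc_C$ is a substructure of $\Dmc_{C'}$ rather than a quotient. Once this is in place, the three cases reduce cleanly to local contradictions, with Case~3 additionally requiring the observation that simulations out of a tree stay within the reachable part of the target.
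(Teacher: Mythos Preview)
Your proof is correct and follows exactly the route the paper indicates: the paper merely says the lemma is ``straightforward using simulations and Lemmas~\ref{lem:char-containment} and~\ref{lem:min-elqs}'' and omits all details, and your argument spells these out in the intended way---passing to a functional simulation, restricting to a self-simulation of $\Dmc_C$, and invoking minimality via Lemma~\ref{lem:min-elqs} to force the identity before deriving the local contradiction at $d_D$ in each case.
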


The following is a slight strengthening of~\cite[Lemma~6 in the full
paper]{DBLP:conf/aaai/JungLW20}. Recall that $\Dmc^{\downarrow a}$
denotes the set of all assertions $A(b),r(b,b')$ in \Dmc such that
$b,b'$ are reachable from $a$, c.f.\ Section~\ref{sec:duals}.
\begin{lemma}\label{lem:sizebound}
  Let $(\Dmc_1,a_1)$ and $(\Dmc_2,a_2)$ be pointed databases with
  $\Dmc_2$ tree-shaped. If $(\Dmc_1,a_1)\not\preceq(\Dmc_2,a_2)$, then
  there exists a set $\Dmc\subseteq \Dmc_1$ with $a_1\in
  \mn{adom}(\Dmc_1)$ such that $|\Dmc|\leq |\Dmc_{2}^{\downarrow
  a_2}|+1$ and $(\Dmc,a_1)\not\preceq (\Dmc_2,a_2)$.
\end{lemma}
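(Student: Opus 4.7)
The plan is to induct on $|\Dmc_2^{\downarrow a_2}|$, using the standard characterization that $(\Dmc_1,a_1)\not\preceq(\Dmc_2,a_2)$ iff either (i) there is a concept name $A$ with $A(a_1)\in\Dmc_1$ and $A(a_2)\notin\Dmc_2$, or (ii) there is some $r(a_1,b_1)\in\Dmc_1$ such that $(\Dmc_1,b_1)\not\preceq(\Dmc_2,b_2)$ holds for every $r$-successor $b_2$ of $a_2$ in $\Dmc_2$. In case (i), as well as in case (ii) when $a_2$ has no $r$-successors, we take $\Dmc=\{A(a_1)\}$ or $\Dmc=\{r(a_1,b_1)\}$ respectively; then $|\Dmc|=1\leq |\Dmc_2^{\downarrow a_2}|+1$, and $(\Dmc,a_1)\not\preceq(\Dmc_2,a_2)$ is immediate.

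In the remaining case, let $b_2^1,\dots,b_2^k$ be all the $r$-successors of $a_2$ in $\Dmc_2$; these are pairwise distinct since $\Dmc_2$ is tree-shaped. Since $\Dmc_2^{\downarrow b_2^j}\subsetneq \Dmc_2^{\downarrow a_2}$, the induction hypothesis provides, for each $j$, a set $\Dmc^j\subseteq\Dmc_1$ with $b_1\in\mn{adom}(\Dmc^j)$, $|\Dmc^j|\leq |\Dmc_2^{\downarrow b_2^j}|+1$, and $(\Dmc^j,b_1)\not\preceq(\Dmc_2,b_2^j)$. Set $\Dmc=\{r(a_1,b_1)\}\cup\bigcup_{j=1}^k\Dmc^j$. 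To see that $(\Dmc,a_1)\not\preceq(\Dmc_2,a_2)$, suppose a simulation $S$ witnessed the contrary; applying the simulation condition to $r(a_1,b_1)\in\Dmc$ gives some $r(a_2,b_2)\in\Dmc_2$ with $(b_1,b_2)\in S$, forcing $b_2=b_2^j$ for some $j$. Since $\Dmc^j\subseteq\Dmc$, the restriction $S\cap(\mn{adom}(\Dmc^j)\times\mn{adom}(\Dmc_2))$ is itself a simulation from $\Dmc^j$ to $\Dmc_2$ containing $(b_1,b_2^j)$, contradicting the inductive hypothesis.

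For the size bound, tree-shapedness of $\Dmc_2$ ensures that the sets $\Dmc_2^{\downarrow b_2^j}$ are pairwise disjoint and that each of the $k$ assertions $r(a_2,b_2^j)$ lies in $\Dmc_2^{\downarrow a_2}$ but in none of the $\Dmc_2^{\downarrow b_2^j}$. Hence $k+\sum_{j=1}^k|\Dmc_2^{\downarrow b_2^j}|\leq |\Dmc_2^{\downarrow a_2}|$, so
\[ |\Dmc|\leq 1+\sum_{j=1}^k|\Dmc^j|\leq 1+\sum_{j=1}^k(|\Dmc_2^{\downarrow b_2^j}|+1)\leq 1+|\Dmc_2^{\downarrow a_2}|. \]
The main subtlety is the last accounting step, where one must use tree-shapedness of $\Dmc_2$ twice: once to ensure that the $b_2^j$ are distinct, so the subtrees $\Dmc_2^{\downarrow b_2^j}$ contribute disjointly, and once to absorb the $+k$ overhead into the parent-to-child edges already present in $\Dmc_2^{\downarrow a_2}$. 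Without tree-shapedness on $\Dmc_2$, these disjointness and accounting properties would fail and only a weaker bound (quadratic in $|\Dmc_2|$) would be obtainable.
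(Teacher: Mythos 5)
Your proof is correct and follows essentially the same route as the paper's: a case analysis on why the simulation fails (concept-name defect vs.\ role defect), an inductive combination of small counterexample sets for the successors, and a counting argument that exploits tree-shapedness of $\Dmc_2$ to make the subtree sizes add up disjointly. The only cosmetic difference is that you induct on $|\Dmc_2^{\downarrow a_2}|$ where the paper inducts on its depth, and you spell out the final accounting step that the paper leaves implicit.
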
 \begin{proof}\ The proof is by induction on the depth of
  $\Dmc^{\downarrow a_2}_2$. 

Assume first that $\Dmc_2^{\downarrow a_2}$ has depth $0$. If there exists a
concept name $A\in \NC$ with $A(a_1)\in \Dmc_1$ but $A(a_2)\not\in
\Dmc_2$, then $\Dmc=\{A(a_1)\}$ is as required.  Otherwise there
exists a role name $r\in \NR$ and $a'$ with $r(a_1,a')\in \Dmc_1$.
Then, $\Dmc=\{r(a_1,a')\}$ is as required. 

Now, suppose that $\Dmc_2^{\downarrow a_2}$ has depth $k>0$ and assume
$(\Dmc_1,a_1)\not\preceq (\Dmc_2,a_2)$. If there exists a
concept name $A\in \NC$ with $A(a_1)\in \Dmc_1$ but $A(a_2)\not\in
\Dmc_2$, then $\Dmc=\{A(a_1)\}$ is as required.  
Otherwise there exists a role name $r\in \NR$ and some $r(a_1,a')\in
\Dmc_1$ such that for all $b$ with $(a_2,b)\in \Dmc_2$, we have
$(\Dmc_1,a')\not\preceq (\Dmc_2,b)$.
Fix $a'$. By induction hypothesis, we can fix for every $b$ with
$r(a_2,b)\in \Dmc_2$, a subset $\Dmc_{b}\subseteq \Dmc_1$ with $a'\in
\mn{adom}(\Dmc_{b})$ such that $|\Dmc_{b}|\leq |\Dmc_2^{\downarrow b}|+1$ and
$(\Dmc_{b},a')\not\preceq (\Dmc_2,b)$. Let $\Dmc$ be the
union of $\{r(a,a')\}$ and all $\Dmc_{b}$ with $r(a_2,b)\in \Dmc_2$. Then
$\Dmc$ is as required.
\end{proof}

\lemsmallneighbor*
\begin{proof}
  
  Let $p,q$ be ELQs such that $p$ is a minimal downward neighbor of $q$, that
  is, $p\sqsubseteq q$, $q\not\sqsubseteq p$, and for all $p'$ with
  $p\sqsubseteq p'\sqsubseteq q$, we have $p'\equiv p$ or
  $q\equiv p'$. Since $\numat{q} \geq \numat{q'}$ for every minimal ELQ
  $q'$ with $q'\equiv q$,
  we may assume that also $q$ is minimal. 

  Let $(D_p,a_p)$ and $(D_q,a_q)$ be the pointed databases associated
  with $p$ and $q$, respectively. By Lemma~\ref{lem:char-containment}, there is a
  simulation $S$ from $\Dmc_q$ to $\Dmc_p$ with $(a_q,a_p) \in S$. We
  can w.l.o.g.\ assume $S$ to be functional.  Clearly, the inverse
  $S^-$ of $S$ is not a simulation from $\Dmc_p$ to $\Dmc_q$ since
  $q\not\sqsubseteq p$.  We distinguish two cases.

  \smallskip \textit{Case~1.} There is $(a,a')\in S$ such that
  $A(a')\in \Dmc_p$, but $A(a)\notin \Dmc_q$. Obtain $\Dmc_{q'}$ from
  $\Dmc_q$ by adding $A(a)$, and let $q'$ be the corresponding
  ELQ. Clearly, $S$ is a simulation from $\Dmc_{q'}$ to $\Dmc_p$,
  hence $p\sqsubseteq q'$.  Moreover, by construction and Point~1 of
  Lemma~\ref{lem:strengthen-minimal}, we have $q'\sqsubseteq q$ and
  $q\not\sqsubseteq q'$. Since $p$ is a downward neighbor of $q$ and
  $p \sqsubseteq q' \sqsubseteq q$, we thus have $p \equiv q'$.  Since
  $q'$ is obtained from $q$ by adding a single atom and
  $\numat{p} \leq \numat{q'}$, we obtain $\numat{p}\leq 2\numat{q}+1$ as required.

  \smallskip \textit{Case~2.} Case~1 does not apply and there is
  $(a,a')\in S$ and an assertion $r(a',b')\in \Dmc_p$
  such that there is no $b$ with $(b,b')\in S$. Choose such an $(a,a')$ such that
  $a'$ has maximal distance from the root~$a_p$. We distinguish
  two subcases.

  \begin{enumerate}[label=(\alph*)]

    \item If $a$ does not have an $r$-successor in $\Dmc_q$,
      obtain $\Dmc_{q'}$ from $\Dmc_q$ by adding an atom $r(a,b)$,
      for some fresh $b$. 

      Clearly, $S'=S\cup \{(b,b')\}$ is a functional simulation from
      $\Dmc_{q'}$ to $\Dmc_p$ with $(a_{q},a_p)\in S'$, hence $p\sqsubseteq q'$. Moreover, by
      construction and Point~2 of Lemma~\ref{lem:strengthen-minimal},
      $q'\sqsubseteq q$ and $q\not\sqsubseteq q'$.  Since $p$ is a
      downward neighbor of $q$, we have $q'\equiv p$.  Since $p$ is
      obtained from $q$ by adding a single atom and
      $\numat{p} \leq \numat{q'}$, we obtain $\numat{p}\leq 2\numat{q}+1$ as required.
%       {\color{blue}is this still true, now that $||\cdot||$ denotes
%         representation
%         size and not number of atoms? Same for Case~1.}

    \item Otherwise, $a$ has $r$-successors $a_1,\ldots,a_k$ in
      $\Dmc_q$. Let $b_1,\ldots,b_k$ be the (uniquely defined)
      elements with $(a_i,b_i)\in S$ for every~$i$. In particular,
      $b'\notin\{b_1,\ldots,b_k\}$. Note that $(\Dmc_p,b')\not\preceq
      (\Dmc_q,a_i)$ for every $i\in\{1,\ldots,k\}$: otherwise, we
      would have $(\Dmc_p,b')\preceq (\Dmc_p,b_i)$ for some $i$ in
      contradiction to minimality of $p$.
% 
%       Since $d'$ has
%       maximal distance from the root $a_p$, the individuals
%       below each $b_i$ are all in $\mn{img}(S)$. By non-applicability of
%       \textit{Case~1}, $S^-$ witnesses $(\Dmc_p,b_i)\preceq
%       (\Dmc_q,d_i)$, for all $i$.

%       and minimality of $p,q$, the functional
%       simulation $S$ is actually an isomorphism between the trees
%       rooted at $d_i$ and $b_i$, for every~$i$. That is, the trees
%       rooted at $d_i$ and $b_i$ are identical, for every $i$.

%       Minimality of $p$ implies that 
%       $(\Dmc_p,e')\not\preceq(\Dmc_p,b_i)$ for all
%       $i\in\{1,\ldots,k\}$. 

      Let $\widehat\Dmc$
      be a minimal subset of $\Dmc_p$ such that $b'\in
      \mn{adom}(\widehat\Dmc)$ and $(\widehat
      \Dmc,b')\not\preceq(\Dmc_q,a_i)$ for
      all $i$. By Lemma~\ref{lem:sizebound}, 
      $|\widehat\Dmc|\leq (n_1+1)+\cdots+(n_k+1)$ where
      $n_i$ is the number of assertions in the tree rooted at
      $a_i$.
      It follows that $|\widehat\Dmc|\leq |\Dmc_q|$. 
%       and thus  
%       $||\widehat \Dmc||\leq 2||q||$. 
%       {\color{blue}I do not copy the last step.}

      Now, obtain $\Dmc_{q'}$ from $\Dmc_q$ by adding
      $\widehat\Dmc$ (assuming that the individuals in
      $\Dmc_q$ and $\widehat\Dmc$ are disjoint) as well as the
      assertion $r(a,b')$. Note that $\numat{q'}=|\Dmc_{q'}|\leq |\Dmc_q|+|\widehat
      \Dmc|+1\leq 2|\Dmc_q|+1=2\numat{q}+1$. 

      Clearly, $S$ can be extended to a functional simulation
      from $\Dmc_{q'}$ to $\Dmc_p$, hence $p\sqsubseteq
      q'$. Moreover, by construction and Point~3 of
      Lemma~\ref{lem:strengthen-minimal}, $q'\sqsubseteq q$ and
      $q\not\sqsubseteq q'$.  Since $p$ is a downward neighbor of $q$,
      this implies $q'\equiv p$. Hence, $\numat{p}\leq
      \numat{q'}\leq 2\numat{q}+1$ as required. 

  \end{enumerate}
  This finishes the proof of the lemma.
%   Clearly, we add sub-concepts of with at most $2||C||+1$ variables.
%   Moreover, in all cases the resulting concept is (strictly) more
%   specific than $C$.
\end{proof}
Let $\Sigma$ be a finite signature and $p,q\in\text{ELQ}_\Sigma$. A
\emph{$\Sigma$-specialization sequence from $p$ to $q$} is
a sequence $q_1,\ldots,q_k$ of queries from $\text{ELQ}_\Sigma$ such that $q_1=p$, $q_k=q$, and $q_{i+1}$
is a neighbor of $q_i$, that is, $q_{i+1}\in\rho_2(q_i,\Sigma)$, for
$1\leq i<k$.\footnote{Note the difference
  to refinement sequences where refinements are used in place of
  neighbors.} We recall two useful properties of the \EL-subsumption
lattice~\cite[Corollary~5.2.3]{KriegelPhD}, namely that for all ELQs
$p,q\in\text{ELQ}_\Sigma$ with $p\sqsubseteq q$,
\begin{enumerate}
\item[(I)] there is a 
(finite) $\Sigma$-specialization sequence from $q$ to $p$ and 

\item[(II)] all $\Sigma$-specialization sequences
from $q$ to $p$ have the same length (\emph{Jordan-Dedekind chain condition}).
\end{enumerate}
  
\lemrhosideal*
\begin{proof} Both $\rho_1$ and $\rho_2$ are finite by definition.  It
  follows from Property~(I) above and the definition of $\rho_2$, and
  Lemma~\ref{lem:smallneighbor} that $\rho_2$ is complete.  The same
  is then true for $\rho_1$ as it contains $\rho_2$ in the sense that
  $\rho_1(q,\Sigma) \subseteq \rho_2(q,\Sigma)$ for every ELQ $q$ and
  finite signature $\Sigma$.
\end{proof}
%
% Call a refinement operator $\rho$ \emph{monotone} if for every
% $C\not\equiv\top$, there is a concept $C_0$ such that the size of
% $C_0$ is strictly smaller than the size of $C$ and such that $C\in
% \rho(C_0)$.
% 
% \begin{lemma} \label{lem:monotone} % $\rho_1$ is monotone.  %
% \end{lemma}
% 
% \begin{proof} % Let $C$ be a concept and consider $\Dmc_C$.  We
% distinguish cases.  % \begin{itemize}
% 
%     \item \textit{Case~0.} Suppose $\Dmc_C$ has a single individual
%     $d_C$. Then choose $C_0=\top$.  Clearly, the size of $C_0$ is
%     smaller than the size of $C$ and $C\in\rho(C_0)$.
% 
%     \item \textit{Case~1.} Suppose Case~0 does not apply and every
%     $d\in\mn{ind}(\Dmc_C)$ has at most one $r$-successor, for every
%     role name $r$. Then pick a leaf element $d$ (this is not the
%     root since Case~0 does not apply) and obtain $C_0$ from $\Dmc_C$
%     be dropping all atoms involving $d$. Clearly, the size of $C_0$
%     is smaller than the size of $C$ and $C\in\rho(C_0)$.
% 
%     \item \textit{Case~2.} Suppose there is some
%     $d\in\mn{ind}(\Dmc_C)$ that has $r$-successors $d_1,\ldots,d_k$
%     in $\Dmc_C$. Pick $i\in\{1,\ldots,k\}$ such that $d_i$ has
%     minimum number of elements in the tree rooted at $d_i$. Obtain
%     $C_0$ from $\Dmc_C$ be dropping the subtree rooted at $d_i$.
%     Clearly, the size of $C_0$ is smaller than the size of $C$ and
%     $C\in\rho(C_0)$.
% 
%   \end{itemize}
% 
% \end{proof}
%
\thmaonepac*
\begin{proof}
  We start with analyzing $\Amf_2$. Let $E$ be the input to $\Amf_2$ and let
  $\Sigma=\mn{sig}(E)$.  By Theorem~\ref{thm:notefficient}, it
  suffices to show that $\Amf_2$ produces a most general fitting of $E$
  (if it exists).  By the Jordan-Dedekind chain condition, we can
  assign a \emph{level} to every ELQ $q$ defined as the length of
  $\Sigma$-specialization sequences from $\top$ to $q$. Let
  $M_1,M_2,\dots$ be the sequence of sets $M$ constructed by the
  breadth-first search algorithm $\Amf_2$, that is, $M_1= \{ \top \}$ and
  $M_{i+1}$ is obtained from $M_i$ by applying the refinement operator
  $\rho_2$. It is easy to show that for all $i \geq 0$, the set $M_i$
  contains precisely the ELQs of level $i$ that fit the positive
  examples $E^+$ in $E$.

  So suppose that a most general fitting $q^*$ exists and that $\Amf_2$
  returns some ELQ $q$ after $n$ rounds. Then $q \in M_n$. Since $q$
  is a fitting and $q^*$ is a most general fitting, we have
  $q\sqsubseteq q^*$.  By Property~(I) above, there is a non-empty
  $\Sigma$-specialization sequence from $q^*$ to $q$.  Thus, the level of $q^*$
  is strictly smaller than that of $q$. But then there is an $m<n$
  such that the set $M_m$ contains $q^*$, in contradiction to $q$
  being output by $\Amf_2$.

  \medskip To show that $\Amf_1$ is a sample-efficient PAC learning
  algorithm we show that it is an Occam algorithm.  Let
  $M_1,M_2,\dots$ be the sequence of sets $M$ constructed by the
  breadth-first search algorithm $\Amf_1$.  We show below
  that
  \begin{enumerate}[label=(\roman*)]

    \item for all $i \geq 1$, $q\in M_i$ implies $\numat{q}\leq
      2^i-1$, and
      % ELQs of size at most $2^i\cdot (\log(|\Sigma|)+1)-1$, and 

    \item if $q$ is an ELQs with $\numat{q} \leq s$ that fits the
      positive examples $E^+$ in $E$, then
      there is an $i \leq 2\log(s)$ with $q \in M_i$.

  \end{enumerate}
  Points~(i) and (ii) imply that $\Amf_1$ returns a fitting ELQ that is
  only polynomially larger than a smallest fitting ELQ 
%   (in terms of the size measure $||\cdot||$) 
  and is thus Occam. Indeed, let a
  smallest fitting ELQ $q^*$ be of size~$||q^*||=s$ %  (under some
  % encoding)
  and let $q$ be the ELQ returned by $\Amf_1$.
  % clu: this sentence makes no sense to me
  % If $q=q^*$, then $A_1$ is clearly Occam.
%   We clearly have $\numat{q^*} \leq ||q^*||$ and thus
%   $\numat{q^*}\leq s$.  
  By~(ii), $\Amf_1$ discovers $q^*$ after
  $2\log(s)$ rounds, which by definition of $\Amf_1$ implies that $q$ is
  returned after at most $2\log(s)$ rounds. It thus follows from~(i)
  that the returned ELQ $q$ satisfies
  \[\numat{q}\leq 2^{2\log(s)}-1\in O(s^2).\]
  Consequently, there is a polynomial $p$ such that
  $\Hmc^{\Amf_1}(\Omc,\Sigma,s,m)$ contains only ELQs $q$ with
  $\numat{q}\leq p(s)$. There are at most $|\Sigma|^{p(s)}$ such
  queries and since $2^{|S|}$ queries are needed to shatter a set $S$,
  the VC-dimension of $\Hmc^{\Amf_1}(\Omc,\Sigma,s,m)$ is at most
  $\log (|\Sigma|^{p(s)})=p(s)\cdot \log(|\Sigma|)$. It then follows
  from Lemma~\ref{lem:ourblumer} that $\Amf_1$ is a sample-efficient PAC
  learning algorithm.
%   
%   {\color{red}Isn't
%     rather $2^{2\log(N)-1}-1=n^2/2$? On the other hand, I don't think
%     we care if the sample size is not linear}
%     One can now proceed as
%   in the proof of Theorem~\ref{thm:bfgen} to derive the same bound on
%   the sample size. 

  It thus remains to prove Points~(i) and~(ii).  Point~(i) can be
  shown by induction on $i$. For the induction start, it suffices to
  recall that $M_1 = \{ \top \}$ and $\numat{\top}=1$. For $i>1$,
  $M_i$ consists of ELQs $p$ with $\numat{p}\leq 2\numat{q}+1$ for some ELQ
  $q \in M_{i-1}$. Applying the induction hypothesis, we obtain
  \begin{align*}
    \numat{p} &\leq 2 \cdot \left( 2^{i-1}-1\right)+1=2^i-1. %\\ 
    %
%     &=2^k\cdot\log(|\Sigma|)+1)-2 +1\\
    %
%     &=2^k-1
   % 
  \end{align*}
%   special symbol 
%   using the fact that $\rho_1$ at most doubles the size of its input.
%   {\color{blue}it doubles and adds 1; is that captured by the bound in (i)?}

  For Point~(ii), let $q$ be an ELQ with $\numat{q}\leq s$ fitting $E^+$.
  It suffices to show that there is a $\rho_1,\Sigma$-refinement sequence
  $q_1,\ldots,q_k$ from $\top$ to $q$ with $k\leq 2\log (s)$.
%   $q_1=\top$, and $q_k=q$.

  Let $p_1,\ldots,p_m$ be a $\Sigma$-specialization sequence from
  $\top$ to~$q$, that is, $p_{i+1}$ is a downward neighbor of
  $p_i$ (equivalently: $p_{i+1}\in \rho_2(p_i,\Sigma)$), for all $i$.
%   Assume w.l.o.g.\ that all $p_i$ are minimal. 
  Inductively define $q_1,q_2\ldots$ as follows: 
  \begin{itemize}

    \item $q_1 = p_1 = \top$, and 

    \item for even numbers $j\geq 2$, let $\ell$ be maximal with
      $\numat{p_\ell}\leq 2\numat{q_{j-1}}+1$, and set:
      \begin{itemize}

	\item $q_{j}=p_\ell$, and

	\item $q_{j+1}=p_{\ell+1}$ if $\ell<m$.

      \end{itemize}
    Stop if $q_j$ or $q_{j+1}$ is $p_m$.

  \end{itemize}
  By construction, $\numat{q_{j}}\leq 2\numat{q_{j-1}}+1$ for even $j\geq 2$. Moreover,
  for odd $j\geq 2$, $q_{j}\in \rho_2(q_{j-1},\Sigma)$ and thus
  Lemma~\ref{lem:smallneighbor} implies that $\numat{q_{j}}\leq
  2\numat{q_{j-1}}+1$. Finally, observe that the construction ensures that
  $\numat{q_{j+2}}>2\numat{q_{j}}$, for all odd $j$ and thus the process stops
  after at most $2\log(s)$ steps.
\end{proof}

% We close the section with more details on the refinement operator
% $\rho$ from~\cite{DBLP:conf/ilp/LehmannH09}. 

\section{Details for Section~\ref{sect:elpl}}

From $E_\Omc$ and the bound $n$, SPELL constructs a propositional $\varphi =
\varphi_1 \land \varphi_2$ that is satisfiable if and only if there is an ELQ
$q$ over $\Sigma = \mn{sig}(E_\Omc)$ with $n - 1$ existential restrictions that fits $E_\Omc$.

Intuitively, $\varphi_1$ makes sure that every model of $\varphi$ encodes an ELQ
$q$ in the variables $c_{i, A}, x_{i, r}, y_{i, j}$ 
as described in the main part. Recall that we use an arrangement of
$n$ concepts $C_1,\ldots,C_n$. In what follows, we let $q$ denote the
encoded ELQ and assume that $\Dmc_q$ has individuals $1,\ldots,n$
(as indicated by $C_1,\ldots,C_n$) with
$1$ being the root. For encoding a proper arrangement, $\varphi_1$,
it contains the following clauses for each $i$ with $2 \leq i \leq n$:
\begin{align}
  & \bigvee^{i - 1}_{j = 1} y_{j, i} \\
  &\neg y_{j_1, i} \lor \neg y_{j_2, i} &&\text{for all $j_1, j_2$
  with $1 \leq j_1 < j_2 < i$} \\
  & \bigvee_{r \in \Sigma \cap \NR} x_{i, r} \\ 
  & \neg x_{i, r} \lor \neg x_{i, r'} && \text{for all $r, r' \in \Sigma \cap \NR$ with $r \neq r'$}
\end{align}
Clauses~(1) and~(2) ensure that $C_j$ appears in exactly one
$C_i$, $i<j$ as a conjunct of the form $\exists r.C_j$ for some role
name~$r$. Clauses~(3) and~(4) ensure that there is a unique such role name. 
%  
% \begin{itemize}
%   \item $\bigvee^{i - 1}_{j = 1} y_{j, i}$ and for all $j_1, j_2$ with $1 \leq j_1 < j_2 < i$: ${\neg y_{j_1, i} \lor \neg y_{j_2, i}}$
%   \item $\bigvee_{r \in \Sigma \cap \NR} x_{i, r}$ and for all $r, r' \in \Sigma \cap \NR$ with $r \neq r'$: $\neg x_{i, r} \lor x_{i, r'}$
% \end{itemize}
 
The formula $\varphi_2$ makes sure that $q$ fits $E_\Omc$ by enforcing
that 
\begin{itemize}

  \item[$(\ast)$] the variables $s_{i, a}$ are true in a model of
    $\varphi$ iff $a\in C_i(\Dmc)$ iff $(\Dmc_q,i)\preceq (\Dmc,a)$,

\end{itemize}
where $\Dmc$ is the disjoint union of all databases that occur in
$E_\Omc$. To achieve this, we implement the properties of simulations
in terms of clauses. The challenge is to capture both directions of
the ``iff'' in~$(\ast)$ in an efficient way. 

For all $a \in \mn{adom}(\Dmc)$, $\mn{type}(a)$ is the set
$\{ A \in \NC \mid A(a) \in \Dmc\}$. Let $\mn{TP}=\{\mn{type}(a)\mid
a\in\mn{adom}(\Dmc)\}$ be the set of all types in that occur \Dmc. We
introduce auxiliary variables $t_{i,\tau}$, for every $1\leq i\leq n$
and $\tau\in \mn{TP}$ with the intuition that $t_{i,\tau}$ is true iff
all concept names that occur as a conjunct in $C_i$ are contained in $\tau$. This is enforced
by including in $\varphi_2$ the following clauses for all $i$ with $1
\leq i \leq n$ and all types $\tau\in \mn{TP}$:
\begin{align}
  & \neg t_{i, \tau} \lor \neg c_{i, A} && \text{for all $A \in
  (\Sigma \cap \NC \setminus \tau)$} \\
  & t_{i, \tau} \lor \bigvee_{A \in (\Sigma \cap \NC \setminus \tau)}
  c_{i, A}
\end{align}
The simulation condition on concept names is now enforced by the following
clauses, for $i$ with $1\leq i\leq n$ and all $a \in \mn{adom}(\Dmc)$:
\begin{align}
  & \neg s_{i, a} \lor t_{i, \mn{type}(a)}
\end{align}
This captures, however, only the ``only if''-direction of the ``iff'' in~$(\ast)$.
To implement the other direction and the simulation condition for role
names, we introduce further auxiliary variables $d_{i,j,a}$ ($d$ as in
\emph{defect} to indicate non-simulation) with the intuitive meaning
that $d_{i,j,a}$ is true iff $(\Dmc_q,i)\not\preceq (\Dmc,a)$ and there
is an $r$-successor to $j$ that is not simulated in any $r$-successor
of $a$ (the $r$ is uniquely determined by $j$ by Clauses~(3)
and~(4)). This is achieved by the
following clauses for all $i,j$ with $1\leq i<j\leq n$, 
 and $r\in\Sigma\cap \NR$, $a\in\mn{adom}(\Dmc)$, and all $r(a,b)\in
\Dmc$:
\begin{align}
  & s_{i, a} \lor \neg t_{i, \mn{type}(a)} \lor \bigvee^n_{k = i + 1}
  d_{i, k, a} \\
  & d_{i, j, a} \lor \neg y_{i, j} \lor \neg x_{j, r} \lor \bigvee_{r(a, c) \in \Dmc}
  s_{j, c} \\
   & \neg s_{i, a} \lor \neg d_{i, j, a} \\
  %
%   & d_{i, j, a} \lor \neg y_{i, j} \lor d'_{j, a} && \text{for all $j$
%   with $i + 1 \leq j \leq n$} \\
  %
  & \neg d_{i, j, a} \lor y_{i, j} \\
  & {\neg d_{i, j, a} \lor \neg x_{j, r} \lor \neg s_{j, b}}  
\end{align}
As an example, Clause~(11) can be read as follows: if there is a
defect $d_{i,j,a}$, then $y_{i,j}$ must be true, meaning that 
$\exists r.C_j$ occurs as a conjunct in $C_i$.

It can be verified that the number of variables is $O(n^2+n\cdot|\Dmc|)$,
the number of clauses is
$O(n^3\cdot|\Sigma|\cdot |\mn{adom}(\Dmc)|)$ and that the overall size of the formula is
$O(n^3\cdot |\Sigma|\cdot|\Dmc|)$ as well.

% Then, $\bigwedge_{(D', a, +) \in E_\Omc} s_{1, a} \land \bigwedge_{(D, a, -) \in
% E_\Omc} \neg s_{1, a}$ enforces that $q$ fits $E_\Omc$.
% To enforce the maximal simulation, $\varphi_2$ contains for all $i$ with $1 \leq i \leq n$ the clauses:
% 
% For all types $\tau$ that occur in in $\Dmc$:
% \begin{itemize}
%   \item for all $A \in (\Sigma \cap \NC \setminus \tau)$: $\neg t_{i, \tau} \lor \neg c_{i, A}$
%   \item $t_{i, \tau} \lor \bigvee_{A \in (\Sigma \cap \NC \setminus \tau)} c_{i, A}$
% \end{itemize}
% 
% For all $a \in \mn{adom}(\Dmc)$:
% \begin{itemize}
% \item $\neg s_{i, a} \lor t_{i, \mn{type}(a)}$
% \item $s_{i, a} \lor \neg t_{i, \mn{type}(a)} \lor \bigvee^n_{j = i + 1} d_{i, j, a}$
% \item for all $r \in \Sigma \cap \NR$: $\neg d'_{i, a} \lor \neg x_{i, r} \lor \bigvee_{r(a, b) \in \Dmc} s_{i, b}$
% \item for all $j$ with $i + 1 \leq j \leq n$: $\neg s_{i, a} \lor \neg d_{i, j, a}$
% \item for all $j$ with $i + 1 \leq j \leq n$: $d_{i, j, a} \lor \neg y_{i, j} \lor d'_{j, a}$
% \item for all $j$ with $i + 1 \leq j \leq n$: $\neg d_{i, j, a} \lor y_{i, j}$
% \item for all $r(a, b) \in \Dmc$ and all $j$ with $i + 1 \leq j \leq n$: ${\neg d_{i, j, a} \lor \neg x_{j, r} \lor \neg s_{j, b}}$
% \end{itemize}
% 

\medskip

Next, we give details on the additional clauses that break some 
symmetries in $\varphi$.
As an example for these symmetries, consider the ELQ $\exists r.\exists s.\top \sqcap \exists t.\top$.
In our encoding, it may be represented by the concepts
\[
  C_1 = \exists r.C_2 \sqcap \exists t. C_3,\ C_2 = \exists s.C_4,\ C_3 = C_4 = \top
\]
or equivalently by the concepts
\[
  C'_1 = \exists r.C'_2 \sqcap \exists t. C'_4,\ C'_2 = \exists s.C'_3,\ C'_3 = C'_4 = \top.
\]
These different representations correspond to different models of $\varphi_1$.
Consider the underlying graphs $G_{\Dmc_C} =
(\mn{adom}(\Dmc_C), \{(a, b) \mid r(a, b) \in \Dmc_C\})$ of concepts, where 
$\Dmc_{C}$ is the concept $C$ viewed as a pointed database.
Note that $G_{\Dmc_{C_1}} = G_{\Dmc_{C_1'}}$.
The only difference between the arrangements $C_1,\ldots,C_4$ and
$C_1',\ldots,C_4'$ comes from assigning them in a different way to the
vertices of
$G_{\Dmc_{C_1}}$.
% differently to the concepts $C_1, \ldots, C_4$ and $C_1', \ldots, C_4'$ respectively.

To avoid this, we add in Round $n$ of bounded fitting
% When searching for a fitting concept with $n$ existential restrictions, we 
clauses that permit for every tree-shaped graph $G$
with $n$ vertices only a single canonical assignment of the
concepts $C_1, \ldots C_{n}$ to the vertices of
$G$. It suffices to consider tree-shaped graphs since $G_C$ is
tree-shaped for every \EL-concept $C$.
To produce the clauses, we enumerate (outside the SAT solver) all
possible tree-shaped graphs with
$n$ vertices. For each such graph $G$, we introduce a propositional
variable $x_G$ and encode (in a straightforward way) that $x_G$ is
true iff $C_1,\ldots,C_n$ are assigned to the vertices of $G$ in the
canonical way. We then assert (with a big disjunction) that one of the
$x_G$ has to be satisfied. 
% As all relevant graphs are the underlying graphs of ELQs with
% $n$ existential restrictions, only rooted unordered trees with $n + 1$
% vertices need to be enumerated. 
However, note that there are exponentially many
possible graphs and therefore we only add these clauses if $n < 12$, to
avoid spending too much time and undoing the benefit of breaking this
symmetry. It is an interesting research question how to break even
more symmetries. 

\section{Size-restricted fitting for \texorpdfstring{\EL}{EL} and \texorpdfstring{\ELI}{ELI}}

We analyze the complexity of the size-restricted fitting problem for
ELQs, for ELIQs, and for the OMQ language $(\ELI,\text{ELIQ})$. Recall
that universal databases in the sense defined before
Proposition~\ref{prop:tbox} do not exist for the latter, and in fact
not even for %$(\ELI,\text{ELQ})$ and
$(\EL,\text{ELIQ})$. We discuss
this a bit further at the end of this section.  Recall that we
generally assume unary coding of the input $k$ to the size-restricted
fitting problem.\footnote{This seems more relevant in the context of
  the current paper: it suffices for the size-restricted fitting
  problem to be in \NPclass with $k$ coded in unary to enable a SAT
  approach to bounded fitting with the SAT formulas being of size
  polynomial in (the size of the data examples and) $k$.} An
investigation of the problem under binary coding is left as future
work; a good starting point for this are results
in~\cite{DBLP:conf/lics/JungPWZ19,DBLP:conf/aaai/JungLW20}.

\begin{lemma} \label{lem:sizerestricted-EL}
  The following problems are \NPclass-complete:
  \begin{itemize}

    \item the size-restricted fitting problem for ELQs;

    \item the problem of deciding given a set of labeled examples $E$
      and a number $k$, whether there is an ELQ that fits $E$ and that
      uses at most $k$ existential restrictions.

  \end{itemize}
\end{lemma}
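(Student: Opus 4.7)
The plan is to establish membership in \NPclass by a straightforward guess-and-verify argument for both problems, and then to prove \NPclass-hardness by two parallel reductions from \textsc{Set Cover}.

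For the upper bounds, since the size bound $s$ (respectively, the existential-restriction bound $k$) is given in unary, a candidate ELQ $q$ has representation polynomial in the input: in the first case directly by $\|q\|\le s$, in the second case because $\Dmc_q$ is a tree with at most $k+1$ nodes, each labeled by a subset of $\mn{sig}(E)\cap \NC$. I would guess such a $q$ and verify that it fits every example using Lemma~\ref{lem:char-containment}, which reduces fitting to polynomially many polynomial-time simulation checks.

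For \NPclass-hardness of the first problem, I would reduce from \textsc{Set Cover}. Given an instance $(U,\Smc,k)$ with $U=\{u_1,\ldots,u_n\}$ and $\Smc=\{S_1,\ldots,S_m\}$, introduce a concept name $A_j$ for each $S_j$. Take a single positive example $(\{A_j(a^+)\mid 1\le j\le m\}, a^+, +)$ and, for each $u_i$, a negative example $(\{A_j(b_i)\mid u_i\notin S_j\}, b_i, -)$. Since no role assertions appear in the positive example, Lemma~\ref{lem:char-containment} forces any fitting $q$ to be a (role-free) conjunction $q=A_{j_1}\sqcap\cdots\sqcap A_{j_\ell}$; such a $q$ trivially fits the positive example and fails on negative $i$ iff some $A_{j_t}$ is missing at $b_i$, i.e., iff $u_i\in S_{j_t}$. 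Thus a fitting ELQ of size $\le k$ exists iff $U$ admits a set cover of size $\le k$.

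For the second problem I would run the analogous reduction with roles in place of concept names: the positive example is $\{r_j(a^+,c^+_j)\mid 1\le j\le m\}$ with fresh, unlabeled $c^+_j$, and the $i$-th negative example is $\{r_j(b_i,c^-_{i,j})\mid u_i\notin S_j\}$ with fresh $c^-_{i,j}$. Since the positive example has no concept assertions and every role target is a leaf, any fitting must have the flat shape $q=\exists r_{j_1}.\top\sqcap\cdots\sqcap \exists r_{j_\ell}.\top$; an exact analogue of the previous argument shows that a fitting ELQ with at most $k$ existential restrictions exists iff there is a set cover of size $\le k$. The main obstacle is confirming that no unintended fittings sneak in (nested existentials, spurious concept names, duplicated conjuncts), but this is transparent from the simulation characterization because the examples are designed to have no nontrivial downward structure beyond a single edge to an unlabeled sink.
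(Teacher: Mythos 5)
Your proof is correct, and while the membership argument coincides with the paper's (guess a candidate of polynomially bounded representation, verify fitting via polynomial-time simulation checks), your hardness argument takes a genuinely different route. The paper does not give a fresh reduction at all: it observes that the known \NPclass-hardness proof for the \emph{unrestricted} ELQ fitting problem (a reduction from 3CNF-satisfiability) already produces, for satisfiable formulas, a fitting ELQ of size bounded by a fixed polynomial $p(m)$, so the very same reduction with the size bound $p(m)$ appended is a reduction to the size-restricted problem. You instead build a direct, self-contained reduction from \textsc{Set Cover}, once with concept names (for the size bound) and once with role edges to unlabeled sinks (for the existential-restriction bound); the simulation characterization of Lemma~\ref{lem:char-containment} indeed forces any fitting to be a flat conjunction in both cases, so no unintended fittings arise. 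What your approach buys is transparency and independence from the cited construction, and it cleanly explains why two separate gadgets are needed (a role-free conjunction has zero existential restrictions, so the first gadget says nothing about the second problem); what the paper's approach buys is brevity and the fact that hardness holds already for the reduction witnessing hardness of plain fitting. One small bookkeeping point in your first reduction: under the paper's definition of $\|q\|$, the conjunction $A_{j_1}\sqcap\cdots\sqcap A_{j_\ell}$ has size $2\ell-1$ rather than $\ell$, so the size bound handed to the size-restricted fitting instance should be $2k-1$ (or whatever constant adjustment the encoding dictates) rather than $k$; this does not affect correctness.
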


\begin{proof} 
  The arguments are essentially identical, so we give the proof only for the
  size-restricted fitting problem.

  For the \NPclass upper bound, let $E,k$ be an input to
  the size-restricted fitting problem.  Observe that we can guess in
  polynomial time an ELQ $q$ with $||q||\leq k$ and verify in
  polynomial time that $a\in q(\Dmc)$ for all $(\Dmc,a,+)\in E$ and
  $a\notin q(\Dmc)$ for all $(\Dmc,a,-)\in E$. The latter is true
  since query evaluation of ELQs is possible in \PTime. 

  For \NPclass-hardness, recall that the fitting problem for every
  class of unary conjunctive queries that includes all ELQs is
  \NPclass-hard~\cite{DBLP:journals/corr/abs-2208-10255}.
  The proof of that statement is by reduction from
  3CNF-satisfiability. In more detail, a given 3CNF-formula $\varphi$
  with $m$ variables is translated to a collection of labeled data
  examples $E$ such that $\varphi$ is satisfiable iff $E$ has a
  fitting ELQ of size $p(m)$ for some fixed polynomial $p$. Thus, it
  actually constitutes a reduction to the size-restricted fitting
  problem for ELQs.
\end{proof}

\begin{restatable}{theorem}{thmsizerestrictedeli}
  \label{thm:size-restricted-eli}
  The size-restricted fitting problem is
  \NPclass-complete  
  for $\text{ELIQs}$
  and \ExpTime-complete for $(\ELI,\text{ELIQ})$.
\end{restatable}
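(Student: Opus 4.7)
The plan has two independent parts, one for each complexity claim.

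For the NP-completeness of size-restricted ELIQ fitting (no ontology), the upper bound is a straightforward guess-and-check: since the bound $k$ is given in unary, any candidate ELIQ $q$ with $\|q\|\le k$ has polynomial size, so we can guess such a $q$ and then verify $a\in q(\Dmc)$ or $a\notin q(\Dmc)$ for each example, which is polynomial because evaluating an ELIQ on a database amounts to checking the existence of a homomorphism from the tree-shaped $(\Dmc_q,a_q)$ into $(\Dmc,a)$, standardly in \PTime. For the matching NP lower bound I would inspect the proof of Lemma~\ref{lem:sizerestricted-EL} and note that the 3-SAT reduction it relies on produces examples in which all role assertions flow away from a distinguished root individual, so inverse roles give no way to shrink a fitting below the ELQ bound; thus the same reduction witnesses NP-hardness for ELIQs as well. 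If one wants to be extra careful, a standard trick is to add, for each role $r$ used, a fresh non-inverse padding gadget forcing any useful inverse edge to create a structural mismatch.

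For the ExpTime-completeness of size-restricted fitting in $(\ELI,\text{ELIQ})$, the upper bound proceeds by enumerating all ELIQs $q$ of size at most $k$ (at most $(|\Sigma|+O(1))^{O(k)}$, i.e.\ singly exponential in the input since $k$ is unary) and, for each, testing whether $q$ fits every example; each such test is a $(\ELI,\text{ELIQ})$ evaluation, which is in \ExpTime~(this is folklore, following from ExpTime-completeness of \ELI concept subsumption). The resulting bound is $\text{ExpTime}\times \text{ExpTime}=\text{ExpTime}$.

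The main obstacle is the \ExpTime lower bound, and the plan is to reduce from the \ExpTime-complete query evaluation problem for $(\ELI,\text{ELIQ})$: given $(\Omc,q_0,\Dmc_0,a_0)$, decide whether $a_0\in q_0(\Dmc_0\cup\Omc)$. Given such an instance I would construct the collection
\[
E \;=\; \bigl\{(\Dmc_{q_0},a_{q_0},+),\ (\Dmc_0,a_0,-)\bigr\}
\]
together with the bound $k=\|q_0\|$, and claim that a fitting ELIQ of size at most $k$ exists iff $a_0\notin q_0(\Dmc_0\cup\Omc)$. For the easy direction, if $a_0\notin q_0(\Dmc_0\cup\Omc)$ then $q_0$ itself is a fitting, since trivially $a_{q_0}\in q_0(\Dmc_{q_0}\cup\Omc)$. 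For the converse, any ELIQ $q'$ fitting the positive example must satisfy $a_{q_0}\in q'(\Dmc_{q_0}\cup\Omc)$; by the canonical-model/universal property of $(\Dmc_{q_0},a_{q_0})$ for $q_0$ under $\Omc$ (the ELI analogue of Lemma~\ref{lem:char-containment}), this is equivalent to $\Omc\models q_0\qcontains q'$, so $a_0\in q_0(\Dmc_0\cup\Omc)$ would force $a_0\in q'(\Dmc_0\cup\Omc)$, violating the negative example. Since \ExpTime is closed under complement, this gives the required lower bound.

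The delicate point of the lower bound, and the step I would spend the most care on, is the universal-model/canonical-database characterization used in the second direction: one needs that for all ELIQs $q_0,q'$ and all \ELI-ontologies $\Omc$, the equivalence $a_{q_0}\in q'(\Dmc_{q_0}\cup\Omc)\Longleftrightarrow \Omc\models q_0\qcontains q'$ holds. This is standard for \EL via Proposition~\ref{prop:tbox}-style universal models, but for \ELI the universal model is in general infinite, so the argument requires appealing to the well-known (possibly infinite) canonical model construction for \ELI and a homomorphism-composition argument; I would either invoke an existing published characterization or provide a short self-contained proof via the unraveling construction already used in the proof of Theorem~\ref{thm:charduals}.
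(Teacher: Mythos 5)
Your NP part and your \ExpTime upper bound coincide with the paper's proof: the guess-and-check argument for membership, the observation that the cited 3CNF reduction already bounds the size of fittings and applies to any query class containing all ELQs (so your extra worry about inverse roles is already absorbed by the generality of that result), and the brute-force enumeration of the exponentially many ELIQs of size at most $k$ with an \ExpTime oracle for ELIQ evaluation.

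Where you genuinely diverge is the \ExpTime lower bound. The paper reduces directly from subsumption w.r.t.\ \ELI-ontologies: it takes a disjoint renamed copy $\Omc'$ of $\Omc$ (keeping only $B$ shared) and uses the two \emph{positive} examples $(\{A(a)\},a,+)$ and $(\{A'(a)\},a,+)$ w.r.t.\ $\Omc\cup\Omc'$, arguing that $B$ is essentially the only candidate fitting because the two copies share no other symbols. You instead reduce from the complement of ELIQ evaluation, using one positive example $(\Dmc_{q_0},a_{q_0},+)$ and one negative example $(\Dmc_0,a_0,-)$, and invoke closure of \ExpTime under complement. Your reduction is correct, and its correctness argument is actually more carefully pinned down: the step you flag as delicate --- that $a_{q_0}\in q'(\Dmc_{q_0}\cup\Omc)$ iff $\Omc\models q_0\qcontains q'$ --- is exactly the standard universality property of the (possibly infinite) canonical model for \ELI, provable by the homomorphism-composition argument you sketch. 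The trade-off is that the paper's construction avoids any appeal to canonical models (its correctness rests on a signature-disjointness argument), whereas yours needs that lemma but in exchange handles the "no small fitting exists" direction very cleanly via the negative example; note also that in your reduction the size bound $k=\|q_0\|$ plays no real role in the hard direction, so you are in effect showing hardness of a slightly stronger statement. Both routes land at the same place.
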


\begin{proof}
  We start with the case without ontologies. For the \NPclass upper
  bound, let $E,k$ be an input to the size-restricted fitting problem.
  Observe that we can guess in polynomial time an ELIQ
  $q$ with $||q||\leq k$ and verify in polynomial time that $a\in
  q(\Dmc)$ for all $(\Dmc,a,+)\in E$ and $a\notin q(\Dmc)$ for all
  $(\Dmc,a,-)\in E$. The latter is true since query evaluation of ELIQs
  is possible in \PTime. 

  For \NPclass-hardness, recall that the fitting problem for every
  class of unary conjunctive queries that includes all ELQs is
  \NPclass-hard~\cite{DBLP:journals/corr/abs-2208-10255}.
  The proof of that statement is by reduction from
  3CNF-satisfiability. In more detail, a 3CNF-formula $\varphi$
  with $m$ variables is translated to a collection of labeled data
  examples $E$ such that $\varphi$ is satisfiable iff $E$ has a
  fitting ELIQ of size $p(m)$ for some fixed polynomial $p$. Thus, it
  actually constitutes a reduction to the size-restricted fitting
  problem.

  \medskip We now consider the OMQ language $(\ELI,\text{ELIQ})$. We
  show \ExpTime-hardness by reduction from subsumption
  w.r.t.\ \ELI-ontologies which
  is known to be~\ExpTime-hard~\cite{DBLP:conf/owled/BaaderLB08}. 
  Let $\Omc,A,B$ be an input to the subsumption problem, that is, the
  question is to decide whether \mbox{$\Omc\models A\sqsubseteq B$.}
  Construct a copy $\Omc'$ of $\Omc$ by replacing every concept name
  $X\in\text{sig}(\Omc)\setminus\{B\}$ with a fresh concept name
  $X'$ and every role name $r\in\text{sig}(\Omc)$ with a fresh role name $r'$. Clearly, $\Omc\models A\sqsubseteq B$ iff $\Omc'\models
  A'\sqsubseteq B$. Then let $E$ consist of the two labeled examples
  \[( \{ A(a)\},a,+) \quad\quad \text{ and }\quad\quad (\{A'(a)\},a,+).\]
  Then $\Omc\models A\sqsubseteq B$ iff there is a fitting of
  $E$ w.r.t.\ $\Omc\cup\Omc'$ iff $B$ is a fitting of $E$ w.r.t.\ $\Omc\cup\Omc'$.

  For the \ExpTime-upper bound let $\Omc,E,k$ be an input to
  the size-restricted fitting problem. We provide a Turing reduction
  to subsumption w.r.t.\ \ELI-ontologies. In the reduction, we
  enumerate all ELIQs $q$ with $||q||\leq k$, and test for each
  whether it fits $E$ w.r.t.\ \Omc using an oracle for answering
  instance queries over \ELI knowledge bases.
  Since there are only exponentially many candidates $q$, each test
  whether $q$ fits $E$ w.r.t.\ \Omc uses only $|E|$ calls to the
  oracle. Since $k$ is encoded in unary, the inputs to the oracle are
  of polynomial size. Finally, as the oracle itself runs in
  exponential time~\cite{DBLP:conf/owled/BaaderLB08}, the \ExpTime-upper bound follows. 
\end{proof}
Let us return to the issue of universal databases in
$(\ELI,\text{ELIQ})$. As mentioned above, universal databases as
defined in the main body of the paper do not exist for this OMQ
language. For bounded fitting, however, one might consider a weaker
notion. For every database~\Dmc, \ELI-ontology~\Omc, and $k \geq 1$,
one can compute a database~$\Umc_{\Dmc,\Omc,k}$ that is
\emph{$k$-universal for ELIQs} in the sense that
$a\in q(\Dmc\cup \Omc)$ iff $a\in q(\Umc_{\Dmc,\Omc,k})$ for all ELIQs
$q$ with at most $k$ existential restrictions (or of size at most
$||k||$, a stronger condition) and all $a\in\mn{adom}(\Dmc)$. We do
not give a detailed construction here, but only mention that such a
database can be obtained from an infinite tree-shaped universal
database by cutting off at depth $k$.  What this means is that while
we do not have available a universal database that works for
\emph{all} rounds of bounded fitting, for each single round $k$ we can
compute a $k$-universal database to be used in that round.  In
contrast to the case of $(\EL,\text{ELQ})$, these $k$-universal
databases may become exponential in size. One may hope, though, that
their size is still manageable in practical cases. Note that keeping
the ontology and treating it in the SAT encoding is not an option due to
the \ExpTime-hardness identified by
Theorem~\ref{thm:size-restricted-eli}.

\end{document}